\theoremstyle{plain}
\newtheorem*{rep@theorem}{\rep@title}
\newcommand{\newreptheorem}[2]{%
  \newenvironment{rep#1}[1]{%
    \def\rep@title{#2 \ref{##1}}%
    \begin{rep@theorem}}%
    {\end{rep@theorem}}}
\newtheorem{thm}{Theorem}[section]
\newtheorem{cor}[thm]{Corollary}
\newtheorem{lem}[thm]{Lemma}
\newtheorem{rem}[thm]{Remark}
\newtheorem{theorem}{Theorem}[section]
\newtheorem{example}[theorem]{Example}
\newtheorem{lemma}[theorem]{Lemma}
\newtheorem{proposition}[thm]{Proposition}
\newtheorem{aspt}[theorem]{Assumption}
\algnewcommand{\Inputs}[1]{%
  \State \textbf{Inputs:}
  \Statex \hspace*{\algorithmicindent}\parbox[t]{.8\linewidth}{\raggedright #1}
}
\algnewcommand{\Initialize}[1]{%
  \State \textbf{Initialize:}
  \Statex \hspace*{\algorithmicindent}\parbox[t]{.8\linewidth}{\raggedright #1}
}
\algnewcommand{\Outputs}[1]{%
  \State \textbf{Outputs:}
  \Statex \hspace*{\algorithmicindent}\parbox[t]{.8\linewidth}{\raggedright #1}
}
\newcommand{\R}{{\mathbb R}}
\newcommand{\T}{{{T}}}
\newcommand{\dcon}{C_d}
\newcommand{\RNum}[1]{\uppercase\expandafter{\romannumeral #1\relax}}
\newcommand{\argmin}{\mathop{\mathrm{arg\,min}{}}}
\newcommand{\tr}{\text{tr}}
\DeclareFontFamily{U}{mathx}{\hyphenchar\font45}
\DeclareFontShape{U}{mathx}{m}{n}{
      <5> <6> <7> <8> <9> <10>
      <10.95> <12> <14.4> <17.28> <20.74> <24.88>
      mathx10
      }{}
\DeclareSymbolFont{mathx}{U}{mathx}{m}{n}
\DeclareMathAccent{\widecheck}{0}{mathx}{"71}
\DeclareMathAccent{\wideparen}{0}{mathx}{"75}
\def\argmin{\mathop{\rm argmin}}
\newcommand\tabcaption{\def\@captype{table}\caption}
\newcommand{\unit}{\mathds{1}}
\definecolor{DSgray}{cmyk}{0,1,0,0}
\newcommand{\E}{\mathbb{E}}
\newcommand{\Ltwo}[1]{\| #1 \|_2}
\newcommand{\opnorm}[1]{\left\| #1 \right\|}
\newcommand{\trnorm}[1]{\left|\tr( #1) \right|}
\newcommand{\norm}[1]{\left\| #1 \right\|}
\newcommand{\cN}{\mathcal{N}}
\begin{document}

\title{Statistical Inference for Model Parameters in Stochastic Gradient Descent}
\author{Xi Chen\footnote{Stern School of Business, New York University, Email: xichen@nyu.edu} ~ Jason D. Lee\footnote{Marshall School of Business, University of Southern California, Email: jasondlee88@gmail.com} ~  Xin T. Tong\footnote{Department of Mathematics, National University of Singapore, Email: mattxin@nus.edu.sg}  ~and~ Yichen Zhang\footnote{Stern School of Business, New York University, Email: yzhang@stern.nyu.edu}}
\date{}

\maketitle

\begin{abstract}\label{abstract}
The stochastic gradient descent (SGD) algorithm has been widely used in statistical estimation for large-scale data due to its computational and memory efficiency.
While most existing works focus on the convergence of the objective function or the error of the obtained solution, we investigate the problem of statistical inference of true model parameters based on SGD when the population loss function is strongly convex and satisfies certain smoothness conditions.
Our main contributions are two-fold. First, in the fixed dimension setup, we propose two consistent estimators of the asymptotic covariance of the average iterate from SGD: (1) a plug-in estimator, and (2) a batch-means estimator, which is computationally more efficient and only uses the iterates from SGD. Both proposed estimators allow us to construct asymptotically exact confidence intervals and hypothesis tests.
Second, for high-dimensional linear regression, using a variant of the SGD algorithm, we construct a debiased estimator of each regression coefficient that is asymptotically normal. This gives a one-pass algorithm for computing both the sparse regression coefficients and confidence intervals, which is computationally attractive and applicable to online data.
\end{abstract}

\section{Introduction}
Estimation of model parameters by minimizing an objective function is a fundamental idea in statistics. Let $x^* \in \mathbb{R}^d$ be the true $d$-dimensional model parameters.  
In common models, $x^*$ is the minimizer of a convex objective function $F(x): \R^d \rightarrow \R$, i.e.,
\begin{equation}
\label{eq:objective}
x^*=\argmin  \left(F(x): = \E_{\zeta \sim \Pi} f(x, \zeta) = \int f(x, \zeta) \mathrm{d} \Pi(\zeta)\right),
\end{equation}
where $\zeta$ denotes the random sample from a probability distribution $\Pi$ and $f(x, \zeta)$ is the loss function.

A widely used optimization method for minimizing $F(x)$ is the \emph{stochastic gradient descent (SGD)}, which has a long history in optimization (see, e.g., \cite{robbins1951,PJ92,nemirovski2009robust}). In particular, let $x_0$ denote any given starting point. SGD is an iterative algorithm, where the $i$-th iterate $x_i$ takes the following form,
\begin{align}
\label{eq:sgd_iter}
  x_i =x_{i-1} -\eta_i \nabla f(x_{i-1}, \zeta_i).
\end{align}
The step size $\eta_i$ is a decreasing sequence in $i$, $\zeta_i$ is the $i$-th sample randomly drawn from the distribution $\Pi$, and  $\nabla f(x_{i-1}, \zeta_i)$ denotes the gradient of $f(x, \zeta_i)$ with respect to $x$ at $x=x_{i-1}$. The algorithm outputs either the last iterate $x_n$, or the average iterate
$\bar{x}_n = \frac{1}{n} \sum_{i=1}^n x_i$
as the solution to the optimization problem in \eqref{eq:objective}. When $\bar{x}_n$ is adopted as the solution, the algorithm is referred to as the averaged SGD (ASGD), and such an averaging step is known as the Polyak-Ruppert averaging \cite{Ruppert88,PJ92}. SGD has many computational and storage advantages over traditional deterministic optimization methods. First, SGD only uses \emph{one pass} over the data and the per-iteration time complexity of SGD is $O(d)$, which is independent of the sample size. Second, there is no need for SGD to store the dataset, and thus SGD naturally fits in the \emph{online setting}, where each sample arrives  sequentially (e.g., search queries or transactional data).
Moreover, ASGD is known to achieve the optimal convergence rate in terms of $\E(F(\bar{x}_n)-F(x^*))$ with the rate of $O(1/n)$ \cite{Rakhlin:12} when $F(x)$ is smooth and strongly convex.
It has become the prevailing optimization method for many machine learning tasks \cite{Srebro:10}, e.g., training deep neural networks.

Based on the simple SGD template in \eqref{eq:sgd_iter}, there are a large number of variants developed in the optimization and statistical learning literature. Most existing works only focus on the convergence in terms of the objective function or the distance between the obtained solution and the true minimizer $x^*$ of \eqref{eq:objective}. However, the statistical inference (e.g., constructing confidence intervals) for each coordinate of $x^*$ based on SGD has largely remained unexplored. \label{page:infer} Inference is a core topic in statistics \label{inference_core}  and a confidence interval has been widely used to quantify the uncertainty in the estimation of model parameters. In this paper, we propose computationally efficient methods to conduct the inference for each coordinate of $x^*_j$ for $j=1,2,\dots,d$ based on SGD. With the developed techniques, one can test if $x_j^*=c$ for any number $c$, and tell a range of values that $x^*_j$ lies within it with a certain probability. These objectives cannot be achieved by deriving deviation inequalities or  generalization error bounds (see Section \ref{sec:relatedworks} for details).

The proposed methods are built  on a classical result of ASGD, which characterizes the limiting distribution of $\bar{x}_n$.  In particular, let $A=\nabla^2 F(x^*)$ be the Hessian matrix of $F(x)$ at $x=x^*$ and $S$ be the covariance matrix of $\nabla f(x^*, \zeta)$, i.e.,
\begin{equation}\label{eq:S_general}
S = \mathbb{E}\left([\nabla f(x^*,\zeta)][\nabla f(x^*,\zeta)]^\top  \right).
\end{equation}
Note that $ \mathbb{E}\nabla f(x^*,\zeta)=\nabla F(x^*)=0$, provided the interchangeability of derivative and expectation.  \cite{Ruppert88} and \cite{PJ92} showed that when $d$ is fixed and $F$ is strongly convex with a Lipschitz gradient, by choosing appropriately diminishing step sizes, $\sqrt{n}(\bar{x}_n- x^*)$ converges in distribution to a multivariate normal random vector, i.e.,
\begin{equation}\label{eq:asy_normal}
\sqrt{n}(\bar{x}_n- x^*) \Rightarrow  \mathcal{N}(0, A^{-1} S A^{-1}).
\end{equation}
However, this asymptotic normality result itself cannot be used to provide confidence intervals. To construct an asymptotically valid confidence interval (or equivalently, an asymptotically valid test that controls the type I error), we need to further construct a consistent estimator of the asymptotic covariance of $\sqrt{n}\bar{x}_n$, i.e.,  $A^{-1} S A^{-1} $. \label{page:asy}
The standard covariance estimator simply estimates $A$ and $S$ by their sample versions, and replaces the $x^*$ in $A$ and $S$ by $\bar{x}_{n}$. However, this standard estimator cannot be constructed in an online fashion. In other words, all the data is required to be stored to compute this estimator since $\bar{x}_n$ can only be known when the SGD procedure terminates. This requirement loses the advantage of SGD in terms of data storage.
\label{page:naive}

To address  this challenge, we propose two approaches to estimate $A^{-1}SA^{-1}$ without the need of storing the data. The first approach is the \emph{plug-in} estimator. In particular, we  propose a thresholding estimator $\widetilde{A}_n$ of $A$ based on the sample estimate $A_n=\frac{1}{n} \sum_{i=1}^n \nabla^2 f(x_{i-1}, \zeta_i)$. Note that this is not the standard sample estimate since each term $\nabla^2 f(x_{i-1},\zeta_i)$ is regarding different SGD iterates $x_{i-1}$ (in contrast to a single $\bar{x}_n$) and thus can be constructed online. This construction facilitates the online computation of $A_n$, which does not need to store each $x_i$ and $\zeta_i$. Together with the sample estimate $S_n$ of $S$, the asymptotic covariance $A^{-1} S A^{-1}$ is estimated by $\widetilde{A}_n^{-1} S_n \widetilde{A}_n^{-1}$, which is proven to be a consistent estimator (see Theorem \ref{thm:plugin}).

However, the plug-in estimator requires the computation of the Hessian matrix of the loss function $f$ and its inverse, which is usually not available for legacy codes where only the SGD iterates are available. Now a natural question arises: can we estimate the asymptotic covariance \emph{only using the iterates from SGD without requiring additional information}? We provide an affirmative answer to this question by proposing a computationally efficient \emph{batch-means} estimator. Basically, we split the sequence of SGD iterates $\{x_1, x_2, \ldots, x_n\}$ into $M+1$ batches with batch size $n_0,n_1,\dots,n_M$. The $0$-th batch is discarded since the iterates in that are far from the optimum. The batch-means estimator is a ``weighted" sample covariance matrix that treats each batch-means as a sample.

\label{para:mcmc} The idea of batch-means estimator can be traced to Markov Chain Monte Carlo (MCMC), where the batch-means method with equal batch size (see, e.g., \cite{Glynn90BM, GLYNN1991431, Damerdji:91, geyer1992, Fishman96MC, Jones:06, FlegalJo10}) is widely used for variance estimation in a time-homogeneous Markov chain. The SGD iterates in \eqref{eq:sgd_iter} indeed form a Markov chain, as $x_{i}$ only depends on $x_{i-1}$. However, since the step size sequence $\eta_i$ is a diminishing sequence,  it is a \emph{time-inhomogenous} Markov chain.
Moreover, the asymptotic behavior of SGD and MCMC are fundamentally different: while the former converges to the optimum, the latter travels ergodically inside the state space. As a consequence of these important differences,  previous literature on batch-means methods is not applicable to our analysis. To address this challenge, our new batch-means method constructs batches of \emph{increasing sizes}. The sizes of batches are chosen to ensure that the correlation decays appropriately among far-apart batches, so that far-apart batch-means can be roughly treated as independent. In Theorem \ref{thm:general}, we prove that the proposed batch-means method is a consistent estimator of the asymptotic covariance.  Further, we believe this new batch-means algorithm with increasing batch sizes is of independent interest since it can be used to estimate the covariance structure of other time-inhomogeneous Markov chains.

As both the plug-in and the batch-means estimator provide asymptotically exact confidence intervals, each of them has its own advantages:
\begin{enumerate}
  \item The plug-in estimator has a faster convergence rate than the batch-means estimator (see Theorem \ref{thm:plugin} and Corollary \ref{cor:batch}).
  \item The plug-in estimator requires the computation of the Hessian matrix of the loss function and its inverse, which can be expensive to obtain for many applications. The batch-means estimator does not require computing any of them. To establish the consistency result, the plug-in estimator requires an additional Lipschitz condition over the Hessian matrix of the loss function (see Assumption \ref{aspt:third}).
  \item The plug-in estimator directly computes the entire estimator $\widetilde{A}_n^{-1} S_n \widetilde{A}_n ^{-1}$ for the purpose of estimating diagonal elements of $A^{-1} S A^{-1}$. Furthermore, when $d$ is large, storing $\widetilde{A}_n$ and $S_n$ requires $O(d^2)$ bits, which is wasteful since only  estimates of the diagonal elements of $A^{-1}SA^{-1}$ are useful for the inference of each $x_j^*$ for $j=1,2,\dots,d$. Meanwhile, the batch-means estimator is able to merely compute and store diagonals.
\end{enumerate}
\label{para:comp}
Practitioners may decide to choose between the plug-in and batch-means estimators based on their tasks and computing resources. The plug-in estimator has a faster convergence rate, which leads to better performance in practice. However, in some cases when the computation and storage are limited, the batch-means estimator is able to provide an asymptotically exact confidence interval with  comparably good performance. Furthermore, the computation of the Hessian matrix in the plug-in estimator is an ``intrusive'' requirement for SGD \cite{Sul15}, i.e., it is not available for legacy codes where only the SGD iterates are computed. For example, if one has already obtained SGD iterates and wants to compute confidence intervals afterward, a non-instructive method like batch-means can be directly applied. Such a non-intrusive method that can operate with black-box SGD iterates is more desirable and welcomed by practitioners, as it only uses the existing SGD iterates without the need to change the original SGD code.

For the second part of our contribution, we further study the problem of confidence interval construction for $x^*$ in high-dimensional linear regression based on SGD, where the dimensionality $d$ can be much larger than the sample size $n$. 
In a high-dimensional setup, it is natural to solve an $\ell_1$-regularized problem,
$
  \min_{x} \; F(x)+ \lambda \|x\|_1,
$
where $F(x)$ is defined in \eqref{eq:objective}. \label{page:prox} A popular approach to solve it is the \emph{proximal stochastic gradient approach} (see, e.g., \cite{Lan12OSGD} and references therein). However, due to the proximal operator (i.e., the soft-thresholding operator for $\ell_1$-regularized problem), the distribution of the average iterate $\bar{x}_n$ no longer converges to a multivariate normal distribution. To address this challenge, we use the recently proposed RADAR algorithm \cite{agarwal2012stochastic}, which is a variant of SGD, together with the debiasing approach  \cite{Zhang14CI,vandegeer14asy,javanmard2014confidence}. The standard debiasing method relies on solving $d$ convex optimization problems (e.g., node-wise Lasso in \cite{vandegeer14asy}) to construct an approximation of the inverse of the design covariance matrix. \label{para:complexity} Each deterministic optimization problem requires a per-iteration complexity $O(nd)$, which is prohibitive when $n$ is large. In contrast, we adopt the stochastic RADAR algorithm to solve these optimization problems, where each problem only requires one pass of the data with the per-iteration complexity $O(d)$. Moreover, since the resulting approximate inverse covariance matrix from the stochastic RADAR is not an exact solution of the corresponding optimization problem, the analysis of \cite{vandegeer14asy}, which heavily relies on the KKT condition, is no longer applicable. We provide a new analysis to establish the asymptotic normality of the obtained estimator of $x^*$ from the stochastic optimization algorithm.

\subsection{Some related works on SGD}

\label{sec:relatedworks}
There is a large body of literature on stochastic gradient approaches and their applications to statistical learning problems (see, e.g., \cite{zhang2004solving, nesterov2008confidence, Xiao:10, Lan12OSGD, Bach:12, agarwal2012stochastic,Lin14SVRG} and references therein).  Most works on SGD focus on the convergence rate of the objective function instead of the asymptotic distribution of the obtained solution. Thus, we only review a few closely related works with results on distributions.

Back in 1960s,  \cite{fabian1968asymptotic} studied the distribution of SGD iterates. However, without averaging, the asymptotic variance is inflated and thus  the resulting statistical inference would have a reduced power even if the asymptotic is known. \cite{Ruppert88, PJ92, BM11} studied the averaged SGD  (ASGD) and established the asymptotic normality and efficiency of the estimators. However, these works do not discuss the estimation of the asymptotic covariance.

A few works in the SGD literature (e.g.,  \cite{nesterov2008confidence,nemirovski2009robust}) show large deviation results of $\Pr( \norm{\bar{x}_n - x^*}_2 >t) \le C(t)$ by combining the Markov inequality with the expected deviation of $\bar{x}_n$ to $x^*$. However, we note that large deviation results cannot be used to obtain asymptotically exact confidence intervals, which refer to the exact $1-q$ coverage as $n\rightarrow \infty$. That is, $\Pr(x^*\in\mathrm{CI}_q)\rightarrow 1-q$, where $\mathrm{CI}_q$ denotes the confidence interval. Deviation inequalities, which are unable to quantify the exact probability, fail to provide the exact $1-q$ coverage and will lead to wider confidence intervals. Moreover, note that the $\ell_2$ bounds in the SGD literature are  generally $O(\sigma \sqrt{\frac{d }{n}})$ (where $\sigma^2$ is the variance of the norm of the stochastic gradient) and do not imply a $\ell_\infty$ bound of size $O(\frac{\sigma}{\sqrt{n}})$, whereas a confidence interval for any single coordinate should  be $O( \frac{\sigma}{\sqrt{n}})$ (the $O(\cdot)$ notation here does not depend on $d$). Therefore, although $d$ is fixed, $\ell_2$-norm error bound results still lead to conservative confidence intervals. Instead, we will use the central limit theorem that shows $ \lim \Pr( |\bar{x}_{n,j} - x^* | < z_{q /2 } \sigma /\sqrt{n} ) \to 1-q$, where $z_{q/2}$ is the $(1-q/2)$-quantile of the standard normal distribution. This allows us to construct an asymptotically exact confidence interval.

We also note that \cite{toulis2014implicit} established the asymptotic normality for the \emph{averaged implicit SGD} procedure, which is an algorithm different from ASGD. Moreover, this paper does not discuss the estimation of the asymptotic covariance and thus their results cannot be directly used to obtain the confidence intervals.

\subsection{Notations and organization of the paper}
\label{sec:notation}
As a summary of notations, throughout the paper, we use $\|x\|_p$ to denote the vector $\ell_p$-norm of $x$, $\|x\|_0$ the number of non-zero entries in $x$, $\|X\|$ the matrix operator norm of $X$ and $\|X\|_{\infty}$ the element-wise $\ell_\infty$-norm of $X$ (i.e., $\|X\|_{\infty}=\max_{i,j} |X_{ij}|$). For a square matrix $X$, we denote its trace by $\mathrm{tr}(X)$.
For a positive semi-definite (PSD) matrix $A$, let $\lambda_{\max}(A)$ and $\lambda_{\min}(A)$  be its maximum and minimum eigenvalue. For a vector $a$ of length $d$ and any index subset $J \subseteq\{1 ,\ldots, d\}$, we denote by $a_J$ the sub-vector of $a$ with the elements indexed by $J$ and $a_{-J}$ the sub-vector of $a$ with the elements indexed by $\{1, \ldots, d \} \backslash J$. Similarly, for a $d_1 \times d_2$ matrix $X$ and two index subsets $R \subseteq \{1, \ldots, d_1\}$ and $J \subseteq \{1, \ldots, d_2\}$, we denote by $X_{R,J}$ the $|R| \times |J|$ sub-matrix of $X$ with elements in rows in $R$ and columns in $J$. When $R = \{1, \ldots, d_1\}$ or $J =\{1, \ldots, d_2\}$, we denote $X_{R,J}$ by $X_{\cdot,J}$ or $X_{R,\cdot}$, respectively. We use $I$ to denote the identity matrix. The function $\Phi(\cdot)$ denotes the CDF of the standard normal distribution.

For any
sequences $\{a_n\}$ and $\{b_n\}$ of positive numbers, we write $a_n \gtrsim b_n$ if $a_n \geq c b_n$ holds for all $n$ large enough and some constant $c>0$, $a_n \lesssim b_n$ if $b_n  \gtrsim a_n$ holds, and $a_n \asymp b_n$ if $a_n \gtrsim b_n$ and $a_n \lesssim b_n$.

The rest of the paper is organized as follows. In Section \ref{sec:setup_back}, we provide more background of SGD and detailed results from \cite{PJ92}.  In Section \ref{sec:setup_assump}, we provide the assumptions and some error bounds on SGD iterates.
In Section \ref{sec:cov}, we propose the plug-in estimator and batch-means estimator for estimating the asymptotic covariance of $\bar{x}_n$ from ASGD. In Section \ref{sec:high_dim}, we discuss how to conduct inference for high-dimensional linear regression. In Section \ref{sec:exp}, we demonstrate the proposed methods by simulated experiments. Further discussions appear in Section \ref{sec:discuss} and all proofs are given in the Appendix.

\section{Background}
\label{sec:setup_back}

In the classical work of \cite{PJ92}, the SGD method was introduced in a form equivalent with \eqref{eq:sgd_iter} to facilitate the analysis. In particular, the iteration is given by
\begin{equation}\label{eq:sgd_general}
x_n=x_{n-1}-\eta_n \nabla F (x_{n-1})+\eta_n\xi_n,
\end{equation}
where $\xi_n:=\nabla F(x_{n-1})-\nabla f(x_{n-1},\zeta_n)$. The formulation \eqref{eq:sgd_general} decomposes the descent into two parts: $\nabla F(x_{n-1})$ represents the direction of population gradient which is the major driving force behind the convergence of SGD, and $\xi_n$ is a martingale difference sequence under Assumption \ref{aspt:martingale} (see below). That is,
$
\mathbb{E}_{n-1}[\xi_n] = \nabla F(x_{n-1})-\mathbb{E}_{n-1}\nabla f(x_{n-1},\zeta_n)=0.
$
Here and in the sequel,  $\E_n(\cdot)$ denotes the conditional expectation $\E(\cdot | \mathcal{F}_n)$, where $\mathcal{F}_{n}$ is the $\sigma$-algebra generated by $\{\zeta_1, \ldots, \zeta_n\}$ ($\zeta_k$ is the $k$-th sample). Let $\Delta_{n}:=x_n-x^*$ be the error of the $n$-th iterate. It it noteworthy that by subtracting $x^*$ from both sides of \eqref{eq:sgd_general}, the recursion \eqref{eq:sgd_general} is equivalent to
\begin{equation}\label{eq:sgd_general_delta}
\Delta_n=\Delta_{n-1} -\eta_n \nabla F(x_{n-1}) +\eta_n \xi_n,
\end{equation}
which will be extensively used throughout the paper.

Given the SGD recursion in the form of \eqref{eq:sgd_general_delta} and under suitable assumptions (see Section \ref{sec:setup_assump} below), Theorem 2 of \cite{PJ92} shows that when the step size sequence $\eta_i = \eta i^{-\alpha}, i=1,2,\dots, n$ with $\alpha \in (1/2,1)$, we have
\begin{equation}\label{eq:sandwich}
\sqrt{n} \cdot \overline{\Delta}_n\Rightarrow \mathcal{N}(0, A^{-1}S A^{-1}),\quad \text{if}\quad \alpha\in (\tfrac12, 1)
\end{equation}
where $\overline{\Delta}_n=\frac{1}{n} \sum_{i=1}^n \Delta_i= \bar{x}_n - x^*$. Based on this limiting distribution result, we only need  to estimate the asymptotic covariance  matrix $A^{-1}S A^{-1}$. Then we can form the confidence interval $\bar{x}_{n,j} \pm z_{q/2} \hat \sigma_{jj}/\sqrt{n}$, where $ \hat \sigma_{jj}$ is a consistent estimator of $(A^{-1}S A^{-1})_{jj}^{1/2}$ and $z_{q/2}$ is the $(1-q/2)$-quantile of the standard normal distribution (i.e., $z_{q/2}=\Phi^{-1}(1-q/2)$ and $\Phi(\cdot)$ is the CDF of the standard normal distribution). Therefore, the main purpose of the paper is to provide consistent estimators of the asymptotic covariance  matrix.  

\begin{rem}\label{rem:cramer}
In the model well-specified case, $\overline{x}_n$ is an asymptotically efficient estimator of the true model parameter $x^*$ according to \eqref{eq:sandwich}. In particular, suppose $\zeta$ comes from the probability distribution $\Pi$ with density $p_{x^*}(\zeta)$  parameterized by $x^*$. If the loss function $f(x,\zeta)=-\log p_x(\zeta)$ is the negative log-likelihood, under certain regularity conditions, one can show that
\begin{align*}
A=\nabla^2\E\left[-\log p_{x^*}(\zeta)\right]=\E\left(-\nabla\log p_{x^*}(\zeta)\right)\left(-\nabla\log p_{x^*}(\zeta)\right)^\top =S=I(x^*)
\end{align*}
Here $I=I(x^*)$ is the Fisher information matrix. Therefore, the limiting covariance matrix $A^{-1}SA^{-1}=I^{-1}$ achieves the Cram\'er-Rao lower bound, which indicates that $\bar{x}_n$ is asymptotically efficient.
It is worth noting that the asymptotic normality result \eqref{eq:sandwich} does not require that the model is well-specified. In a model mis-specified case, the asymptotic distribution of $\bar{x}_n$ is centered at $x^*$, where $x^*$ is the unique minimizer of $F(x)$ and the asymptotic covariance $A^{-1} S A^{-1}$ is of the so-called ``sandwich covariance'' form (e.g., see \cite{buja2013conspiracy}).
\end{rem}

To illustrate this SGD recursion in \eqref{eq:sgd_general_delta} and the form of $A$ and $S$, we consider the following two motivating examples.

\label{sec:2example}
\begin{example}[Linear Regression]
\label{exp:linear}
Under the classical linear regression setup, let the $n$-th sample be $\zeta_n = (a_n, b_n)$, where the input $a_n \in \R^d$ is a sequence of random vectors independently drawn from the same multivariate distribution and the response $b_n \in \R$ follows a linear model,
$
  b_n = a_n^\top  x^* + \varepsilon_n.
$
Here $x^* \in \R^d$ represents the true parameters of the linear model, and $\{\varepsilon_n\}$ are independently and identically distributed (\emph{i.i.d.}) centered random variables, which are uncorrelated with $a_n$. For simplicity, we assume $a_n$ and $\varepsilon_n$ have all moments being finite.
Given $\zeta_n = (a_n, b_n)$, the loss function at $x$ is a quadratic one:
\[
f(x, \zeta_n)=\frac{1}{2} (a_n^\top  x -b_n)^2.
\]
and the true parameters $x^*=\argmin_x \left(F(x):=\E f(x, \zeta)\right)$.
Given the loss function, the SGD iterates in \eqref{eq:sgd_iter} become,
$
x_{n} = x_{n-1} - \eta_n a_n(a^\top _nx_{n-1} - b_n).
$
This can also be written in the form of \eqref{eq:sgd_general} as
\[
x_n=x_{n-1}-\eta_n A\Delta_{n-1}+\eta_n\xi_n,\quad \xi_n:=(A-a_na_n^\top )\Delta_{n-1}+a_n\varepsilon_n,
\]
where $A=\mathbb{E}a_na_n^\top $  is the population gram matrix of $a_n$. It is easy to find that
\[
F(x)=\frac{1}{2} (x-x^*)^\top A(x-x^*)+\mathbb{E}\varepsilon^2,
\]
which implies that $\nabla F(x)=A(x-x^*)$ and $\nabla^2 F(x)=A$ for all $x$. As for matrix $S$, it is given by
$
S:=\mathbb{E}\left([\nabla f(x^*,\zeta)][\nabla f(x^*,\zeta)]^\top  \right)=\mathbb{E}\varepsilon^2_n a_na_n^\top .
$
\end{example}

\begin{example}[Logistic Regression]
\label{exp:logistic}
One of the most popular applications for general loss in statistics is the logistic regression for binary classification problems. In particular, the logistic model assumes that the binary response $b_n \in \{-1,1\}$ is generated by the following probabilistic model,
\[
 \Pr(b_n | a_n) =\frac{1}{1+\exp\left(-b_n \langle a_n, x^*\rangle\right)},
\]
where $a_n$ is an \emph{i.i.d.} sequence. The population objective function is given by
$
 F(x)=\E f(x,\zeta_n)= \E \log (1+ \exp\left(-b_n \langle a_n, x\rangle\right)).
$
Let $\varphi(x):= \frac{1}{1+\exp(-x)}$ denote the sigmoid function,  we have
$
 \nabla f(x,\zeta_n) =  - \varphi(-b_n \langle a_n, x \rangle) b_n a_n.
$
Moreover, we have the formulation of matrix $A$ and $S$ as
\begin{equation}\label{eq:logA}
A=S=\mathbb{E}\frac{a_n a_n^\top }{[1+\exp(\langle a_n, x^* \rangle)][1+\exp(-\langle a_n, x^* \rangle)]}.
\end{equation}
\end{example}

\section{Assumptions and Error Bounds}
\label{sec:setup_assump}
In this section, we provide the assumptions used in the fixed-dimensional case and then provide some  useful error bounds on $\Delta_n$.
We first make the following standard assumption on the population loss function $F(x)$.
\begin{aspt}[Strong convexity and Lipschitz continuity of the gradient]
\label{aspt:convexity}
Assume that the objective function $F(x)$ is continuously differentiable and  strongly convex with parameter $\mu>0$, that is, for any $x_1$ and $x_2$,
\[
F(x_2) \geq  F(x_1)+\langle\nabla F(x_1), x_2-x_1 \rangle+\frac{\mu}{2}\Ltwo{x_1-x_2}^2.
\]
Further, assume that $\nabla^2 F(x^*)$ exists, and $\nabla F(x)$ is Lipschitz continuous with a constant $L_F$, i.e., for any $x_1$ and $x_2$,
$
\Ltwo{\nabla F(x_1)-\nabla F(x_2)} \leq L_F \Ltwo{x_1-x_2}.
$
\end{aspt}

Note that the strong convexity of $F(x)$ was adopted by \cite{PJ92} (see Assumption 4.1 in \cite{PJ92}) to derive the limiting distribution of averaged SGD, which serves as the basis of our work.
 In fact, the strong convexity of $F(x)$ implies  $\lambda_{\min} (A) = \lambda_{\min}(\nabla^2 F(x^*))\geq \mu$  is an important condition for parameter estimation and inference.   There are recent works in optimization on relaxing the strong convexity assumption (e.g., \cite{bach2013non}), but they were only able to obtain fast convergence rates in terms of  the objective value $F(\bar{x}_n) -F(x^*)$.

We further assume that the martingale difference $\xi_n$ satisfies the following conditions.
\begin{aspt}
\label{aspt:martingale} The following hold for the sequence $\xi_n=\nabla F(x_{n-1})-\nabla f(x_{n-1},\zeta_n)$:
\begin{enumerate}
\item Assume that $f(x,\zeta)$ is continuously differentiable in $x$ for any $\zeta$ and $ \|\nabla f(x,\zeta)\|_2$ is uniformly integrable for any $x$ so that $\E_{n-1} \xi_n=0$.

\item The conditional covariance of $\xi_n$ has an expansion around $x=x^*$:
$
 \mathbb{E}_{n-1} \xi_n\xi_n^\top =S+\Sigma(\Delta_{n-1}),
$
and there exists  constants $\Sigma_1$ and $\Sigma_2>0$ such that for any $\Delta \in \mathbb{R}^d$.
\[
\opnorm{\Sigma(\Delta)} \leq \Sigma_1\Ltwo{\Delta}+\Sigma_2\Ltwo{\Delta}^2,\quad
|\mathrm{tr}(\Sigma(\Delta))| \leq  \Sigma_1\Ltwo{\Delta}+\Sigma_2\Ltwo{\Delta}^2.
\]
Note that $S$ is the covariance matrix of $\nabla f(x^*,\zeta)$ defined in \eqref{eq:S_general}.
\item There exists  constants $\Sigma_3, \Sigma_4$ such that the fourth conditional moment of $\xi_n$ is bounded by
$
\mathbb{E}_{n-1}\|\xi_n\|_2^4\leq\Sigma_3+\Sigma_4\|\Delta_{n-1}\|_2^4.
$
\end{enumerate}
\end{aspt}
For part 1, we note that our assumption on $f(x, \zeta)$ guarantees that Leibniz's integration rule holds, i.e.,
$
\E_{\zeta\sim \Pi} \nabla f(x,\zeta)= \nabla F(x)$ for all $x$.
Therefore, we have $\E_{n-1} \xi_n=0$, which implies that $\xi_n$ is a martingale difference sequence.
Assumption \ref{aspt:martingale} is  a mild condition over the regularity and boundedness of the loss function. In fact, one can easily verify Assumption \ref{aspt:martingale} using the following lemma.
\begin{lem}
\label{lem:simplegeneral}
If there is a function $H(\zeta)$ with bounded fourth moment, such that the Hessian of $f(x,\zeta)$  is bounded by
\begin{equation*}
\opnorm{\nabla^2 f(x,\zeta)}\leq H(\zeta)
\end{equation*}
for all $x$, and  $\nabla f(x^*,\zeta)$ have a bounded fourth moment, then Assumption \ref{aspt:martingale} holds with
$\Sigma_1=2 \sqrt{\mathbb{E}\Ltwo{\nabla f(x^*,\zeta)}^2 \mathbb{E}H(\zeta)^2}$, $\Sigma_2=4  \mathbb{E}H(\zeta)^2$,
$\Sigma_3=8\mathbb{E}\Ltwo{\nabla f(x^*,\zeta)}^4$ and $\Sigma_4= 64  \mathbb{E}H(\zeta)^4$ .
\end{lem}

Although we consider the fixed-dimensional case, it is still of practical interest to investigate the dimension dependence in our results. The dimension dependence is rather complicated since our results involve a number of constants in Assumption \ref{aspt:convexity} and \ref{aspt:martingale} that all depend on the dimension $d$ (e.g., $L_F$, $\Sigma_1$, $\Sigma_2$, $\Sigma_3$, $\Sigma_4$, $\mathrm{tr}(S)$). For example,  $\mathrm{tr}(S)$ grows with $d$. Moreover, the way it grows depends on how $S$ is configured.  Therefore, for the ease of presentation, we define the following quantity
\begin{equation}
\label{eqn:dcon}
\dcon:=\max\left\{L_F, \Sigma_1^{\frac{2}{3}},\sqrt{\Sigma_2},\sqrt{\Sigma_3},\Sigma^{\frac{1}{4}}_4, \mathrm{tr}(S)\right\}.
\end{equation}
In both linear and logistic regression, $C_d$ increases linearly in $d$ (see Appendix \ref{sec:verify}). We will state our results in terms of this single quantity $C_d$. We also assume $\|x_0-x^*\|_2^2=O(\dcon)$, and there is a universal constant $c$ such that the  step size satisfies $\eta_i \dcon\leq c \mu$ for all $i$.
Note that the choice of step sizes does not sacrifice much of generality since when $d$ is a constant, we could always ignore the first a few iterations, which is usually considered as the ``burn in'' stage. Also, for the starting point $x_0$, if all the components of $x_0-x^*$ are bounded by a constant, it naturally satisfies $\|x_0-x^*\|_2^2=O(d)$.

In the sequel, we will impose Assumptions \ref{aspt:convexity} and \ref{aspt:martingale}. In Appendix \ref{sec:verify}, we show that Assumptions \ref{aspt:convexity} and \ref{aspt:martingale} hold on our motivating examples of linear and logistic regression (see Examples \ref{exp:linear} and \ref{exp:logistic}). Under these assumptions, the classical works \cite{Ruppert88,PJ92} establish the asymptotic normality and efficiency of the $\bar{x}_n$ (see \eqref{eq:asy_normal} and Remark \ref{rem:cramer}). Moreover, we could obtain the following error bounds on the SGD iterates.

\begin{lem}
\label{lem:Delta}
Under Assumptions \ref{aspt:convexity} and \ref{aspt:martingale}, if the step size is chosen to be $\eta_n = \eta n^{-\alpha}$ with $\alpha \in (0,1)$, the iterates of error $\Delta_n=x_n-x^*$ satisfy the following.
\[
\mathbb{E} \|\Delta_n\|^k_2  \lesssim  n^{-k\alpha/2}(C^{k/2}_d+\|\Delta_{n_0}\|^k_2),\quad k=1,2,4.
\]
\end{lem}

The proof of Lemma \ref{lem:Delta} is provided in Appendix \ref{supp:Delta}. A result similar to Lemma \ref{lem:Delta} providing the convergence of $\|\Delta_n\|_2$ and  $\|\Delta_n\|^2_2$ has been shown in  \cite{BM11} (see Theorem 1 therein). Here, we provide simpler bounds on conditional moments of $\Delta_n$ and extend the results in \cite{BM11}  to the fourth moment bound, since we need to access the variance of a variance estimator. This result also tells us how the error decorrelates in terms of the number of iterations. Our proof strategy is similar to \cite{BM11} in that we setup up a recursive formula for the $\Delta_n$ term, and then show it decays at a certain rate by leveraging the convexity of $F(x)$. 
\label{para:convergence}
\label{supp:Delta}
\section{Estimators for Asymptotic Covariance}
\label{sec:cov}
Following the inference procedures illustrated above, when $d$ is fixed and $n\rightarrow\infty$, it is essential to estimate the asymptotic covariance matrix $A^{-1}S A^{-1}$. In this section, we will propose two consistent estimators, the plug-in estimator and the batch-means estimator.

\subsection{Plug-in estimator}
\label{sec:plugin}

The idea of the plug-in estimator is to separately estimate $A$ and $S$ by some $\widehat{A}$ and $\widehat{S}$ and use $\widehat{A}^{-1} \widehat{S} \widehat{A}^{-1}$ as an estimator of $A^{-1}SA^{-1}$. Since $x_i$ converges to $x^*$,  according to the  definitions of $A$ and $S$ in \eqref{eq:S_general}, an intuitive way to construct $\widehat{A}$ and $\widehat{S}$ is to use the sample estimate
\begin{eqnarray*}
  A_n: = \frac1n \sum_{i=1}^n \nabla ^2 f( x_{i-1},\zeta_i), \qquad S_n : = \frac1n \sum_{i=1}^n \nabla f(x_{i-1},\zeta_i) \nabla f(x_{i-1},\zeta_i)^\top  ,
\end{eqnarray*}
as long as the information of $\nabla^2 f(x_{i-1},\zeta_i)$ is available. It is worthwhile noting that each summand in $A_n$ and $S_n$ involves different $x_{i-1}$. Therefore, $A_n$ and $S_n$ can be computed in an online fashion without the need of storing all the data.

Since we are interested in estimating $A^{-1}$, it is necessary to avoid the possible singularity of $A_n$ from statistical randomness. Therefore, we propose to use thresholding estimator $\widetilde{A}_n$, which is strictly positive definite.  In particular, fix $\delta>0$, and let $\Psi D_n\Psi^\top $ be the eigenvalue decomposition of $A_n$, where $D_n$ is a non-negative diagonal matrix. We construct the thresholding estimator $\widetilde{A}_n$:
\begin{equation*}
\widetilde{A}_n=\Psi \widetilde{D}_n \Psi^\top , \quad \left(\widetilde{D}_n\right)_{i,i}=\max\left\{\delta, \left({D}_n\right)_{i,i}\right\}.
\end{equation*}
By construction, it is guaranteed that $\widetilde{A}_n$ is invertible.
With the construction of $S_n$ and $\widetilde{A}_n$ in place, we propose the \emph{plug-in estimator} as $\widetilde{A}_n^{-1} S_n \widetilde{A}_n^{-1}$. Our goal is to establish the consistency of the plug-in estimator, i.e.,
\[
\E \; \opnorm{ \widetilde{A}_n^{-1} S_n \widetilde{A}_n^{-1} - A^{-1} S A^{-1}} \longrightarrow  0 \quad \text{as} \;  n \rightarrow \infty.
\]
Since this estimator relies on the Hessian matrix of the loss function,  we need an additional assumption to establish the consistency.
\begin{aspt}
 \label{aspt:third}
There are constants $L_2$ and $L_4$ such that for all $x$, 
\begin{align}\label{eq:hessian}
\mathbb{E}\|\nabla^2 f(x,\zeta)-\nabla^2 f(x^*,\zeta)\| & \leq L_2\|x-x^*\|_2,\\
\|\mathbb{E}[\nabla^2 f(x^*,\zeta)]^2-A^2\| & \leq L_4. \nonumber
\end{align}
Moreover, we assume that for the choice of $\delta$, we have $\lambda_{\min}(A) > \delta$.
\end{aspt}
We note that it is easy to verify that \eqref{eq:hessian} holds for the two motivating examples in Section \ref{sec:2example}. For quadratic loss, the Hessian matrix at any $x$ is $A$ itself, and \eqref{eq:logA} gives the Hessian for the logistic loss, which is Lipschitz in $x$ and also bounded. In addition,  according to Assumption \ref{aspt:convexity}, we have $\lambda_{\min}(\nabla^2 F(x)) \geq \mu $ for any $x$ and thus $\lambda_{\min}(A) \geq \mu$. Therefore, a valid choice of $\delta$ satisfying Assumption \ref{aspt:third} always exists.

To track the dependence of our results on dimension, we assume $L_2$ and $L_4$ are also controlled by $\dcon$ in \eqref{eqn:dcon} as $L_2\lesssim \dcon^{3/2}, L_4\lesssim \dcon^2$. Lemmas \ref{lem:example1} and \ref{lem:example2} in the Appendix verify this requirement is satisfied in linear and logistic regression.

With this additional  assumption, we first establish  the consistency of the sample estimate $A_n$ and $S_n$ in the following lemma.
\begin{lem}\label{lem:A_con}\label{lem:S_con}
Under Assumptions \ref{aspt:convexity}, \ref{aspt:martingale} and \ref{aspt:third}, the followings hold
\[
\mathbb{E}\|A_n-A\|\lesssim \dcon^2 n^{-\frac\alpha2}, \quad \mathbb{E}\|S_n-S\|\lesssim \dcon^2 n^{-\frac\alpha2}+\dcon^3 n^{-\alpha},
\]
where $\alpha\in(0,1)$ is given {in the step size sequence $\eta_i=\eta i^{-\alpha}$, $i=1,2,\dots, n$}.
\end{lem}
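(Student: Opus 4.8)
The plan is to write each of $A_n-A$ and $S_n-S$ as a \emph{bias} term, arising because the iterates sit at $x_{i-1}$ rather than at $x^*$, plus a mean-zero \emph{fluctuation} term that I will control by a second-moment argument. Throughout I set $g_i:=\nabla f(x_{i-1},\zeta_i)$ and use that $\zeta_i$ is independent of $\mathcal{F}_{i-1}$, so that $\E_{i-1}g_i=\nabla F(x_{i-1})$ and $\E_{i-1}\nabla^2 f(x^*,\zeta_i)=A$. The only facts about the trajectory I invoke are the moment bounds $\E\Ltwo{\Delta_i}\lesssim i^{-\alpha/2}$, $\E\Ltwo{\Delta_i}^2\lesssim i^{-\alpha}$, $\E\Ltwo{\Delta_i}^4\lesssim i^{-2\alpha}$ from claim 2) of Lemma \ref{lem:Delta}, and the elementary estimate $\frac1n\sum_{i=1}^n i^{-\alpha/2}\asymp n^{-\alpha/2}$ (valid since $\alpha\in(0,1)$) is what turns them into the stated rate.

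For $A_n$, I split $\nabla^2 f(x_{i-1},\zeta_i)-A=[\nabla^2 f(x_{i-1},\zeta_i)-\nabla^2 f(x^*,\zeta_i)]+[\nabla^2 f(x^*,\zeta_i)-A]$. Taking $\E_{i-1}$ of the first bracket and using the first inequality of Assumption \ref{aspt:third} (with $x_{i-1}$ being $\mathcal{F}_{i-1}$-measurable and $\zeta_i$ independent) gives $\E\opnorm{\nabla^2 f(x_{i-1},\zeta_i)-\nabla^2 f(x^*,\zeta_i)}\le L_2\,\E\Ltwo{\Delta_{i-1}}\lesssim i^{-\alpha/2}$, so the triangle inequality bounds the averaged first brackets by $\frac1n\sum_i i^{-\alpha/2}\lesssim n^{-\alpha/2}$. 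The second brackets $Y_i:=\nabla^2 f(x^*,\zeta_i)-A$ are i.i.d.\ and mean-zero (as $\E\nabla^2 f(x^*,\zeta)=A$); bounding the operator norm by the Frobenius norm and using independence yields $\E\|\tfrac1n\sum_i Y_i\|_F^2=\tfrac1n\,\E\|Y_1\|_F^2=\tfrac1n\,\mathrm{tr}\big(\E[\nabla^2 f(x^*,\zeta)][\nabla^2 f(x^*,\zeta)]^T-AA^T\big)$. The matrix inside the trace equals $\E[(\nabla^2 f(x^*,\zeta)-A)(\nabla^2 f(x^*,\zeta)-A)^T]$ and is therefore PSD, so its trace is at most $d$ times its operator norm, which is $\le dL_4$ by the second inequality of Assumption \ref{aspt:third}; by Jensen this term is $\lesssim n^{-1/2}\le n^{-\alpha/2}$. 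Adding the two pieces gives $\E\opnorm{A_n-A}\lesssim n^{-\alpha/2}$.

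For $S_n$, I decompose $S_n-S=\frac1n\sum_i[g_ig_i^T-\E_{i-1}(g_ig_i^T)]+\frac1n\sum_i[\E_{i-1}(g_ig_i^T)-S]$. Writing $g_i=\nabla F(x_{i-1})-\xi_i$ and using $\E_{i-1}\xi_i=0$ with Assumption \ref{aspt:martingale}(2) gives the exact identity $\E_{i-1}(g_ig_i^T)=\nabla F(x_{i-1})\nabla F(x_{i-1})^T+S+\Sigma(\Delta_{i-1})$. Since $\nabla F(x^*)=0$ and $\nabla F$ is $L_F$-Lipschitz, $\opnorm{\nabla F(x_{i-1})\nabla F(x_{i-1})^T}\le L_F^2\Ltwo{\Delta_{i-1}}^2$, while $\opnorm{\Sigma(\Delta_{i-1})}\le\Sigma_2(\Ltwo{\Delta_{i-1}}^\kappa+\Ltwo{\Delta_{i-1}}^2)$, so each bias summand has expected norm $\lesssim i^{-\alpha/2}$ (the $i^{-\alpha/2}$ coming from the $\kappa=1$ piece; faster when $\kappa=2$), and averaging gives $\lesssim n^{-\alpha/2}$. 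The first sum $D_i:=g_ig_i^T-\E_{i-1}(g_ig_i^T)$ is a matrix martingale-difference sequence; bounding operator by Frobenius norm and using orthogonality of martingale differences in the Frobenius inner product gives $\E\|\tfrac1n\sum_i D_i\|_F^2=\tfrac1{n^2}\sum_i\E\|D_i\|_F^2\le\tfrac1{n^2}\sum_i\E\Ltwo{g_i}^4$. Here $\E\Ltwo{g_i}^4\lesssim \E\Ltwo{\nabla F(x_{i-1})}^4+\E\Ltwo{\xi_i}^4\lesssim 1$, using the Lipschitz bound, the fourth-moment control of Assumption \ref{aspt:martingale}(3), and $\E\Ltwo{\Delta_{i-1}}^4\lesssim 1$; hence the fluctuation is $\lesssim n^{-1/2}$ and we conclude $\E\opnorm{S_n-S}\lesssim n^{-\alpha/2}$.

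The main obstacle is the $S_n$ term: unlike $A_n$, the summands $g_ig_i^T$ are not i.i.d.\ draws centered at $S$, so the correct limit must be extracted through the conditional-covariance expansion of Assumption \ref{aspt:martingale}(2), which is exactly what produces the parasitic $\nabla F(x_{i-1})\nabla F(x_{i-1})^T$ and $\Sigma(\Delta_{i-1})$ bias terms; the $n^{-\alpha/2}$ rate is then dictated by the slowest of these (the $\kappa=1$ contribution of $\Sigma$ together with the first-moment decay of $\Delta$). A secondary point to check carefully is the uniform-in-$i$ boundedness of $\E\Ltwo{g_i}^4$, which is what keeps the martingale fluctuation at order $n^{-1/2}$ and is precisely where Assumption \ref{aspt:martingale}(3) is used.
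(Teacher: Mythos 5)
Your proposal is correct, and it splits into two cases relative to the paper. For $A_n$ you use exactly the paper's decomposition (recenter the Hessians at $x^*$, bound the drift term by Assumption \ref{aspt:third} plus Lemma \ref{lem:Delta} and the sum $\sum_i i^{-\alpha/2}\lesssim n^{1-\alpha/2}$, and control the i.i.d.\ fluctuation by a second-moment bound); if anything, your Frobenius/trace handling of the fluctuation, which pays an explicit factor $d$, is the cleaner way to pass from the bound $\opnorm{\mathbb{E}BB^T}\le L_4/n$ to a bound on $\mathbb{E}\opnorm{B}$, a step the paper performs somewhat loosely. For $S_n$, however, your route is genuinely different. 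The paper writes $\nabla f(x_{i-1},\zeta_i)=X_i+Y_i$ with $X_i=\nabla f(x^*,\zeta_i)$, expands $S_n-S$ into four terms, and bounds the cross and remainder terms using Assumption \ref{aspt:third} together with Lemma \ref{lem:Delta}; in particular it needs to control $\Ltwo{Y_i}=\Ltwo{\nabla f(x_{i-1},\zeta_i)-\nabla f(x^*,\zeta_i)}$, i.e.\ pathwise gradient increments of the sampled loss. You instead decompose $S_n-S$ into a matrix martingale-difference fluctuation plus a conditional bias, and evaluate the bias exactly via $\mathbb{E}_{i-1}[g_ig_i^T]=\nabla F(x_{i-1})\nabla F(x_{i-1})^T+S+\Sigma(\Delta_{i-1})$, so that only the population-level Lipschitz bound from Assumption \ref{aspt:convexity}, the expansion and fourth-moment bound of Assumption \ref{aspt:martingale}, and Lemma \ref{lem:Delta} enter. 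This buys two things: the $S_n$ half of the lemma is established without invoking Assumption \ref{aspt:third} at all, and the orthogonality of martingale differences in the Frobenius inner product replaces the paper's term-by-term Cauchy--Schwarz bookkeeping. The paper's version is more elementary in that its leading term $\frac1n\sum_i X_iX_i^T$ is a genuine i.i.d.\ average centered at $S$, but your argument is the more economical one in terms of hypotheses, and all of its steps (measurability of $x_{i-1}$, conditional orthogonality, uniform boundedness of $\mathbb{E}\Ltwo{g_i}^4$) check out.
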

The proof of Lemma \ref{lem:S_con} is provided in Appendix \ref{supp:A_con}.
Using Lemma \ref{lem:A_con} and a matrix perturbation inequality for the inverse of a  matrix (see Lemma \ref{lem:matrixinv} in Appendix \ref{subsec:plugin}), we obtain the consistency result of the proposed plug-in estimator $\widetilde{A}_n^{-1} S_n \widetilde{A}_n^{-1}$:
\begin{theorem}[Error rate of the plug-in estimator]\label{thm:plugin}
Under Assumptions \ref{aspt:convexity}, \ref{aspt:martingale} and \ref{aspt:third}, the thresholded plug-in estimator initialized from any bounded $x_0$  converges to the asymptotic covariance matrix,
\begin{equation}\label{eq:plug_in_bound}
\mathbb{E}\opnorm{\widetilde{A}_n^{-1} S_n \widetilde{A}_n ^{-1} -A^{-1} S A^{-1}}\lesssim  \|S\|(\dcon^2 n^{-\frac{\alpha}{2}}+\dcon^3 n^{-\alpha}),
\end{equation}
where $\alpha \in (0,1)$ is given in the step size sequence $\eta_i=\eta i^{-\alpha}$, $i=1,2,\dots, n$. When $C_d$ is a constant, the right hand side of \eqref{eq:plug_in_bound} is dominated by $O(n^{-\frac{\alpha}{2}})$.
\end{theorem}

\begin{rem}\label{rem:directplugin}
In practice, we usually do not need to perform the thresholding step, since $A_n$ is positive definite with high probability as $A_n$ is close to $A$. The thresholding step is mainly for obtaining the expected error bound in Theorem \ref{thm:plugin}. In fact, without the thresholding step, we are still able to the obtain the following error bound. In our numerical experiments, we do not apply the thresholding procedure and the obtained $A_n$'s are always invertible.

\begin{cor}
\label{cor:directplugin}Under Assumptions \ref{aspt:convexity}, \ref{aspt:martingale} and \ref{aspt:third}, as $n\rightarrow \infty$, \[
\opnorm{A_n^{-1} S_n A_n ^{-1} -A^{-1} S A^{-1}}=O_p\big(\|S\|(\dcon^2 n^{-\frac{\alpha}{2}}+\dcon^3 n^{-\alpha})\big).
\]
\end{cor}
\end{rem}

We also note that since the element-wise $\ell_\infty$-norm is bounded from above by the matrix operator norm, we have
$
\mathbb{E}\max_{ij} \bigl|(\widetilde{A}_n^{-1} S_n \widetilde{A}_n ^{-1} -A^{-1} S A^{-1})_{ij}\bigr|
$
converges to zero as $n \rightarrow \infty$ according Theorem \ref{thm:plugin}.
Therefore $(A^{-1} S A^{-1})^{1/2}_{jj}$ can be estimated by $\hat{\sigma}^P_{n,j}=(\widetilde{A}_n^{-1} S_n \widetilde{A}_n ^{-1})^{1/2}_{jj}$  for the construction of confidence intervals. In particular, we have the following corollary, which shows that $\bar{x}_{n,j}\pm  z_{q/2}\hat{\sigma}^P_{n,j}/\sqrt{n}$ is an asymptotic exact confidence interval.
\begin{restatable}{cor}{cor:pluginterval}
\label{cor:pluginterval}
Under the assumptions of Theorem \ref{thm:plugin}, {if the step size is chosen to be $\eta_i = \eta i^{-\alpha}$ with $\alpha\in (\frac12,1)$,} when $d$ is fixed and $n\to \infty$,
\[
\Pr\left(\bar{x}_{n,j}-z_{q/2}\hat{\sigma}^P_{n,j}/\sqrt{n}\leq x^*_j\leq \bar{x}_{n,j}+z_{q/2}\hat{\sigma}^P_{n,j}/\sqrt{n}\right)\to 1-q.
\]
\end{restatable}
Proof of Corollary \ref{cor:pluginterval} is given in Appendix \ref{sec:supp_pluginterval}. Note that while Theorem \ref{thm:plugin} holds for all $\alpha\in (0,1)$, the asymptotic normality in \eqref{eq:sandwich} holds only when $\alpha\in (\frac12,1)$. Thus, Corollary \ref{cor:pluginterval} requires that $\alpha\in (\frac12, 1)$.

\subsection{Batch-means estimator}
\label{sec:batchmean}
Although the plug-in estimator is intuitive, it requires the computation of the Hessian matrix and its inverse, as well as an additional Assumption \ref{aspt:third} on the Lipschitz condition of the Hessian matrix. In this section, we develop the batch-means estimator, which only uses the iterates from SGD without requiring computation of any additional quantities. Intuitively, if all iterates are independent and share the same distribution, the asymptotic covariance can be directly estimated by the sample covariance,
$
\frac{1}{n}\sum_{i=1}^n(x_i-\bar{x})(x_i-\bar{x})^\top .$
Unfortunately, the SGD iterates are far from independent. To  understand the correlation between two consecutive iterates, we note that for sufficiently large $n$ such that $x_{n-1}$ is close to $x^*$, by the Taylor expansion of $\nabla F(x_{n-1})$ at $x^*$, we have $\nabla F(x_{n-1}) \approx \nabla F(x^*)+\nabla^2 F(x^*) (x_{n-1} -x^*)=A \Delta_{n-1}$, where $\nabla F(x^*)=0$ by the first order condition and $A=\nabla^2F(x^*)$. Combining this with the recursion  in \eqref{eq:sgd_general_delta}, we have for  sufficiently large $n$,
\begin{equation}\label{eq:Delta_recursion}
  \Delta_n \approx (I_d- \eta_n A) \Delta_{n-1}  + \eta_n \xi_n.
\end{equation}
Based on \eqref{eq:Delta_recursion}, the strength of correlation between $\Delta_n$ and $\Delta_{n-1}$ can be approximated by $\opnorm{I_d-\eta_n A}$, which is very close to $1$ as $\eta_n\asymp n^{-\alpha}$. To address the challenge of strong correlation among neighboring iterates, we split the entire sequence of iterates into batches with carefully chosen batch sizes. 
In particular, we split $n$ iterates of SGD $\{x_1, \ldots, x_n\}$ into $M+1$ batches with  sizes $n_0, n_1, \ldots,  n_M$:
\[
\underbrace{\{x_{s_0},\ldots, x_{e_0}\}}_{\text{0-th batch}}, \; \underbrace{\{x_{s_1}, \ldots x_{e_1}\}}_{\text{1-st batch}}, \; \ldots,\; \underbrace{\{x_{s_M},\ldots, x_{e_M}\}}_{\text{$M$-th batch}}.
\]
Here $s_k$ and $e_k$ are the starting and ending index of $k$-th batch with $s_0=1$, $s_k=e_{k-1}+1$, $n_k=e_k-s_k+1$, and $e_M=n$. We treat the $0$-th batch as the ``burn-in stage". More precisely, the iterates $\{x_{s_0},\ldots, x_{e_0}\}$ will not be used for constructing the batch-means estimator since the step sizes are not small enough and the corresponding iterates in the $0$-th batch are far away from the optimum. The batch-means estimator is given by the following:
\begin{equation}\label{eq:batch_mean}
  \frac{1}{M}\sum_{k=1}^Mn_k(\overline{X}_{n_k}-\overline{X}_M)(\overline{X}_{n_k}-\overline{X}_M)^\top .
\end{equation}
where $\overline{X}_{n_k}:=\frac{1}{n_k} \sum_{i=s_k}^{e_k} x_i$ is the mean of the iterates for the $k$-th batch and $\overline{X}_M:=\frac{1}{e_M-e_0}\sum_{i=s_1}^{e_M} x_i$ is the the mean of all the iterates except for the $0$-th batch.

Note that when batch sizes $n_k$ are predetermined, we may rewrite \eqref{eq:batch_mean} in the following form,
\begin{equation}\label{eq:recursive}
\frac1M\sum_{k=1}^Mn_k \overline{X}_{n_k}\overline{X}_{n_k}^\top +\frac nM \overline{X}_M\overline{X}_M^\top -2\left(\frac1M\sum_{k=1}^Mn_k\overline{X}_{n_k}\right)\overline{X}_{M}^\top .
\end{equation}
Here, $\overline{X}_{M}$, $\frac1M\sum_{k=1}^M n_k \overline{X}_{n_k}\overline{X}_{n_k}$ and $\frac1M\sum_{k=1}^Mn_k\overline{X}_{n_k}$ can be updated recursively so that there is no need to store all the batch-means $\{\overline{X}_{n_k}\}$. In other words, the memory requirement for the batch-means estimator is only $O(d^2)$ instead of $O(Md^2)$. 

Intuitively, the reason why our batch-means estimator with increasing batch size can overcome the strong dependence between iterates is as follows. Although the correlation between neighboring iterates is strong, it decays exponentially for far-apart iterates. Roughly speaking, by \eqref{eq:Delta_recursion}, for large $j$ and $k$, the strength of correlation between $\Delta_j$ and $\Delta_k$ is approximately
\begin{equation}
\label{tmp:decor}
\prod_{i=j}^{k-1} \opnorm{I_d-\eta_{i+1}A}\approx \exp\Bigl(-\lambda_{\min}(A)\sum_{i=j}^{k-1}\eta_{i+1} \Bigr).
\end{equation}
Therefore, the correlations between the batch-means $\overline{X}_{n_k}$ are close to zero if the batch sizes are large enough, in which case different batch-means can be roughly treated as independent. As a consequence, the sample covariance gathered from the batch-means will serve as a good estimator of the true asymptotic covariance.

The remaining difficulty is how to determine the batch sizes. The approximation of correlation given by \eqref{tmp:decor} provides us a clear clue. If we want the correlation between two neighboring batches to be  on the order of $\exp(-cN)$, where $N$ (with $N \rightarrow \infty$) is a parameter controlling the amount of decorrelation and $c$ is a constant, we need
$
\sum_{i=s_k}^{e_k} \eta_i\asymp N
$
for every batch $k$. When $\eta_i=\eta i^{-\alpha}$, $\sum_{i=s_k}^{e_k} \eta_i \asymp (e_k^{1-\alpha}-e_{k-1}^{1-\alpha})$, which leads to the following batch size setting:
\begin{equation}\label{eq:ending}
e_k=((k+1)N)^{\frac{1}{1-\alpha}}\quad k=0,\ldots, M,
\end{equation}
where $e_k$ is the ending point for the $k$-th batch.
There are two practical scenarios to apply the proposed batch-means estimator,
\begin{itemize}\label{para:batching_strategy}
\item Total number of iterates $n$ is given: Noting that $e_M=n$,  the decorrelation strength  factor $N$ takes the following form,
\begin{equation}\label{eq:N}
N=\frac{n^{1-\alpha}}{M+1},
\end{equation}
where $M$ is the number of batches. Based on the result of Theorem \ref{thm:general} below, it is preferable to take $N=n^{\frac{1-\alpha}{2}}$. 
\item When $n$ is unknown (but sufficiently large): Given a target error bound $\epsilon$, we pick an $N\asymp \epsilon^{-2}$. Then, we receive the online data and batch the SGD iterates according to \eqref{eq:ending}. When the number of batches $M$ is sufficiently large (e.g., the upper bound in \eqref{eq:batch_mean_con} below is smaller than $\epsilon$), we stop our SGD procedure and output the batch-means estimator.

\end{itemize}
\par
Under this setting, the batch-means covariance estimator \eqref{eq:batch_mean} is consistent as shown in the following theorem.
\begin{theorem}[{Error rate of the batch-means estimator}]
\label{thm:general}
Under Assumptions \ref{aspt:convexity} and \ref{aspt:martingale}, when $d$ is fixed {and the step size is chosen to be $\eta_i = \eta i^{-\alpha}$ with $\alpha\in (\frac12,1)$,} the batch-means estimator initialized by any bounded $x_0$ is a consistent estimator. In particular, for sufficiently large $N$ and $M$, we have 
\begin{eqnarray}
 &\mathbb{E}  \opnorm{M^{-1}\sum_{k=1}^Mn_k(\overline{X}_{n_k}-\overline{X}_{M})(\overline{X}_{n_k}-\overline{X}_{M})^\top -A^{-1}SA^{-1}} \cr
 &\lesssim \dcon M^{-\frac{1}{2}}+\dcon N^{-\frac{1}{2}}+\dcon^{\frac32} (MN)^{-\frac{\alpha}{4-4\alpha}}+\dcon^2M^{-1}+\dcon^3M^{-1}N^{\frac{1-2\alpha}{1-\alpha}}.
\label{eq:batch_mean_con}
\end{eqnarray}
\end{theorem}

As $n \rightarrow \infty$, by \eqref{eq:N}, we can choose $M, N \rightarrow \infty$ and thus the right hand side of \eqref{eq:batch_mean_con} will converge to zero for any $\alpha \in (1/2,1)$, which shows the consistency of the proposed covariance estimator. When $d$ is fixed, $\dcon$ is a constant, and it is straightforward  to see that the right hand side of \eqref{eq:batch_mean_con} is dominated by $\dcon (M^{-\frac{1}{2}}+ N^{-\frac{1}{2}})$. Therefore, according to \eqref{eq:N} (i.e., $N(M+1)=n^{1-\alpha}$), we have the following Corollary \ref{cor:batch} that suggests the optimal order of $M$.

\begin{cor}\label{cor:batch}
Under Assumptions \ref{aspt:convexity} and \ref{aspt:martingale}, when $d$ is fixed and $n$ is sufficiently large, by choosing the step size $\eta_i = \eta i^{-\alpha}$ with $\alpha\in (\frac12,1)$, $M\asymp n^{\frac{1-\alpha}{2}}$, and $N\asymp n^{\frac{1-\alpha}{2}}$, we have
\begin{align}
 &\mathbb{E}  \opnorm{M^{-1}\sum_{k=1}^Mn_k(\overline{X}_{n_k}-\overline{X}_{M})(\overline{X}_{n_k}-\overline{X}_{M})^\top -A^{-1}SA^{-1}} \nonumber \\
 & \lesssim \dcon n^{-\frac{1-\alpha}{4}}+\dcon^{\frac32} n^{-\frac{\alpha}{4}}+\dcon^2n^{-\frac{1-\alpha}{2}}+\dcon^3n^{-\alpha}.
 \label{eq:batch_means_bound}
\end{align}
When $C_d$ is a constant, the right hand side of \eqref{eq:batch_means_bound} is dominated by $O(n^{-\frac{1-\alpha}{4}})$.
\end{cor}

As we will show in  simulations in Section \ref{sec:exp}, wide choices between $M=n^{0.2}$ to $M=n^{0.3}$ lead to reasonably good coverage rates when $\alpha$ is close to $1/2$. Moreover, since $\alpha\in (1/2,1)$, the convergence rate $n^{-\frac{1-\alpha}{4}}$ is slower than the rate of the plug-in estimator $n^{-\frac\alpha 2}$. Although batch-means estimator has a slower convergence rate, the next corollary shows that this method still constructs asymptotic exact confidence intervals.

\begin{cor}
\label{cor:general}
Under the assumptions of Theorem \ref{thm:general}, when $d$ is fixed, $n\to \infty$, and {the step size $\eta_i = \eta i^{-\alpha}$ with $\alpha\in (\frac12,1)$,} we have that
\[
\Pr\left(\bar{x}_{n,j}-z_{q/2}\hat{\sigma}^B_{n,j}/\sqrt{n}\leq x^*_j\leq \bar{x}_{n,j}+z_{q/2}\hat{\sigma}^B_{n,j}/\sqrt{n}\right)\to 1-q,
\]
where
$
\hat{\sigma}^B_{n,j}:=\left[M^{-1}\sum\limits_{k=1}^Mn_k(\overline{X}_{n_k}-\overline{X}_{M})(\overline{X}_{n_k}-\overline{X}_{M})^\top \right]^{1/2}_{j,j}.$
\end{cor}
The proof is identical to the one of Corollary \ref{cor:pluginterval} and therefore omitted.

\subsection{Intuition behind the proof}
Now let us provide the main idea behind the proof of Theorem \ref{thm:general}. Recall that the SGD recursion in \eqref{eq:sgd_general_delta} can be approximated by \eqref{eq:Delta_recursion}:
$
\Delta_n \approx (I_d- \eta_n A) \Delta_{n-1}  + \eta_n \xi_n.
$
We replace ``$\approx$'' by the equal sign and define an auxiliary sequence $U_n$:
\begin{equation}
\label{eqn:linear}
U_n=U_{n-1}-\eta_n AU_{n-1}+\eta_n \xi_n,\quad U_0=\Delta_0.
\end{equation}
Note that $\Delta_n=U_n$ in the linear model setting, but our proof applies to \emph{non-linear models (e.g., generalized linear models)}. For a non-linear model, the high-level idea of the proof consists of two steps:
\begin{enumerate}
  \item Establishing the consistency (and the rate of convergence) of the batch-means estimator based on the sequence $U_n$;
  \item Quantifying the difference between $\Delta_n$ and $U_n$, where $\Delta_n$ in \eqref{eq:sgd_general_delta} is the original sequence of interest generated from SGD for general loss functions, and $U_n$ in \eqref{eqn:linear} is its auxiliary linear approximation sequence.
\end{enumerate}
In fact, the sequence $U_n$ is the so-called ``oracle iterate sequence'', which has also been considered in \cite{PJ92}. It can be written in a more explicit form:
\begin{equation}\label{eq:U_n_exp}
U_n=\prod_{k=1}^n\left(I-\eta_k A\right) U_0+\sum_{m=1}^n\prod_{k=m+1}^n\left(I-\eta_k A\right)\eta_m\xi_m.
\end{equation}
Given the sequence $U_n$, we construct the batch-means estimator based on $U_n$ as
$
\frac{1}{M} \sum_{k=1}^M n_k(\overline{U}_{n_k}-\overline{U}_M)(\overline{U}_{n_k}-\overline{U}_M)^\top ,
$
where $\overline{U}_{n_k}$ and $\overline{U}_M$ are defined as in \eqref{eq:batch_mean} with $x_i$ being replaced by $U_i$. The analysis of the batch-means estimator from $U_n$ is simpler than that from SGD iterates $x_n$ since the expression of $U_n$ in \eqref{eq:U_n_exp} only involves the product of matrices and the martingale differences $\xi_m$. In particular, we establish the consistency of the batch-means estimator based on $U_n$ in the following lemma.
\begin{restatable}{lem}{thmlinear}
\label{thm:linear}
Under Assumptions \ref{aspt:convexity} and \ref{aspt:martingale}, when $d$ is fixed and {the step size is chosen to be $\eta_i = \eta i^{-\alpha}$ with $\alpha\in (\frac12,1)$,} the batch-means estimator based on the sequence $U_n$ with any bounded $U_0$ satisfies the following inequality for sufficiently large $N$ and $M,$
\begin{eqnarray*}
&& \mathbb{E}\;   \opnorm{ M^{-1}\sum_{k=1}^M n_k(\overline{U}_{n_k}-\overline{U}_M)(\overline{U}_{n_k}-\overline{U}_M)^\top -A^{-1}SA^{-1}}\cr
&&\lesssim  \dcon M^{-\frac{1}{2}}+\dcon N^{-\frac{1}{2}}+\dcon^{\frac32} (MN)^{-\frac{\alpha}{4-4\alpha}}.
\end{eqnarray*}
\end{restatable}
The proof of Lemma \ref{thm:linear} is provided in Appendix \ref{suppsec:thmlinear}. With Lemma \ref{thm:linear} in place, to obtain the desired  consistency result in  Theorem \ref{thm:general}, we only need to study the difference between $\Delta_n$ and $U_n$. In particular, let $\delta_n:=\Delta_n-U_n$. We have the following recursion:
\begin{align*}
\delta_n&=\Delta_{n-1}-U_{n-1}-\eta_n\nabla F(x_{n-1})-\eta_n AU_{n-1}\\
&=\delta_{n-1}-\eta_n A \delta_{n-1}+\eta_n(A\Delta_{n-1}-\nabla F(x_{n-1})).
\end{align*}
Notably,  by replacing $\xi_n$ in \eqref{eqn:linear} with $A\Delta_{n-1}-\nabla F(\Delta_{n-1})$, $\delta_{n}$ follows a similar recursion relationship to that of the sequence  $U_n$. Based on this observation, we show that $\delta_n$ is a sequence of small numbers, and hence  $\Delta_n$ and $U_n$ are close to each other. Combining this with Lemma \ref{thm:linear}, we will reach the conclusion in Theorem \ref{thm:general} (see Appendix \ref{sec:supp_batch} for the rigorous proof).

\section{High-dimensional Linear Regression}
\label{sec:high_dim}
In Section \ref{sec:plugin} and \ref{sec:batchmean}, we assumed that the dimension $d$ is fixed while $n \to \infty$. However in high-dimensional settings, it is often the case that $d \asymp n$ or $n=o(d)$. In below we consider a high-dimensional linear model
$
b_i= a_i^\top  x^* +\varepsilon_i,
$
where $x^*$ is $s$-sparse (i.e., $\norm{x}_0 \le s$) and let $S=\{j:x^*_j\neq 0\}$ be the support of true regression coefficients. Each covariate $a_i \in \mathbb{R}^d$ is an \emph{i.i.d.} sub-Gaussian random vector from a common population $a$ with the covariance matrix $A$, and $\varepsilon_i \sim N(0, \sigma^2)$.
For simplicity, we assume that $\sigma$ is known.
For high-dimensional linear regression, one of the most popular estimators is the Lasso estimator, denoted by
 $\widehat{x}_{\mathrm{Lasso}}$. That is,
\begin{equation*}
\widehat{x}_{\mathrm{Lasso}}=\frac{1}{2n}\argmin_{x\in\R^d}\|b-Dx\|_2^2+\lambda\|x\|_1,
\end{equation*}
where $D=[a_1, \ldots, a_n]^\top  \in \mathbb{R}^{n \times d}$ is the design matrix, $b=[b_1, \ldots, b_n]^\top  \in \mathbb{R}^{n \times 1}$ is the response vector.

As suggested by earlier work (see, e.g., \cite{meinshausen2009p,Wainwright:09,buhlmann2011statistics,belloni2013least}), 
the Lasso estimator can be used as a screening method to reduce the set of the variables to $\widehat{S}$, a subset which contains the true support $S$ with probability tending to 1. For example, by choosing the regularization parameter $\lambda$ as (2.12) in \cite{belloni2013least}  and under certain assumptions, \cite{belloni2013least} proved  that $S\subseteq \widehat{S}$ and $|\widehat{S}\backslash S|\lesssim s$ with high probability (see Theorems 2 and 3 therein). When $s$ is treated as a constant, the selected model will be of fixed dimension.  Based on the selected model, we are able to directly apply our plug-in or batch-means estimator in Section \ref{sec:cov} on $\widehat{S}$ to conduct inference for $x_j^*$ for $j \in \widehat{S}$.

However, this approach has several limitations. First, the screening approach requires a strong ``beta-min'' assumption. In particular,  this assumption requires that $\min_{j\in S }|x^*_j|>\max_{j\in S }|\widehat{x}_{\mathrm{Lasso},j}-x^*_j|$, or $\min_{j\in S }|x^*_j|\gtrsim\sqrt{s(\log d)/n}$, e.g., \cite{buhlmann2011statistics,buhlmann2014high,belloni2013least}. Other screening methods (e.g., ``Sure Independence Screening'' (SIS) method \cite{fan2008sure}) also require a similar beta-min condition. However, since we are interested in inference of the model parameters instead of the model selection, the ``beta-min'' condition should be avoidable. Second, the sparsity level $s$ has to be treated as a constant to apply our theoretical results of either the plug-in or batch-means estimator. Furthermore, when using Lasso as a screening approach, it inevitably requires more than one pass of the data which does not fit our online setting.

\subsection{Debiasing approach}

To relax the strong conditions  when using the Lasso as a screening approach, we propose a new approach for conducting inference for high-dimensional linear regression that only uses one pass of the data. Our approach is based on the following
debiased Lasso estimator \cite{vandegeer14asy, javanmard2014confidence, Zhang14CI},
\begin{eqnarray*}
 \widehat{x}^d_{\mathrm{Lasso}}= \widehat{x}_{\mathrm{Lasso}} + \frac{1}{n} \widehat{\Omega} D^\top (b-D\widehat{x}_{\mathrm{Lasso}}),
\end{eqnarray*}
where $\widehat{\Omega}$ is an estimator of the inverse covariance matrix of the design $\Omega=A^{-1}$.
To construct $\widehat{\Omega}$,  \cite{vandegeer14asy} adopts the node-wise Lasso approach (see also \cite{meinshausen06}), i.e.,
\begin{eqnarray}\label{eq:L1_nodewise_det}
\widehat{\gamma}^j=\argmin_{\gamma^j \in \mathbb{R}^{d-1}} \frac{1}{2n}\|D_{\cdot,j} -D_{\cdot,-j} \gamma^j\|_2^2+ \lambda_j \|\gamma^j\|_1,
\end{eqnarray}
where $D_{\cdot, j}$ is the $j$-th column of the design matrix $D$ and $D_{\cdot, -j}$ is the design submatrix without the $j$-th column.
Further, one can estimate $\Omega_{j,j}$ by
\[
  \widehat{\tau}_j=\frac{1}{n}(D_{\cdot, j} - D_{\cdot, -j}\widehat{\gamma}^j)^\top  D_{\cdot, j},
\]
 Given $\widehat{\gamma}^j$ and $ \widehat{\tau}_j$, the matrix ${\Omega}$ is estimated by,
\begin{eqnarray}\label{eq:M}
\widehat{\Omega} = \widehat{T} \widehat{C},
\end{eqnarray}
where $\widehat{T} := \mathrm{diag}(1/\widehat{\tau}_1, \ldots,  1/\widehat{\tau}_d)$ and
$
  \widehat{C}:=\begin{pmatrix}
    1 & -\widehat{\gamma}^1_2 & \ldots & -\widehat{\gamma}^1_d \\
    -\widehat{\gamma}^2_1 & 1  & \ldots & -\widehat{\gamma}^2_d\\
    \vdots        &  \vdots &  \ddots  & \vdots \\
    -\widehat{\gamma}^d_1 & -\widehat{\gamma}^d_2 & \ldots & 1
  \end{pmatrix}.
$

Note that in the existing literature, $\widehat{x}_{\mathrm{Lasso}}$ and $\widehat{\Omega}$ are obtained via deterministic convex optimization. Therefore, debiased Lasso approaches  cannot be directly applied to the stochastic setting in this work. To address this issue, we propose to compute the estimators for both $x^*$ and $\Omega$ using the Regularization Annealed epoch Dual AveRaging (RADAR) algorithm \cite{agarwal2012stochastic}, which is a variant of SGD. Similar to SGD, RADAR computes the stochastic gradient on one data point at each iteration.
\label{para:RADAR} Please refer to \cite{agarwal2012stochastic} for more details of the RADAR algorithm.
The reason why we use RADAR instead of the vanilla SGD is because RADAR provides the optimal convergence rate in terms of the $\ell_1$-norm. In particular, we apply RADAR to the following $\ell_1$-regularized problem,
\begin{eqnarray}\label{eq:L1_reg_obj}
 \min_{x \in \mathbb{R}^{d}}\E(b-a^\top  x)^2 + \lambda \|x\|_1
\end{eqnarray}
and let $\widehat{x}_n$ be the solution output from RADAR with $n$ iterations. Similarly, we again use stochastic optimization instead of deterministic optimization in the node-wise Lasso in \eqref{eq:L1_nodewise_det}, that is, applying RADAR to the following optimization problem for each dimension $1 \leq j \leq d$,
\begin{eqnarray}\label{eq:L1_nodewise}\label{eq:debiasedLasso}
\widehat{\gamma}^j=\argmin_{\gamma^j \in \mathbb{R}^{d-1}} \E\|a_j - a_{-j} \gamma^j\|_2^2+ \lambda_j \|\gamma^j\|_1,
\end{eqnarray}
where $a_j$ is the $j$-th coordinate of the population design vector $a$ and $a_{-j}$ is the subvector of $a$ without the $j$-th coordinate. Given $\widehat{\gamma}^j$ from solving \eqref{eq:L1_nodewise} via the iterative stochastic algorithm,  the inverse covariance estimator $\widehat{\Omega}$ is constructed according to \eqref{eq:M}.

It is noteworthy that  although the proposed  $\widehat{\Omega}$ is of the same form as the estimator for $A^{-1}$ in \cite{vandegeer14asy}, our $\widehat{\gamma}^j$ is different from the one in \cite{vandegeer14asy}. More precisely, our $\widehat{\gamma}^j$  is the output of a  stochastic gradient-based algorithm, while \cite{vandegeer14asy} obtained $\widehat{\gamma}^j$ from deterministic optimization in \eqref{eq:L1_nodewise_det}.  With all these ingredients in place, we present the stochastic gradient based construction of the confidence interval for $x^*_j$ for $j\in \{1, \ldots, d\}$ in Algorithm \ref{algo:high_CI}.  The hypothesis test can also be performed once the estimator of the asymptotic variance of $\widehat{x}_j^d$ is available (see Theorem \ref{thm:high_inf}). We note that the proposed method is computationally more efficient than the methods based on deterministic optimization. It only requires one pass of the data with the total per-iteration complexity $O(d^2)$ (note that the node-wise Lasso needs to solve $d$ optimization problems) and is applicable to online data (in contrast to multiple passes of data with deterministic optimization used in existing methods). The details of the algorithm are provided in Algorithm \ref{algo:high_CI}.

\begin{algorithm}[!t]
\caption{Stochastic Optimization Based Confidence Interval Construction for High-dimensional Sparse Linear Regression}
\label{algo:high_CI}
\begin{algorithmic}
\Inputs{Regularization parameter $\lambda \asymp \sqrt{\log d/n}$, and $\lambda_j \asymp \sqrt{\log d/n}$ for each               dimension $j$, the noise level $\sigma$, confidence level $1-\alpha$.}
\For{$t= 1$ to $n$}
\State
Randomly sample the data $(a_t, b_t)$ and update the design $D \leftarrow [D^\top , a_t]^\top $ and response $b \leftarrow [b^\top , b_t]^\top $.
\State
Update $x_t$ by running one iteration of RADAR on the optimization problem  \eqref{eq:L1_reg_obj} using the stochastic gradient $(a_t^\top  x_{t-1} -b_t) a_t$,

\For{$j=1$ to $d$ }
\State
Update $\gamma^j_t$ by running one iteration of RADAR on the  optimization problem \eqref{eq:L1_nodewise} using the stochastic gradient $(a_{t,-j}^\top  \gamma^j_{t-1} -a_{t,j}) a_{t,-j}$.
\EndFor
\EndFor
\State
Let $\widehat{x}_n=x_n$ and $\widehat{\gamma}^j=\gamma^j_n$ for $j \in \{1,\ldots, d\}$ be the final outputs.
\State
Construct the debiased estimator $\widehat{x}^d$ with $\widehat{\Omega}$ defined in \eqref{eq:M}.
\begin{equation}\label{eq:xd}
\widehat{x}^d= \widehat{x}_{n} + \frac{1}{n} \widehat{\Omega} D^\top (b-D\widehat{x}_{n}).
\end{equation}
\Outputs{The $(1-\alpha)$ confidence interval for each $x_j^*$ : $\widehat{x}^d_j \pm z_{\alpha/2}\sigma \sqrt{(\widehat{\Omega} \widehat{A} \widehat{\Omega})_{jj}/n}$, where $\widehat{A}=\frac{1}{n} D^\top  D $.}
\end{algorithmic}
\end{algorithm}

To provide the theoretical justification for Algorithm \ref{algo:high_CI} in terms of constructing valid confidence intervals, we make the following assumptions (which are similar to the assumptions made in \cite{vandegeer14asy}).

\begin{aspt}
The covariate $a$ is a sub-Gaussian random vector with variance proxy $K^2$. The population covariance $A$ has bounded eigenvalues,
\begin{align*}
0< \mu < \lambda_{\min} (A) <  \lambda_{\max} (A) < L_F.
\end{align*}
    Denote the set of parameters by $\mathcal{B}(s)=\{x\in\R^d;\;\norm{x}_0\leq s$ and $\|x\|_1$ is bounded by a constant\,$\}$. The true regression parameter $x^*\in\mathcal{B}(s)$ where $s = o( \sqrt{n}/ \log d)$. Moreover, the inverse covariance $\Omega$ has sparse rows. In particular, define
\[
s_j = \left|\{1 \leq k\leq d: k \neq j , \Omega_{j,k} \neq 0 \}\right|.
\]
We assume that $\max _j s_j \le C s$ for some constant $C$.
\label{aspt:cov}
\end{aspt}
Under Assumption \ref{aspt:cov}, we first present an $\ell_1$-bound result as a corollary of Proposition 1 in \cite{agarwal2012stochastic},
\begin{proposition}\label{prop:L1_conv}
 Under Assumption \ref{aspt:cov} and using the same algorithm parameters as Proposition 1 in \cite{agarwal2012stochastic}, there exists a constant $c_0$, such that $\widehat{x}_n$ in Algorithm \ref{algo:high_CI} satisfies
$
\|\widehat{x}_n-x^*\|_1 \leq c_0 s  \sqrt{\frac{\log d}{n}}
$
uniformly in $x^*\in\mathcal{B}(s)$ with high probability. Further, for each $j= \{ 1,\ldots, d\}$, we have
  \begin{eqnarray*}
    \|\widehat{\gamma}^j + \Omega_{j,j}^{-1} (\Omega_{j,-j})^\T \|_1 \leq c_0} s_j \sqrt{\frac{\log d}{n}
  \end{eqnarray*}
 holds with high probability.
\end{proposition}

The proof of Proposition \ref{prop:L1_conv} is provided in Appendix \ref{subsec:L1_conv}. {Let $\mathbb{P}_{x^*}$ be the distribution under the high-dimensional linear model $b_i=a_i^\top x^*+\epsilon_i$. }Given Proposition \ref{prop:L1_conv}, we state the inference result in the next theorem. We note that although the statement of the following theorem is similar to Theorem 2.2 and Corollary 2.1 in \cite{vandegeer14asy}, the proof is more technically involved. The main challenge is that the existing analysis in \cite{vandegeer14asy} starts from the  KKT condition of the deterministic optimization for estimating $\Omega$. However, we estimate $\Omega$ using the stochastic optimization and thus the corresponding KKT condition no longer holds. Please refer to the proof in Appendix \ref{supp:high_inf} for more detail.

\begin{theorem}\label{thm:high_inf}
Under Assumption \ref{aspt:cov}, for suitable choices of $\lambda \asymp \sqrt{\log d/n}$ and $\lambda_j \asymp \sqrt{\log d/n}$, we have for all $j \in \{1,\ldots, d\}$ and all $z \in \mathbb{R}$,
\label{cor:uniform}
\[
\sup\limits_{x^*\in\mathcal{B}(s)}\left|\mathbb{P}_{x^*}\Bigg(\frac{\sqrt{n}(\widehat{x}^d_j- x_j^*)}{\sigma\sqrt{(\widehat{\Omega} \widehat{A} \widehat{\Omega}^\T)_{jj}}}\leq z\Bigg)-\Phi(z)\right|=o_{p}(1),
\]
where $\widehat{x}^d$ is the debiased estimator defined in \eqref{eq:debiasedLasso}, $\widehat{\Omega}$ is defined in \eqref{eq:M} and the sample covariance matrix $\widehat{A}= \frac{1}{n} D^\top  D$.
\end{theorem}

Theorem \ref{thm:high_inf} shows that $\frac{1}{\sigma\sqrt{(\widehat{\Omega} \widehat{A} \widehat{\Omega}^\top )_{jj}}}\sqrt{n} (\widehat{x}^d_j- x_j^*) $ converges in distribution to $N(0,1)$ uniformly for any $x^*\in\mathcal{B}(s)$ and $j\in\{1,2,\dots,d\}$, which verifies the correctness of the asymptotic pointwise confidence interval in the output of Algorithm \ref{algo:high_CI} for $x_j^*$.

Given the uniform convergence result in Theorem \ref{thm:high_inf}, we can construct $p$-values for each single component, and further conduct multiple testing based on component-wise $p$-values. We also note that a similar uniform convergence result has been established in \cite{ning2017general} for a score test approach (see Remark 4.6). It is also interesting to investigate the stochastic optimization based score test as a future work.

\section{Numerical Simulations}
\label{sec:exp}
In this section, we investigate the empirical performance of the plug-in estimator and batch-means estimator of the asymptotic covariance matrix. We consider both linear and logistic regression models, where $\{a_i,b_i\}$ are \emph{i.i.d.} samples with $a_i \sim \cN(0, \Sigma)$ and $x^*$ is the true parameter vector of the model. For both models, we consider three different structures of the $d \times d$ covariance matric $\Sigma$:
\begin{itemize}
\setlength\itemsep{2pt}
\item Identity:\qquad $\Sigma=I_d$;
\item Toeplitz:\qquad $\Sigma_{i,j}=r^{|i-j|}$;
\item Equi Corr: \quad $\Sigma_{i,j}=r$ for all $i\neq j$,\quad $\Sigma_{i,i}=1$ for all $i$.
\end{itemize}
We report $r=0.5$ for Toeplitz and $r=0.2$ for equicorrelation (Equi Corr) covariance matrices in the main paper. The experimental results on other settings of $r$ are relegated to Appendix \ref{sec:sim_supp} due to space limitations.  The noise $\varepsilon_n$ in linear regression is set to \emph{i.i.d.} $N(0,\sigma^2)$ with $\sigma=1$. The parameter $\alpha$ in the step size is chosen to be $0.501$ (slightly larger than 0.5). All the reported results are obtained by taking the average of $500$ independent runs.
We consider the finite sample behavior of the plug-in estimator and the batch-means estimator for the inference of each individual regression coefficient $x_j$, $j\in\{1,2,\dots,d\}$.

\subsection{Low-dimensional cases}
In each case, we consider the sample size $n=10^5$ and the dimension $d=5$, $20$, $100$, $200$. For each model, the corresponding parameter $x^*$ is a $d$-dimensional vector linearly spaced between $0$ and $1$. The thresholding scheme is not used for the plug-in estimator. In fact, we observe that the obtained $A_n$ is always invertible and the results are stable without the thresholding. For the batch-means estimator (BM in short), we consider  three different choices of the number of batches: $M=n^{0.2}$, $M=n^{0.25}$, and $M=n^{0.3}$. Note that $\alpha=0.501$. As we suggested in Corollary \ref{cor:batch}, to achieve a better convergence rate, the number of batches $M$ is chosen around the optimal value $n^{\frac{1-\alpha}{2}} \approx n^{0.25}$.

We set the nominal coverage probability $1-q$ to 95\%. The performance of an estimator is measured by the average coverage rate (Cov Rate) of the confidence intervals and the average length  (Avg Len) of the intervals for each individual coefficient.

For each setting, we also report the oracle length of the confidence interval with respect to the true covariance matrix $A^{-1} S A^{-1}$ and the corresponding coverage rate when using the same center as the BM.

For linear regression, the asymptotic covariance is $A^{-1} S A^{-1} = \sigma^2 \Sigma^{-1}= \Sigma^{-1}$ and the oracle interval length for each coordinate $j$ will be $\frac{2 z_{q/2} (\Sigma^{-1/2})_{jj}}{\sqrt{n}}$.
Table \ref{table:linear_CI} shows the empirical performance of the plug-in and BM under linear models with three different design covariance matrices.

\begin{table}[!t]
\caption{Linear Regression: The average coverage rate and length of confidence intervals, for the nominal coverage
probability  $95\%$.  The columns (BM: $n^c$ for $c$=0.2, 0.25, and 0.3) correspond to the batch-means estimator with $M=n^c$ number of batches. Cov Rate under ``Oracle'' refers to coverage rates when using the same center as BM but with oracle interval lengths. Standard errors are reported in the brackets.}
\begin{tabular}{llrrrrr}
\hline
 & $d$ & Plug-in & \multicolumn{3}{c}{BM}& Oracle \\
&&&$M=n^{0.2}$ &  $M=n^{0.25}$ & $M=n^{0.3}$ &\\
\hline
Identity $\Sigma$\\
Cov Rate (\%)&5&95.68(0.87)&90.28(0.46)&93.68(0.79)&91.64(0.79)&87.44\\
Avg Len ($\times 10^{-2}$)&&1.49(0.01)&1.39(0.01)&1.47(0.01)&1.43(0.01)&1.24 \\
Cov Rate (\%)&20&94.99(0.94)&91.30(1.08)&93.92(1.25)&92.95(1.19)&88.24\\
Avg Len ($\times 10^{-2}$)&&1.44(0.01)&1.35(0.01)&1.41(0.01)&1.38(0.01)&1.24 \\
Cov Rate (\%)&100&95.04(1.01)&90.75(1.36)&93.15(1.12)&92.37(1.10)&87.89\\
Avg Len ($\times 10^{-2}$)&&1.41(0.01)&1.32(0.01)&1.35(0.01)&1.35(0.01)&1.24 \\
Cov Rate (\%)&200&94.75(1.13)&90.49(1.21)&92.97(1.17)&91.97(1.18)&88.12\\
Avg Len ($\times 10^{-2}$)&&1.39(0.01)&1.30(0.01)&1.31(0.01)&1.32(0.01)&1.24 \\
\hline
Toeplitz $\Sigma$\\
Cov Rate (\%)&5&95.24(0.92)&91.16(0.50)&94.28(0.86)&93.04(0.90)&88.31\\
Avg Len ($\times 10^{-2}$)&&1.83(0.10)&1.74(0.10)&1.82(0.11)&1.78(0.12)&1.53 \\
Cov Rate (\%)&20&94.84(0.97)&90.97(1.08)&93.75(0.93)&92.77(0.81)&87.26\\
Avg Len ($\times 10^{-2}$)&&1.81(0.05)&1.71(0.06)&1.78(0.06)&1.76(0.06)&1.58 \\
Cov Rate (\%)&100&95.01(1.12)&90.36(1.33)&91.83(1.09)&91.52(1.17)&89.11\\
Avg Len ($\times 10^{-2}$)&&1.77(0.02)&1.67(0.03)&1.67(0.03)&1.69(0.02)&1.60 \\
Cov Rate (\%)&200&94.69(1.33)&90.01(1.41)&91.65(1.36)&91.24(1.41)&89.43\\
Avg Len ($\times 10^{-2}$)&&1.74(0.02)&1.62(0.02)&1.62(0.02)&1.62(0.02)&1.60 \\
\hline
Equi Corr $\Sigma$\\
Cov Rate (\%)&5&94.80(0.88)&90.92(1.09)&93.60(0.92)&92.32(0.68)&86.79\\
Avg Len ($\times 10^{-2}$)&&1.60(0.01)&1.46(0.01)&1.55(0.01)&1.52(0.01)&1.31 \\
Cov Rate (\%)&20&95.10(0.99)&91.15(1.14)&93.66(0.99)&92.78(0.92)&88.04\\
Avg Len ($\times 10^{-2}$)&&1.59(0.01)&1.47(0.01)&1.54(0.01)&1.51(0.01)&1.36 \\
Cov Rate (\%)&100&94.93(1.06)&90.86(1.26)&93.19(1.15)&92.29(1.10)&87.15\\
Avg Len ($\times 10^{-2}$)&&1.56(0.01)&1.47(0.01)&1.52(0.01)&1.50(0.01)&1.38 \\
Cov Rate (\%)&200&94.49(1.09)&90.57(1.45)&92.45(1.27)&91.91(1.13)&87.22\\
Avg Len ($\times 10^{-2}$)&&1.51(0.01)&1.45(0.01)&1.49(0.01)&1.49(0.01)&1.38 \\
\hline
\end{tabular}
\label{table:linear_CI}
\end{table}

From Table \ref{table:linear_CI}, both the plug-in and BM achieve good performance. The plug-in gives better average coverage rate than BM: the average coverage rates in all different settings are nearly 95\%. However, the average length of  plug-in is usually larger than that of BM and the corresponding oracle interval length. On the other hand, BM achieves about 92\% coverage rate when $M=n^{0.25}$ or $M=n^{0.3}$.
We further consider the logistic regression. To provide an oracle interval length based on the true  asymptotic covariance $A^{-1}SA^{-1}=A^{-1}$, we estimate $A$ in \eqref{eq:logA} by its empirical version $\widehat{A}$ using one million fresh samples and the oracle interval length of each coordinate $j$ is computed as $\frac{2z_{q/2}(\widehat{A}^{-1/2})_{jj}}{\sqrt{n}}$. We provide the result in Table \ref{table:log_CI} for different design covariance matrices. From Table \ref{table:log_CI}, the plug-in still achieves nearly 95\% average coverage rate. The BM achieves about 90\% coverage rate and the average length is usually smaller than the oracle length. Moreover, as $d$ becomes larger, the interval lengths for both estimators increase. Finally, the performance of BM is insensitive to the choice of the number of batches $M$: different $M$'s lead to comparable coverage rates. There are two reasons for the undercoverage of BM. First, the obtained center could deviate from $x^*$ that introduces the bias. Second, the BM has a slower convergence rate as compared to the plug-in (especially for the case of logistic regression). However, since the BM only uses the iterates from SGD,  it is computationally more efficient than the plug-in estimator which requires the computation of the Hessian matrix $\tilde{A}_n$ and its inverse.

\begin{table}[!t]
\centering
\caption{{Logistic Regression: The average coverage rate and length of confidence intervals, for the nominal coverage probability  $95\%$.  The columns (BM: $n^c$ for $c$=0.2, 0.25, and 0.3) correspond to the batch-means estimator with $M=n^c$ number of batches. Cov Rate under ``Oracle'' refers to coverage rates when using the same center as BM but with oracle interval lengths. Standard errors are reported in the brackets.}}
\begin{tabular}{llrrrrr}
\hline
& $d$ & Plug-in & \multicolumn{3}{c}{BM}& Oracle \\
&&&$M=n^{0.2}$ &  $M=n^{0.25}$ & $M=n^{0.3}$ &\\
\hline
 Identity  $\Sigma$\\
Cov Rate (\%) & 5 & 95.04(1.13)& 89.24(1.55) &90.12(1.70)&89.36(1.97)&91.45\\
Avg Len ($\times 10^{-2}$) &  & 3.24(0.41) & 3.01(0.26)&2.94(0.25)& 2.87(0.23)&3.09\\
Cov Rate (\%) & 20 &95.00(1.34) & 89.35(2.00)& 90.22(1.67) &89.74(2.11)&90.37\\
Avg Len ($\times 10^{-2}$) & &3.79(0.27) &3.53(0.25)&3.46(0.23)&3.42(0.22)&3.68   \\
Cov Rate (\%) & 100 &94.69(1.06) & 89.42(1.66) &90.84(1.68) & 90.41(2.01)&91.24\\
Avg Len ($\times 10^{-2}$) &   &5.21(0.26)&4.97(0.24)&4.87(0.23)&4.80(0.24)&5.06\\
Cov Rate (\%) & 200 &94.47(0.91) & 89.01(1.41) &90.47(1.49) & 90.36(1.74)&92.08\\
Avg Len ($\times 10^{-2}$) &   &6.05(0.29)&5.94(0.26)&5.82(0.27)&5.71(0.25)&5.97\\
\hline
Toeplitz $\Sigma$  \\
Cov Rate (\%) & 5 & 94.96(1.58) & 88.96(2.32) & 90.56(2.06) & 90.12(2.04) &92.41\\
Avg Len ($\times 10^{-2}$) &  &4.06(0.34) & 3.75(0.28) & 3.73(0.27) & 3.61(0.25) &4.04\\
Cov Rate (\%) & 20 & 95.17(1.23) & 89.01(1.93) & 90.39(1.88) & 89.79(1.81)&91.07\\
Avg Len ($\times 10^{-2}$) &  & 5.74(0.29) & 5.57(0.25)  &5.22(0.23)  & 4.95(0.22)&5.59\\
Cov Rate (\%) & 100 &94.91(0.89) & 89.91(1.74)& 90.83(1.81) & 90.54(1.97)&91.47\\
Avg Len ($\times 10^{-2}$) &  &8.47(0.37) & 8.01(0.28)  & 7.71(0.26)  & 7.37(0.25)&8.28\\
Cov Rate (\%) & 200 &94.59(1.04) & 89.72(1.81)& 90.74(1.93) & 90.32(2.02)&92.29\\
Avg Len ($\times 10^{-2}$) &  &9.81(0.41) & 9.24(0.34) & 8.95(0.31)  & 8.78(0.29)&9.84  \\
\hline
 Equi Corr $\Sigma$ \\
Cov Rate (\%) & 5 &94.80(1.66) & 88.08(1.46) & 88.64(1.73) & 89.48(1.51)&93.79 \\
Avg Len ($\times 10^{-2}$) &  & 3.43(0.35)&3.28(0.28) & 3.24(0.25)& 3.20(0.24) &3.38\\  
Cov Rate (\%) & 20 & 94.54(1.73) & 89.27(1.33) & 90.64(1.60) & 90.31(2.10)&92.50\\
Avg Len ($\times 10^{-2}$) &  &5.37(0.31) &  4.84(0.26)  & 4.77(0.24) & 4.51(0.21)&5.19 \\
Cov Rate (\%) & 100 & 94.79(1.08) & 89.01(1.70) & 90.27(1.76) & 89.42(2.01)&94.92\\
Avg Len ($\times 10^{-2}$) &  &10.24(0.51) & 10.17(0.47)  & 9.75(0.42)  & 9.24(0.40)&10.89 \\
Cov Rate (\%) & 200 & 94.24(1.09) & 89.13(1.44) & 90.01(1.92) & 89.23(1.79)&92.40\\
Avg Len ($\times 10^{-2}$) &  &15.70(0.62) & 14.82(0.57)  & 14.01(0.55)  &13.88(0.52) &15.31 \\
\hline
\end{tabular}
\label{table:log_CI}
\end{table}

\subsection{High-dimensional cases}

In a high-dimensional setting, we consider the sample size $n=100$, and the dimension $d=500$. The active set $S_0=\{1,2,\dots,s_0\}$, where the cardinality $s_0=|S_0|=3$ or $15$. The non-zero regression coefficients $\{x_j^*\}_{j \in S_0}$ are from a fixed realization of $s_0$ \emph{i.i.d.} uniform distribution $U[0,c]$ with $c=2$. 

First, we consider the average coverage rate and the average length of the intervals for individual coefficients corresponding to the variables in either $S_0$ or $S_0^c$ where $S_0^c=\{1,\ldots,d\}\backslash S_0$.  Again, we set the nominal coverage probability $1-q$ to 95\%.

Our experimental setup follows directly from \cite{vandegeer14asy}, and we provide the oracle length of the confidence intervals for comparison. For linear regression, the asymptotic covariance is $A^{-1} S A^{-1} = \sigma^2 \Sigma^{-1}= \Sigma^{-1}$ and the oracle interval length for each coordinate $j$ will be $\frac{2 z_{q/2} (\Sigma^{-1/2})_{jj}}{\sqrt{n}}$. We provide the result in Table \ref{table:high_dim} for different design covariance matrices.  

\begin{table}[t!]
\centering
\caption{{High-dimensional linear regression, the average coverage rate and length of confidence intervals, for  the nominal coverage probability  $95\%$. Standard errors are reported in the brackets. 
}}
\begin{tabular}{llll}
\hline
Measure & Identity $\Sigma$ & Toeplitz  $\Sigma$ &Equi Corr $\Sigma$\\
\hline
$S_0=\{1,2,3\}$\\
Cov Rate  $S_0 (\%)$&91.93 (3.13)&91.40 (2.39)&90.20 (1.38)\\
Avg Len $S_0$&0.387 (0.002)&0.401 (0.019)&0.360 (0.014)\\
Cov Rate   $S_0^c (\%) $&90.80 (1.79)&90.21 (1.98)&89.73 (1.96)\\
Avg Len $S_0^c$&0.386 (0.002)&0.417 (0.022)&0.384 (0.023)\\
Oracle  Len &0.392&0.506&0.438\\
\hline
$S_0=\{1,2,\dots, 15\}$\\
Cov Rate  $S_0$ (\%)&90.48 (1.73)&89.84 (2.61)&89.45 (0.87)\\
Avg Len $S_0$&0.379 (0.002)&0.430 (0.024)&0.384 (0.020)\\
Cov Rate   $S_0^c$ (\%)&88.43 (2.30)&86.79 (2.10)&87.12 (1.36)\\
Avg Len $S_0^c$&0.360 (0.003)&0.425 (0.024)&0.383 (0.022)\\
Oracle Len &0.392&0.506&0.438\\
\hline
\end{tabular}
\label{table:high_dim}
\end{table}

\begin{table}[t!]
\centering
\caption{{High-dimensional linear regression using node-wise lasso instead of RADAR for inference, the average coverage rate and length of confidence intervals, for the nominal coverage probability  $95\%$. Standard errors are reported in the brackets.}}
\begin{tabular}{llll}
\hline
Measure & Identity $\Sigma$ & Toeplitz  $\Sigma$ &Equi Corr $\Sigma$\\
\hline
$S_0=\{1,2,3\}$\\
Cov Rate  $S_0 (\%)$&94.73 (1.42) &92.60 (1.95) &91.33 (1.99)\\
Avg Len $S_0$&0.393 (0.001) &0.472 (0.011) &0.431 (0.007)\\
Cov Rate   $S_0^c (\%) $&96.13 (1.44)&95.17 (1.71) &95.92 (2.04)\\
Avg Len $S_0^c$&0.386 (0.001)&0.481 (0.014) &0.429 (0.011)\\
Oracle  Len &0.392&0.506&0.438\\
\hline
$S_0=\{1,2,\dots, 15\}$\\
Cov Rate  $S_0$ (\%)&91.80 (1.04)&91.07 (2.01)&90.33 (1.92)\\
Avg Len $S_0$&0.399 (0.001)&0.512 (0.015)&0.424 (0.008)\\
Cov Rate   $S_0^c$ (\%)&95.75 (1.80)&93.17 (1.90) &94.11 (1.73)\\
Avg Len $S_0^c$&0.379 (0.001) &0.505 (0.016)&0.453 (0.008)\\
Oracle Len &0.392&0.506&0.438\\
\hline
\end{tabular}
\label{table:high_dim_compare}
\end{table}

For high-dimensional linear regression, Algorithm \ref{algo:high_CI} achieves good performance, especially in sparse settings ($s_0=3$).  From Table \ref{table:high_dim}, the average coverage rate is about 90\%. For less sparse problems ($s_0=15$), our method still achieves about 88\% average coverage rate for different design covariance matrices. The coverage rates of the obtained confidence intervals on active sets $S_0$ are slightly better than those on $S_0^c$. The average lengths on both sets are slightly smaller than the oracle lengths. The performance of the cases with identity design matrices are better than those with Toeplitz and equicorrelation design matrices (e.g., having smaller standard deviations). It is reasonable since it is easier to estimate the inverse covariance matrix $\Omega$ when it is an identity matrix.

In Table \ref{table:high_dim_compare}, we also provide the results using the deterministic optimization (instead of the stochastic RADAR) for constructing $\widehat{\Omega}$ \cite{vandegeer14asy}. Both methods achieve comparably reliable coverage rates. From Table \ref{table:high_dim_compare}, the average coverage rates are closer to the nominal levels, better than those in Table \ref{table:high_dim}. The undercovering in Table \ref{table:high_dim} is due to the estimation error of the diagonals of $\hat{\Omega}\hat{A}\hat{\Omega}$ using stochastic optimization method. Based on the computational and storage requirements, a practitioner may decide to use a one-pass algorithm or a more accurate estimator under deterministic optimization.

\section{Conclusions and Future Works}
\label{sec:discuss}
This paper presents two consistent estimators of the asymptotic variance of the average iterate from SGD, especially a computationally more efficient batch-means estimator that only uses iterates from SGD. With the proposed estimators, we are able to construct asymptotically exact confidence intervals and hypothesis tests.

We further discuss statistical inference based on SGD for high-dimensional linear regression. An extension to generalized linear models is an interesting problem for future work.

The seminal work by \cite{toulis2014implicit} develops the averaged implicit SGD procedure and provides the characterization of the limiting distribution. It would be interesting to establish the consistency of  the batch-means estimator based on iterates from  implicit SGD. It is also interesting to relax the current assumptions and consider SGD for more challenging optimization problems (e.g., non-convex problems).

\section*{Acknowledgements}
The authors would like to thank John Duchi, Jessica Hwang, Lester Mackey, Yuekai Sun, and Jonathan Taylor for early discussions on Markov process and the relationship to stochastic gradient. The authors would also like to thank Wei Biao Wu and Wanrong Zhu for the discussion on the martingale difference sequence, and Selina Carter for correcting a typo regarding the construction of the confidence intervals. 

\renewcommand\thesection{\Alph{section}}

\setcounter{equation}{50}
\setcounter{section}{0}
\section{Verifying assumptions for two examples}
\label{sec:verify}
In this section, we verify Assumptions \ref{aspt:convexity}, \ref{aspt:martingale} and \ref{aspt:third} on Examples \ref{exp:linear} and \ref{exp:logistic}.

\begin{lem}
  \label{lem:example1}
  In Example \ref{exp:linear}, suppose $A=\E a_na_n^\top $ is a positive definite matrix. Assumptions \ref{aspt:convexity} and \ref{aspt:martingale} hold for  $0<\mu = \lambda_{\min}(A)\leq \lambda_{\max}(A) = L_F$. To track the dependence on dimension $d$, we consider the case where $a_n\sim \mathcal{N}(0, I_d), \varepsilon\sim \mathcal{N}(0,1)$, then Assumptions \ref{aspt:convexity}, \ref{aspt:martingale} and  \ref{aspt:third} hold with
  \begin{align*}
  L_F=1,\quad \mathrm{tr}(S)=O(d),\quad \Sigma_1=0, \quad \Sigma_2=O(d^2),\\
  \Sigma_3=O(d^2),\quad \Sigma_4=O(d^4),\quad L_2=0,\quad L_4=O(d^2).
  \end{align*}
  Therefore, the dimension constant $\dcon=O(d)$.
\end{lem}
\begin{proof}
  It is easy to see that $\nabla F(x)=A(x-x^*)$ and $\nabla^2 F(x)=A$ for all $x$.
  $
  S:=\mathbb{E}\left([\nabla f(x^*,\zeta)][\nabla f(x^*,\zeta)]^\top  \right)=\mathbb{E}\varepsilon^2_n a_na_n^\top .
  $
  Therefore, Assumption \ref{aspt:convexity} holds for  $0<\mu = \lambda_{\min}(A)\leq \lambda_{\max}(A) = L_F$.
  For Assumption \ref{aspt:martingale}, the sequence $\xi_n=(A-a_na_n^\top )\Delta_{n-1}+a_n\varepsilon_n$. First, we notice it is indeed a martingale sequence, since $\mathbb{E}_{n-1}(A-a_n a_n^\top )=0$ and $\mathbb{E}_{n-1}\varepsilon_n=0$. Second, by $S=\mathbb{E}\varepsilon^2_n a_na_n^\top $,
  \[
  \Sigma(\Delta_{n-1})=\mathbb{E}_{n-1}\xi_n\xi_n^\top -S=\mathbb{E}_{n-1}[(a_n a_n^\top -A) \Delta_{n-1}\Delta_{n-1}^\top  (a_n a_n^\top -A)].
  \]
  Assumption \ref{aspt:martingale} holds because
  \[
  \opnorm{\Sigma(\Delta)}\leq|\mathrm{tr}(\Sigma(\Delta))|\leq \Ltwo{\Delta}^2\mathbb{E}_{n-1}\opnorm{(a_n a_n^\top -A)}^2.
  \]
  and by  H\"{o}lder's inequality:
  \begin{align*}
  & \mathbb{E}_{n-1}\Ltwo{(A-a_n a_n^\top )\Delta_{n-1}+a_n\varepsilon_n}^4 \\
  \leq \;\; & 8\Ltwo{\Delta_{n-1}}^4 \mathbb{E}_{n-1}\opnorm{A-a_n a_n^\top }^4+ 8 \mathbb{E}_{n-1}(\varepsilon^4_n\Ltwo{a_n}^4).
  \end{align*}
  
  To track the dependence on dimension $d$, we consider the case where $a_n\sim \mathcal{N}(0, I_d), \varepsilon\sim \mathcal{N}(0,1)$.
  Lemma \ref{lem:example1} shows that the dimension constant can be chosen as $\dcon=d$.
  
  To track the dependence on dimension $d$, note that $\nabla^2 f(x,\zeta_n)=a_n a_n^\top $, so $\nabla^2 F(x)=I_d$, and $L_F=1$. Next note that $S=\mathbb{E}\varepsilon^2_n a_na_n^\top =\text{var}(\varepsilon_n) I_d$, so $\mathrm{tr}(S)=O(d)$. Moreover $\Sigma_1=0$, while $A=I_d$, so
  \[
  \Sigma_2=\E \opnorm{(a_n a_n^\top -A)}^2=\E (\Ltwo{a_n}^2 -1)^2=O(d^2),\quad \Sigma_3=8 \mathbb{E}(\varepsilon^4_n\Ltwo{a_n}^4)=O(d^2),
  \]
  and
  \[
  \Sigma_3=8 \mathbb{E}(\varepsilon^4_n\Ltwo{a_n}^4)=O(d^2),\quad \Sigma_4=64\E\opnorm{A-a_n a_n^\top }^4\asymp \E(\|a_n\|_2^2-1)^4=O(d^4)
  \]
  And for the Assumption \ref{aspt:third}, because $\nabla^2 f(x,\zeta_n)=a_n a_n^\top $, therefore $L_2=0$, and $L_4= \| \E (a_n a_n^\top )^2-I_d\|\leq \E \|a_n\|_2^4+1=O(d^2)$.
\end{proof}

\begin{lem}
  \label{lem:example2}
  In Example \ref{exp:logistic}, suppose $\Ltwo{a_n}$ has bounded eighth moment. Furthermore, assuming $a_n$ has a strictly positive density with respect to Lebesgue measure and that the iterates are bounded. Assumptions \ref{aspt:convexity} and \ref{aspt:martingale} hold. To track the dependence on dimension $d$, we consider the case where $a_n\sim \mathcal{N}(0, I_d)$, then Assumptions \ref{aspt:convexity}, \ref{aspt:martingale} and  \ref{aspt:third} hold with
  \begin{align*}
  L_F=O(d),\quad \mathrm{tr}(S)=O(d),\quad \Sigma_1=O(d^\frac32),\quad \Sigma_2=O(d^2),\\
  \Sigma_3=O(d^2),\quad\Sigma_4=O(d^4),\quad L_2=O(d^{\frac32}),\quad L_4=O(d^2).
  \end{align*}
  Therefore, the dimension constant $\dcon=O(d)$.
\end{lem}
\begin{proof}
  
  Using the fact that $|\varphi'(x)| \leq \frac{1}{4}$ for any $x$, $\nabla F(x) $ is Lipschitz continuous with the Lipschitz constant $L_F= \frac{1}{4} \E \|a_n\|_2^2$.
  Moreover,
  \begin{equation}\label{eq:logA}
  A=\nabla^2 F(x^*)=\mathbb{E}\frac{a_n a_n^\top }{[1+\exp(\langle a_n, x^* \rangle)][1+\exp(-\langle a_n, x^* \rangle)]},
  \end{equation}
  is positive semi-definite. Under assumption that $a_n$ has a strictly positive density with respect to Lebesgue measure, and that the iterates are bounded, Lemma \ref{lem:nondeg} below guarantees that $\nabla^2 F(x)$ has strictly positive minimum eigenvalue.
  
  For the martingale difference assumption in Assumption \ref{aspt:martingale}, note that
  \begin{equation*}
  \label{eqn:logHess}
  \nabla^2 f(x,\zeta_n)=\frac{a_n a_n^\top }{[1+\exp(\langle a_n, x \rangle)][1+\exp(-\langle a_n,x\rangle)]},
  \end{equation*}
  which implies that $\opnorm{\nabla^2 f(x,\zeta_n)}\leq \Ltwo{a_n}^2$.  As long as $\Ltwo{a_n}$ has bounded eighth moment, Lemma \ref{lem:simplegeneral} applies, which establishes Assumption \ref{aspt:martingale}.
  
  To track the dependence on dimension $d$, recall that
  \[
  \nabla f(x,\zeta_n)=\frac{-b_n a_n}{1+\exp(-b_n\langle x, a_n\rangle)},
  \]
  \[
  \nabla^2 f(x,\zeta_n)=\frac{a_n a_n^\top }{[1+\exp(\langle a_n, x \rangle)][1+\exp(-\langle a_n,x\rangle)]}.
  \]
  So $H(\zeta_n)=\|\nabla^2 f(x,\zeta_n)\|\leq \|a_n\|^2_2$. This leads to  $L_F\leq  \E H(\zeta_n)\leq \E \|a_n\|_2^2=O(d)$, and for any integer $m$, $\E H(\zeta)^m=O(d^m)$. Further more, for any integer $m$,
  \[
  \E \|\nabla f(x^*,\zeta)\|^m\leq \E \|a_n\|^m=O(d^{\frac{m}{2}}).
  \]
  So by Lemma \ref{lem:simplegeneral}
  \[
  \mathrm{tr}(S)=O(d),\quad \Sigma_1=O(d^\frac32),\quad \Sigma_2=O(d^2),\quad \Sigma_3=O(d^2),\quad\Sigma_4=O(d^4).
  \]
  And for  Assumption \ref{aspt:third}, we check the Frechet derivative of $\nabla^2 f$. Given a uninorm vector $v\in \mathbb{R}^d$,
  \[
  \lim_\epsilon \frac1\epsilon\nabla^2 f(x+\epsilon v,\zeta_n)- \nabla^2 f(x^*,\zeta_n)=a_n a_n^\top \frac{\langle a_n, v\rangle( \exp(\langle a_n, x \rangle)-\exp(2\langle a_n, x \rangle)]}{[1+\exp(\langle a_n, x \rangle)]^3}.
  \]
  It is bounded by $\|a_n\|^3$, so $L_2=O(d^\frac32)$. Then $\nabla^2 f(x,\zeta_n)\preceq \|a_n\|^2I_d$, so $L_4\lesssim O(d^2)$.
\end{proof}

\begin{lem}
  \label{lem:nondeg}  
  Let $a$ be a random variable  on $\R^d$ with strictly positive density with respect to the Lebesgue measure, and $f$ be a continuous strictly positive function on $\R^d$. Then
  \[
  A:=\E f(a)a a^\top 
  \]
  is a PSD matrix with the minimum eigenvalue strictly above zero.
\end{lem}
\begin{proof}
  Let $x$ be a unit-norm eigenvector of $A$ associated with the minimum eigenvalue of $A$. It suffices to show that $x^\top A x>\mu_0$. Let $B$ be a neighborhood of $x$ such that for any $y\in B$, $\langle y, x\rangle>\frac{1}{2}$, and $f(y)>m$ for a constant $m>0$. Then if we denote the density of $a$ as $\mu(dy)$, we have
  \begin{align*}
  x^\top A x&=\int \mu(dy) f(y)xyy^\top  x\geq\int_B\mu(dy) f(y)xyy^\top  x\geq\frac{1}{4} \mu(B) m=:\mu_0.
  \end{align*}
\end{proof}

\section{Supplement to Section \ref{sec:setup_assump}}
\label{sec:supp_setup}
\subsection{Proof of Lemma \ref{lem:simplegeneral}}
Before we prove Lemma \ref{lem:simplegeneral}, we introduce the following Lemma \ref{lem:4th},
\label{supp:setup_assump}
\begin{lem}
  \label{lem:4th}
  The following holds for any vectors:
  \[
  n^3(\Ltwo{x_1}^4+\ldots+\Ltwo{x_n}^4)\geq \Ltwo{x_1+\ldots+x_n}^4
  \]
\end{lem}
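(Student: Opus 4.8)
The plan is to reduce the vector inequality to a scalar inequality and then invoke convexity. First I would apply the triangle inequality for the Euclidean norm, which gives
\[
\Ltwo{x_1+\cdots+x_n}\le \Ltwo{x_1}+\cdots+\Ltwo{x_n}.
\]
Raising both sides to the fourth power, it suffices to prove the scalar statement
\[
(a_1+\cdots+a_n)^4 \le n^3\left(a_1^4+\cdots+a_n^4\right)
\]
for the nonnegative reals $a_i:=\Ltwo{x_i}\ge 0$. This reduction removes all vector structure from the problem, so the remaining work is a one-dimensional inequality about nonnegative numbers.

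The scalar statement is precisely the power-mean (Jensen) inequality for the convex function $t\mapsto t^4$ on $[0,\infty)$. Applying convexity to the average $\tfrac1n\sum_{i=1}^n a_i$ yields $\big(\tfrac1n\sum_{i=1}^n a_i\big)^4 \le \tfrac1n\sum_{i=1}^n a_i^4$, and multiplying through by $n^4$ gives the desired bound. Equivalently, I could apply the Cauchy--Schwarz inequality twice: first $(\sum_i a_i)^2\le n\sum_i a_i^2$, and then $(\sum_i a_i^2)^2\le n\sum_i a_i^4$, so that $(\sum_i a_i)^4 = \big((\sum_i a_i)^2\big)^2 \le n^2\big(\sum_i a_i^2\big)^2 \le n^3\sum_i a_i^4$. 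Either route is a couple of lines.

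There is no real obstacle here: the statement is elementary, and the only point that requires a moment's care is the initial reduction, namely that passing from the vector sum to the sum of norms via the triangle inequality is valid precisely because both sides are nonnegative before being raised to the fourth power (so monotonicity of $t\mapsto t^4$ on $[0,\infty)$ applies). Everything after that is a direct consequence of the convexity of $t\mapsto t^4$.
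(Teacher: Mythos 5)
Your proof is correct and essentially matches the paper's: the paper also combines the triangle inequality $\Ltwo{x_1+\cdots+x_n}\le \Ltwo{x_1}+\cdots+\Ltwo{x_n}$ with two applications of Cauchy--Schwarz to get $(\sum_i a_i)^4\le n^3\sum_i a_i^4$, which is exactly your second route (and your Jensen/power-mean route is an equivalent phrasing of the same scalar inequality).
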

\begin{proof}
  Apply Cauchy-Schwartz inequality twice:
  \[
  n(\Ltwo{x_1}^4+\ldots+\Ltwo{x_n}^4)\geq (\Ltwo{x_1}^2+\ldots+\Ltwo{x_n}^2)^2
  \]
  and
  \[
  n(\Ltwo{x_1}^2+\ldots+\Ltwo{x_n}^2)\geq (\Ltwo{x_1}+\ldots+\Ltwo{x_n})^2.
  \]
  Note that $\Ltwo{x_1}+\ldots+\Ltwo{x_n}\geq \Ltwo{x_1+\ldots+x_n} $.
\end{proof}

\begin{replemma}{lem:simplegeneral}
  If there is a function $H(\zeta)$ with bounded fourth moment, such that the Hessian of $f(x,\zeta)$  is bounded by
  \begin{equation}\label{eq:simplegeneral}
  \opnorm{\nabla^2 f(x,\zeta)}\leq H(\zeta)
  \end{equation}
  for all $x$, and  $\nabla f(x^*,\zeta)$ have a bounded fourth moment, then Assumption \ref{aspt:martingale} holds, where
  \[
  \Sigma_1=2 \sqrt{\mathbb{E}\Ltwo{\nabla f(x^*,\zeta)}^2 \mathbb{E}H(\zeta)^2},\quad \Sigma_2=4  \mathbb{E}H(\zeta)^2.
  \]
  \[
  \Sigma_4=\max\{ 64  \mathbb{E}H(\zeta)^4 , 8\mathbb{E}\Ltwo{\nabla f(x^*,\zeta)}^4\}.
  \]
\end{replemma}
\begin{proof}
  The first item of Assumption \ref{aspt:martingale} holds by definition.
  For notational simplicity, define
  \[
  L(\Delta_{n-1},\zeta_n)=\left(\nabla F(x_{n-1})-\nabla F(x^*)\right)-\left(f(x_{n-1},\zeta_n)-\nabla f(x^*,\zeta_n)\right),
  \]
  By the definition of $\xi_n$ and notice that $\nabla F(x^*)=0$, we have,
  \begin{equation}\label{eq:xi_decomp}
  \xi_n=L(\Delta_{n-1},\zeta_n)-\nabla f(x^*,\zeta_n).
  \end{equation}
  To prove Lemma \ref{lem:simplegeneral}, we need the fourth moment bounds on both $L(\Delta_{n-1},\zeta_n)$ and $\nabla f(x^*,\zeta_n)$. By our assumption, $\nabla f(x^*,\zeta_n)$ has the bounded fourth  moment. Now we upper bound $\mathbb{E}_{n-1}\Ltwo{L(\Delta_{n-1},\zeta_n)}^m$ for $m=2$ and $m=4$.
  By the mean-value theorem and our assumption Lemma \ref{lem:simplegeneral}, for $m=2$ and  $4$, we have
  \begin{align*}
  \mathbb{E}_{n-1}\|\nabla f(x_{n-1},\zeta_n)-\nabla f(x^*,\zeta_n)\|_2^m\leq &
  \mathbb{E}_{n-1}\left[\sup_{x}\opnorm{\nabla^2 f(x,\zeta_n)}^m\right] \|\Delta_{n-1}\|_2^m
  \\
  \leq &  \|\Delta_{n-1}\|_2^m\mathbb{E}H(\zeta)^m.
  \end{align*}
  By the convexity of $\|x\|_2^m$ and  Jensen's inequality,
  \begin{align*}
  \|\nabla F(x_{n-1})-\nabla F(x^*)\|_2^m&=\|\mathbb{E}_{n-1}\left(\nabla f(x_{n-1},\zeta_n)-\nabla f(x^*,\zeta_n)\right)\|_2^m\\
  &\leq \mathbb{E}_{n-1}\|\nabla f(x_{n-1},\zeta_n)-\nabla f(x^*,\zeta_n)\|_2^m\\
  & \leq  \|\Delta_{n-1}\|_2^m\mathbb{E}H(\zeta)^m.
  \end{align*}
  By Lemma \ref{lem:4th}, for $m=2$ or $4$,
  \[
  \mathbb{E}_{n-1}\Ltwo{L(\Delta_{n-1},\zeta_n)}^m\leq   2^m\|\Delta_{n-1}\|_2^m\mathbb{E}H(\zeta)^m.
  \]
  Therefore, by the decomposition of $\xi_n$ in \eqref{eq:xi_decomp},
  \begin{align*}
  \mathbb{E}_{n-1}\xi_n\xi_n^\top &=S-\mathbb{E}_{n-1}\nabla f(x^*,\zeta_n) L(\Delta_{n-1},\zeta_{n})^\top -
  \mathbb{E}_{n-1} L(\Delta_{n-1},\zeta_n)\nabla f(x^*,\zeta_n)^\top \\
  &\quad+\mathbb{E}_{n-1} L(\Delta_{n-1},\zeta_n) L(\Delta_{n-1},\zeta_n)^\top .
  \end{align*}
  By Jensen's and Cauchy-Schwartz inequality,
  \begin{align*}
  & \; \opnorm{\mathbb{E}_{n-1}\nabla f(x^*,\zeta_n) L(\Delta_{n-1},\zeta_n)^\top } \\
  \leq & \; \mathbb{E}_{n-1} \opnorm{\nabla f(x^*,\zeta_n) L(\Delta_{n-1},\zeta_n)^\top } \\
  =    & \;   \mathbb{E}_{n-1} \|\nabla f(x^*,\zeta_n)\|_2  \|L(\Delta_{n-1},\zeta_n)^\top \|_2 \\
  \leq & \; [\mathbb{E}_{n-1}\Ltwo{\nabla f(x^*,\zeta_n)}^2 \mathbb{E}_{n-1}\Ltwo{L(\Delta_{n-1},\zeta_n)}^2]^{\frac{1}{2}} \\
  \leq & \; 2 \sqrt{\mathbb{E}\Ltwo{\nabla f(x^*,\zeta)}^2 \mathbb{E}H(\zeta)^2} \|\Delta_{n-1}\|_2
  \end{align*}
  A similar bound holds for the trace, since
  \begin{align*}
  & \; \trnorm{\mathbb{E}_{n-1}\nabla f(x^*,\zeta_n) L(\Delta_{n-1},\zeta_n)^\top } \\
  \leq & \; \mathbb{E}_{n-1} \trnorm{\nabla f(x^*,\zeta_n) L(\Delta_{n-1},\zeta_n)^\top } \\
  \leq   & \;   \mathbb{E}_{n-1} \|\nabla f(x^*,\zeta_n)\|_2  \|L(\Delta_{n-1},\zeta_n)^\top \|_2.
  \end{align*}
  For the second order term, note that
  \begin{align*}
  & \; \opnorm{\mathbb{E}_{n-1}L(\Delta_{n-1},\zeta_n) L(\Delta_{n-1},\zeta_n)^\top } \\
  \leq & \; \mathbb{E}_{n-1} \opnorm{L(\Delta_{n-1},\zeta_n) L(\Delta_{n-1},\zeta_n)^\top }
  =      \mathbb{E}_{n-1}  \|L(\Delta_{n-1},\zeta_n)\|^2_2 \\
  \leq & \; 4  \mathbb{E}H(\zeta)^2 \|\Delta_{n-1}\|_2^2.
  \end{align*}
  \begin{align*}
  & \; \trnorm{\mathbb{E}_{n-1}L(\Delta_{n-1},\zeta_n) L(\Delta_{n-1},\zeta_n)^\top } \\
  \leq & \; \mathbb{E}_{n-1} \trnorm{L(\Delta_{n-1},\zeta_n) L(\Delta_{n-1},\zeta_n)^\top }
  =      \mathbb{E}_{n-1}  \|L(\Delta_{n-1},\zeta_n)\|^2_2.
  \end{align*}
  Combining the above inequalities, we arrive at Assumption \ref{aspt:martingale} Claim 2.

  Using the decomposition $\xi_n=L(\Delta_{n-1},\zeta_n)-\nabla f(x^*,\zeta_n)$ and the fourth moment bounds on $L(\Delta_{n-1},\zeta_n)$ and $\nabla f(x^*,\zeta_n)$, applying Lemma \ref{lem:4th} we get Claim 3.
\end{proof}

\subsection{Proof of Lemma \ref{lem:Delta}}
\label{supp:Delta}

\begin{replemma}{lem:Delta}[Generalized version]
  Under Assumptions \ref{aspt:convexity} and \ref{aspt:martingale}, if the step size is chosen to be $\eta_n = \eta n^{-\alpha}$ with $\alpha \in (0,1)$, the iterates of error $\Delta_n=x_n-x^*$ satisfy the following.
  \begin{enumerate}[1)]
    \item There exist universal constants $C\asymp \dcon$ and $n_0$,
    such that for $n>m\geq n_0$, the following hold:
    \begin{align*}
    \mathbb{E}_m \|\Delta_n\|_2 & \leq \exp\left(-\tfrac{1}{4}\mu\sum_{i=m}^n \eta_i \right)\|\Delta_m\|_2+\sqrt{C}m^{-\alpha/2}, \\
    \mathbb{E}_m \|\Delta_n\|_2^2 & \leq \exp\left(-\tfrac{1}{2}\mu\sum_{i=m}^n \eta_i\right)\|\Delta_m\|_2^2+Cm^{-\alpha}, \\
    \mathbb{E}_m \|\Delta_n\|_2^4 & \leq \exp\left(-\tfrac{1}{2}\mu\sum_{i=m}^n \eta_i\right)\|\Delta_m\|_2^4+C^2m^{-2\alpha}.
    \end{align*}
    \item As a consequence,
    \begin{align*}
    \mathbb{E} \|\Delta_n\|_2 & \lesssim  n^{-\alpha/2}(\sqrt{C}+\|\Delta_{n_0}\|_2),\\
    \mathbb{E} \|\Delta_n\|_2^2 & \lesssim  n^{-\alpha}(C+\|\Delta_{n_0}\|_2^2), \\
    \mathbb{E} \|\Delta_n\|_2^4 & \lesssim n^{-2\alpha}(C^2+\|\Delta_{n_0}\|_2^4).
    \end{align*}
\end{enumerate}\end{replemma}

To prove Lemma \ref{lem:Delta}, we need a simple lemma.
\begin{lem}
  \label{lem:sequence}
  Let $z_n$ be a sequence in $\mathbb{R}^+$ that satisfies the recursion
  \[
  z_n\leq (1-\lambda\eta_n)z_{n-1}+D \eta_n^{2+\beta},
  \]
  where $D,\lambda,\beta>0$ are fixed constants, and the sequence $\eta_n$ is decreasing.  Then for any $m \leq n-1$,
  \[
  z_n\leq \exp(-\lambda(n-m)\eta_n)z_m+D\eta_m^{1+\beta} \lambda^{-1}.
  \]
\end{lem}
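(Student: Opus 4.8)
The plan is to unroll the one-step recursion into a closed form and then bound the two resulting pieces separately. Iterating the inequality $z_n \le (1-\lambda\eta_n)z_{n-1}+C\eta_n^{2+\beta}$ from index $m$ up to $n$ gives
\[
z_n \le \Big(\prod_{i=m+1}^n (1-\lambda\eta_i)\Big) z_m + C\sum_{j=m+1}^n \Big(\prod_{i=j+1}^n (1-\lambda\eta_i)\Big)\eta_j^{2+\beta},
\]
where each factor $1-\lambda\eta_i$ lies in $[0,1)$ by the hypothesis $0<\lambda\eta_i\le 1$, so all the products are nonnegative and at most one. First I would control the homogeneous term: using $1-t\le e^{-t}$ together with the fact that $\eta$ is decreasing, so $\eta_i\ge\eta_n$ for $i\le n$, we get $\prod_{i=m+1}^n(1-\lambda\eta_i)\le\exp(-\lambda\sum_{i=m+1}^n\eta_i)\le\exp(-\lambda(n-m)\eta_n)$, which is exactly the first term in the claimed bound.

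The remaining, and really the only substantive, step is to show the accumulated-noise sum contributes at most $C\eta_m^{1+\beta}\lambda^{-1}$. Here I would first peel off one power of the step size: since $\eta_j\le\eta_m$ for $j\ge m+1$, we have $\eta_j^{2+\beta}=\eta_j^{1+\beta}\eta_j\le\eta_m^{1+\beta}\eta_j$, so it suffices to bound $\sum_{j=m+1}^n P_j\,\eta_j$ by $\lambda^{-1}$, where $P_j:=\prod_{i=j+1}^n(1-\lambda\eta_i)$ with the convention $P_n=1$. The key observation is a telescoping identity: from $P_{j-1}=(1-\lambda\eta_j)P_j$ we obtain $\lambda\eta_j P_j=P_j-P_{j-1}$, hence
\[
\lambda\sum_{j=m+1}^n \eta_j P_j=\sum_{j=m+1}^n(P_j-P_{j-1})=P_n-P_m=1-P_m\le 1,
\]
using $0\le P_m\le 1$. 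Dividing by $\lambda$ gives $\sum_{j=m+1}^n\eta_j P_j\le\lambda^{-1}$, and combining the two bounds delivers the stated inequality.

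I do not expect a genuine obstacle here, since the argument is elementary once the recursion is unrolled. The one point demanding care is recognizing that the noise sum should be handled by the geometric-telescoping identity rather than by naively bounding each product by $\exp(-\lambda(n-j)\eta_n)$ and summing a geometric series, which would produce an extra constant and a weaker dependence on $\lambda$; the telescoping route yields the clean $\lambda^{-1}$ factor appearing in the statement. I would also double-check the monotonicity step, as it is the only place the assumption that $\eta$ is decreasing is used, together with the convention $P_n=1$ for the empty product, which is what makes the telescoping sum close correctly.
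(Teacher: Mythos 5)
Your proof is correct and follows essentially the same route as the paper's: unroll the recursion, bound the homogeneous product by $\exp(-\lambda(n-m)\eta_n)$ via $1-x\le e^{-x}$ and monotonicity of $\eta$, and control the noise sum by peeling off $\eta_j^{1+\beta}\le\eta_m^{1+\beta}$ and using the telescoping identity $\lambda\sum_{k=m+1}^n\prod_{j=k+1}^n(1-\lambda\eta_j)\eta_k=1-\prod_{j=m+1}^n(1-\lambda\eta_j)\le 1$. The only difference is cosmetic: you derive the telescoping identity explicitly from $P_{j-1}=(1-\lambda\eta_j)P_j$, whereas the paper states it directly.
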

\begin{proof}
  Using the recursion, we can find that
  \[
  z_n\leq \prod_{k=m+1}^n(1-\lambda \eta_k)z_m+D\sum_{k=m+1}^n\prod_{j=k+1}^n(1-\lambda\eta_j) \eta_k^2.
  \]
  Note that
  \[
  \lambda\sum_{k=m+1}^n\prod_{j=k+1}^n(1-\lambda\eta_j) \eta_k=1-\prod_{j=m+1}^n(1-\lambda\eta_j)\leq 1.
  \]
  Since $\eta_k\leq \eta_m$ for all $k\geq m+1$,
  \[
  D\sum_{k=m+1}^n\prod_{j=k+1}^n(1-\lambda\eta_j) \eta_k^{2+ \beta}
  \leq D\lambda^{-1}\eta^{1+\beta}_m\lambda\sum_{k=m+1}^n\prod_{j=k+1}^n(1-\lambda\eta_j) \eta_k
  \leq D\lambda^{-1}\eta_m^{1+\beta}.
  \]
  On the other hand, since $1-x\leq \exp(-x)$ for $x\in [0,1]$, so
  \[
  \prod_{k=m+1}^n(1-\lambda \eta_k)\leq \exp\left(-\lambda\sum_{k=m+1}^n\eta_k\right)\leq \exp(-\lambda(n-m)\eta_n).
  \]
  Putting them back to the first inequality of this proof, we obtain our claim.
\end{proof}

With Lemma \ref{lem:sequence} in place, we are ready to prove Lemma \ref{lem:Delta}.

\begin{proof}[Proof of Lemma \ref{lem:Delta}]
  From the recursion equation \eqref{eq:sgd_general}, we find that
  \[
  \Delta_n=\Delta_{n-1}-\eta_n \nabla \widetilde{F}(\Delta_{n-1})+\eta_n \xi_n,
  \]
  where the shifted averaged loss function is $\widetilde{F}(\Delta)=F(\Delta+x^*)$. Therefore
  \begin{equation}
  \label{tmps:Delta21}
  \|\Delta_n\|_2^2=\|\Delta_{n-1}\|_2^2-2\eta_n\langle \nabla \widetilde{F}(\Delta_{n-1}), \Delta_{n-1} \rangle+2\eta_n\langle \xi_n , \Delta_{n-1} \rangle+\eta_n^2\|\xi_n-\nabla \widetilde{F}(\Delta_{n-1})\|_2^2.
  \end{equation}
  The square of \eqref{tmps:Delta21} is
  \begin{align*}
  &\|\Delta_n\|_2^4\\
  =&\|\Delta_{n-1}\|_2^4-4\eta_n\langle \nabla \widetilde{F}(\Delta_{n-1}), \Delta_{n-1} \rangle \|\Delta_{n-1}\|_2^2+4\eta_n\langle \xi_n , \Delta_{n-1} \rangle\|\Delta_{n-1}\|_2^2\\
  +&4\eta_n^2 \langle \nabla \widetilde{F}(\Delta_{n-1})-\xi_n, \Delta_{n-1} \rangle^2+2\eta^2_n\|\Delta_{n-1}\|^2\|\xi_n-\nabla \widetilde{F}(\Delta_{n-1})\|_2^2\\
  -&4\eta_n^3 \langle \nabla \widetilde{F}(\Delta_{n-1})-\xi_n, \Delta_{n-1}\rangle\|\xi_n-\nabla \widetilde{F}(\Delta_{n-1})\|^2+\eta_n^4\|\xi_n-\nabla \widetilde{F}(\Delta_{n-1})\|_2^4.
  \end{align*}
  By the property of the inner product, the fourth term above is less than twice of the fifth, which can be bounded by the following using Young's inequality,
  \begin{align*}
  2\eta^2_n\|\Delta_{n-1}\|_2^2\|\xi_n-\nabla \widetilde{F}(\Delta_{n-1})\|_2^2 \leq \frac{1}{3}\mu\eta_n\|\Delta_{n-1}\|_2^4+3\mu^{-1}\eta_n^3\|\xi_n-\nabla \widetilde{F}(\Delta_{n-1})\|_2^4;
  \end{align*}
  Moreover, H\"{o}lder's inequality gives the following bound for the sixth term:
  \[
  -4\eta_n^3 \langle \nabla \widetilde{F}(\Delta_{n-1})-\xi_n, \Delta_{n-1}\rangle\|\xi_n-\nabla \widetilde{F}(\Delta_{n-1})\|_2^2\leq \eta^3_n\|\Delta_{n-1}\|_2^4+3 \eta^3_n \|\xi_n-\nabla \widetilde{F}(\Delta_{n-1})\|_2^4.
  \]
  Combining all the inequality above back to the expansion of $\Ltwo{\Delta_n}^4$, we obtain
  \begin{eqnarray}
  \|\Delta_n\|_2^4&\leq &\|\Delta_{n-1}\|_2^4-4\eta_n\langle \nabla \widetilde{F}(\Delta_{n-1}), \Delta_{n-1} \rangle \|\Delta_{n-1}\|_2^2+4\eta_n\langle \xi_n , \Delta_{n-1} \rangle\|\Delta_{n-1}\|_2^2\cr
  \label{tmp:Delta41}
  && +\mu \eta_n \|\Delta_{n-1}\|_2^4+\eta_n^3\|\Delta_{n-1}\|_2^4+\eta_n^3(9\mu^{-1}+\eta_n+3)\|\xi_n-\nabla \widetilde{F}(\Delta_{n-1})\|_2^4.
  \end{eqnarray}
  Then using strong convexity of $\widetilde{F}$,
  \[
  \langle \nabla \widetilde{F}(\Delta_{n-1}), \Delta_{n-1} \rangle\geq  \widetilde{F}(\Delta_{n-1})+\frac{\mu}{2} \|\Delta_{n-1}\|_2^2\geq \frac{\mu}{2} \|\Delta_{n-1}\|_2^2.
  \]
  By the fact that $\xi_n$ is a martingale difference,
  \[
  \mathbb{E}_{n-1}\langle \xi_n , \Delta_{n-1} \rangle\|\Delta_{n-1}\|_2^2=0.
  \]
  Applying Young's inequality to Assumption \ref{aspt:martingale} over $\xi_n$, using the Lipschitzness of $\Delta F$ in Assumption \ref{aspt:convexity}
  \begin{align*}
  \mathbb{E}_{n-1}\|\xi_n-\nabla \widetilde{F}(\Delta_{n-1})\|_2^2 & \leq 2\mathbb{E}_{n-1}\|\xi_n\|_2^2+2\|\nabla \widetilde{F}(\Delta_{n-1})\|_2^2\\ & \leq 2\tr(S)+2\Sigma_2\Ltwo{\Delta_{n-1}}^2+2\Sigma_1\Ltwo{\Delta_{n-1}}
  +2L_F^2\|\Delta_{n-1}\|_2^2\\
  & \leq 2\tr(S)+\Sigma_1^\frac23+(2\Sigma_2+\Sigma_1^\frac43)\Ltwo{\Delta_{n-1}}^2+2L_F^2\|\Delta_{n-1}\|_2^2,
  \end{align*}
  and likewise with Lemma \ref{lem:4th}
  \begin{align*}
  \mathbb{E}_{n-1}\|\xi_n-\nabla \widetilde{F}(\Delta_{n-1})\|_2^4 & \leq 4\mathbb{E}_{n-1}\|\xi_n\|_2^4+4\|\nabla \widetilde{F}(\Delta_{n-1})\|_2^4\\ & \leq 4\Sigma_3+4(\Sigma_4+L_F^4)\|\Delta_{n-1}\|_2^4.
  \end{align*}
  Replacing propers terms  in \eqref{tmps:Delta21} and \eqref{tmp:Delta41} with upper bounds above, we can find a constant $C_1\asymp\dcon$,
  \begin{equation}
  \label{tmp:Delta2}
  \mathbb{E}_{n-1}\|\Delta_n\|_2^2\leq (1-\mu\eta_n+C^2_1\eta_n^2) \|\Delta_{n-1}\|_2^2+C_1\eta_n^2.
  \end{equation}
  \begin{equation}
  \label{tmp:Delta4}
  \mathbb{E}_{n-1}\|\Delta_n\|_2^4\leq (1-\mu \eta_n+C^4_1(\eta_n^3+\eta_n^4))\|\Delta_{n-1}\|_2^4+C^2_1\eta_n^3(\eta_n+1).
  \end{equation}
  Let
  \[
  n_0=\min\left\{n: C^2_1\eta_n^2 \leq \frac{1}{2}\mu\eta_n, C^4_1\eta_n^3(\eta_n+1)\leq \frac{1}{2}\mu\eta_n, \mu\eta_n(1-\alpha)\geq 8\alpha\log n\right\},
  \]
  we have $n_0\lesssim (\dcon^2\eta)^\frac{1}{\alpha}\asymp 1$. Then for $n\geq n_0$, and a $C_2\asymp C_1$, the inequalities can be simplified by
  \begin{eqnarray*}
    \mathbb{E}_{n-1}\|\Delta_n\|_2^2 & \leq  & \left(1-\frac{1}{2}\mu \eta_n\right)\|\Delta_{n-1}\|_2^2+C_2\eta_n^2, \\
    \mathbb{E}_{n-1}\|\Delta_n\|_2^4 & \leq & \left(1-\frac{1}{2}\mu \eta_n\right)\|\Delta_{n-1}\|_2^4+C^2_2\eta_n^3.
  \end{eqnarray*}
  Then Lemma \ref{lem:sequence} produces claim 1) for the second and forth moment, while we use Cauchy-Schwartz inequality it gives the bound for the first moment, as
  \[
  \mathbb{E}_{m}\|\Delta_n\|_2\leq \sqrt{\mathbb{E}_{m} \|\Delta_n\|_2^2}.
  \]
  To see the claim 2), note that
  \[
  \max\{C^2_1\eta_i^2, C^4_1 (\eta_i^3+\eta_i^4)\}\leq C^4_1\eta_i^2(\eta_i+1)^2,
  \]
  therefore applying discrete Gronwall's inequality to \eqref{tmp:Delta2} and \eqref{tmp:Delta4}, we find that
  \[
  \mathbb{E} \|\Delta_{n_0}\|_2^2\leq C_3(1+\|\Delta_0\|_2^2),\quad\mathbb{E} \|\Delta_{n_0}\|_2^4\leq C_3(1+\|\Delta_0\|_2^4),
  \]
  where
  \[
  C_3=\bigg(\prod_{i=1}^{n_0}(1+C^4_1\eta_i^2(\eta_i+1)^2)\bigg)\bigg(C^4_1\sum_{i=1}^{n_0} \eta_i^2(\eta_i+1)^2\bigg)\asymp 1.
  \]
  Recall that we assume $C^2_d\eta_i\lesssim 1$, so $n_0\asymp 1$. Then for $n\geq 2n_0$, using the claims from 1),  there is a constant $C_4\asymp C_1$ such that
  \[
  \mathbb{E}_{n_0}\|\Delta_{\frac{n}{2}}\|_2^2\leq \|\Delta_{n_0}\|_2^2+C_4 \eta_{n_0},\quad
  \mathbb{E}_{n_0}\|\Delta_{\frac{n}{2}}\|_2^4\leq \|\Delta_{n_0}\|_2^4+C^2_4\eta_{n_0}^2,
  \]
  and
  \begin{align*}
  \mathbb{E}_{\frac{n}{2}}\|\Delta_n\|_2^2\leq  & \exp\left(-\tfrac{1}{4}n\mu \eta_n\right) \|\Delta_{\frac{n}{2}}\|_2^2+C_4(\tfrac{n}{2})^{-\alpha},  \\
  \mathbb{E}_{\frac{n}{2}}\|\Delta_n\|_2^4\leq  &  \exp\left(-\tfrac{1}{2}n\mu \eta_n\right) \|\Delta_{\frac{n}{2}}\|_2^4+C^2_4(\tfrac{n}{2})^{-2\alpha}.
  \end{align*}
  So using the law of iterated expectation, and that
  \[
  \exp(-\tfrac{1}{4}n\mu \eta_n)\leq \exp(-\tfrac{1}{4}\mu\eta n^{1-\alpha})\leq n^{-2\alpha}\leq n^{-\alpha}
  \]
  there is a constant $C_5\asymp1$ that
  \[
  \mathbb{E}\|\Delta_n\|_2^2\leq  n^{-\alpha} C_5\left(\|\Delta_0\|_2^2+C_4\right),\quad
  \mathbb{E}\|\Delta_n\|_2^4\leq  n^{-2\alpha} C_5\left(\|\Delta_0\|_2^4+C^2_4\right).
  \]
  Applying Cauchy-Schwartz inequality again yields the claim for the first moment.
\end{proof}

\subsection{Lemma \ref{lem:lindelta} and its proof}
Before we move on, let us apply the same arguments and prove a similar simpler result for the linear oracle sequence $U_n$.
\begin{lem}
  \label{lem:lindelta}
  For the linear oracle sequence $U_n$ defined by \eqref{eqn:linear}, it satisfies the following:
  \[
  \mathbb{E} \|U_n\|_2^2  \lesssim  n^{-\alpha}(\dcon+\|U_0\|^2_2+\|\Delta_0\|^2_2).
  \]
\end{lem}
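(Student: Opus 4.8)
The plan is to mimic the proof of claim 2) of Lemma~\ref{lem:Delta}, exploiting the fact that the oracle recursion \eqref{eqn:linear} is genuinely linear and hence much easier to iterate than the nonlinear recursion for $\Delta_n$. Writing $U_n=(I-\eta_n A)U_{n-1}+\eta_n\xi_n$ and expanding the squared norm,
\[
\|U_n\|_2^2=\|(I-\eta_n A)U_{n-1}\|_2^2+2\eta_n\langle (I-\eta_n A)U_{n-1},\xi_n\rangle+\eta_n^2\|\xi_n\|_2^2.
\]
Since $U_{n-1}$ is $\mathcal{F}_{n-1}$-measurable and $\xi_n$ is a martingale difference, the cross term vanishes under $\mathbb{E}_{n-1}$. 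Using $A\succeq\mu I$ (strong convexity) and $\|A\|\le L_F$ (Lipschitz gradient), for $n$ beyond a fixed $n_0$ chosen so that $\eta_n L_F\le 1$ the contraction $\|I-\eta_n A\|\le 1-\mu\eta_n$ holds, so that $\|(I-\eta_n A)U_{n-1}\|_2^2\le(1-\mu\eta_n)\|U_{n-1}\|_2^2$.

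The one substantive step is controlling the noise. Taking full expectations and using part 2 of Assumption~\ref{aspt:martingale},
\[
\mathbb{E}\|\xi_n\|_2^2=\mathbb{E}\,\mathrm{tr}\!\left(\mathbb{E}_{n-1}\xi_n\xi_n^T\right)\le \mathrm{tr}(S)+\Sigma_2\left(\mathbb{E}\|\Delta_{n-1}\|_2^{\kappa}+\mathbb{E}\|\Delta_{n-1}\|_2^2\right).
\]
The crucial observation — and the only real coupling between the oracle sequence $U_n$ and the genuine SGD error $\Delta_n$ — is that the conditional covariance of $\xi_n$ is driven by $\Delta_{n-1}$, not by $U_{n-1}$. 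I would therefore invoke claim 2) of Lemma~\ref{lem:Delta} to bound $\mathbb{E}\|\Delta_{n-1}\|_2^2$ and $\mathbb{E}\|\Delta_{n-1}\|_2^{\kappa}$ uniformly by $C(1+\|\Delta_0\|_2^2)$, yielding $\mathbb{E}\|\xi_n\|_2^2\le C(1+\|\Delta_0\|_2^2)$ for every $n$. Setting $z_n:=\mathbb{E}\|U_n\|_2^2$ then produces the scalar recursion
\[
z_n\le(1-\mu\eta_n)z_{n-1}+C(1+\|\Delta_0\|_2^2)\eta_n^2,\qquad n\ge n_0,
\]
which is exactly the hypothesis of Lemma~\ref{lem:sequence} with $\lambda=\mu$ and $\beta=0$.

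To extract the rate I would proceed as in the proof of Lemma~\ref{lem:Delta}. First, a one-line induction on the recursion shows $z_n$ is uniformly bounded, $z_n\le\max\{z_{n_0},\,C\eta_{n_0}(1+\|\Delta_0\|_2^2)/\mu\}$; and since $n_0$ is fixed, iterating \eqref{eqn:linear} over the first $n_0$ steps gives $z_{n_0}\le C(1+\|U_0\|_2^2+\|\Delta_0\|_2^2)$, so $z_m\le C(1+\|U_0\|_2^2+\|\Delta_0\|_2^2)$ for all $m$. Then, for $n\ge 2n_0$, I would apply Lemma~\ref{lem:sequence} with $m=n/2$ to obtain
\[
z_n\le\exp\!\left(-\tfrac{\mu\eta}{2}n^{1-\alpha}\right)z_{n/2}+C\mu^{-1}(1+\|\Delta_0\|_2^2)\eta_{n/2}.
\]
Since $\eta_{n/2}\asymp n^{-\alpha}$ and the exponential decays faster than any polynomial (so is $\le n^{-\alpha}$ for large $n$, exactly as in the choice of $n_0$ in the proof of Lemma~\ref{lem:Delta}), both terms are $\lesssim n^{-\alpha}(1+\|U_0\|_2^2+\|\Delta_0\|_2^2)$, which is the claim.

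I expect the only step requiring care to be the noise control, i.e.\ recognizing that $\mathbb{E}\|\xi_n\|_2^2$ is governed by the true error $\Delta_{n-1}$ and hence must be bounded through Lemma~\ref{lem:Delta} rather than self-contained within the $U_n$-analysis. Everything else — the vanishing cross term, the contraction estimate, the uniform-boundedness induction, and the $m=n/2$ split feeding into Lemma~\ref{lem:sequence} — is routine and directly parallels the template already used for $\Delta_n$, and is in fact strictly simpler since only the second moment is needed.
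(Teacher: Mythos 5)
Your proof is correct, and it differs from the paper's in one substantive device. The paper never takes full expectations until the very end: it keeps the conditional recursion $\mathbb{E}_{n-1}\|U_n\|_2^2\leq(1-\mu\eta_n+C_1\eta_n^2)\|U_{n-1}\|_2^2+C_1\eta_n^2(1+\|\Delta_{n-1}\|_2^2)$, sums it with the corresponding conditional recursion for $\|\Delta_n\|_2^2$ already derived in the proof of Lemma \ref{lem:Delta}, and then reruns the entire Lemma \ref{lem:Delta} machinery on the coupled Lyapunov quantity $z_n:=\|U_n\|_2^2+\|\Delta_n\|_2^2$; the term $\|\Delta_{n-1}\|_2^2$ polluting the $U$-recursion is thus absorbed into $z_{n-1}$ rather than estimated. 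You instead pass to full expectations immediately and eliminate the $\Delta$-dependence by invoking the already-proven conclusion of Lemma \ref{lem:Delta} claim 2), so that $\mathbb{E}\|\xi_n\|_2^2\lesssim 1+\|\Delta_0\|_2^2$ uniformly and the recursion for $\mathbb{E}\|U_n\|_2^2$ becomes self-contained and scalar. Both routes then finish identically (Lemma \ref{lem:sequence} with the $m=n/2$ split, plus a crude bound on the initial segment). Your decoupling is arguably cleaner: it reuses Lemma \ref{lem:Delta} as a black box rather than re-entering its proof, and your expansion via the contraction $\|(I-\eta_n A)U_{n-1}\|_2\leq(1-\mu\eta_n)\|U_{n-1}\|_2$ avoids the paper's cruder treatment of the combined noise term $\|\xi_n-AU_{n-1}\|_2^2$, at the cost of losing nothing since only a uniform (not decaying) bound on $\mathbb{E}\|\Delta_{n-1}\|_2^{\kappa}$ and $\mathbb{E}\|\Delta_{n-1}\|_2^2$ is needed. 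The paper's coupling, on the other hand, would still work if one only had the conditional recursion for $\Delta_n$ and not yet its moment rates, so it is marginally more self-reliant; here both hypotheses are available, so the two arguments are equally rigorous.
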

\begin{proof}
  Clearly, we have the following recursion
  \[
  \|U_n\|_2^2=\|U_{n-1}\|_2^2-2\eta_n\langle A U_{n-1}, U_{n-1} \rangle+2\eta_n\langle \xi_n , U_{n-1} \rangle+\eta_n^2\|\xi_n-A U_{n-1}\|_2^2.
  \]
  Note that
  \[
  \langle A U_{n-1}, U_{n-1} \rangle\geq \frac{\mu}{2} \|U_{n-1}\|_2^2,
  \]
  and also that $\xi_n$ is a martingale difference, so
  \[
  \mathbb{E}_{n-1}\langle \xi_n , U_{n-1} \rangle=0.
  \]
  Moreover by Assumption \ref{aspt:martingale}, there is  a constant $C_1\asymp \dcon$ such that
  \begin{align*}
  \mathbb{E}_{n-1}\|\xi_n-A U_{n-1}\|_2^2&\leq 2 \mathbb{E}_{n-1}\Ltwo{\xi_n}^2+2\opnorm{A}^2\Ltwo{U_{n-1}}^2\\
  &\leq 2(\tr(S)+\Sigma_1\|\Delta_{n-1}\|_2+\Sigma_2 \|\Delta_{n-1}\|^2_2 )+2 L_F^2 \Ltwo{U_{n-1}}^2 \\
  &\leq C_1+C_1^2(\Ltwo{\Delta_{n-1}}^2+\Ltwo{U_{n-1}}^2).
  \end{align*}
  Here we have used that $\nabla F$ being Lipschitz implies $\|A\|=\|\nabla^2 F(x^*)\|\leq L_F$.
  
  We find
  \[
  \mathbb{E}_{n-1}\|U_n\|_2^2\leq (1-\mu\eta_n+C^2_1\eta_n^2) \|U_{n-1}\|_2^2+C_1(\eta_n^2+C_1\Ltwo{\Delta_{n-1}}^2).
  \]
  Summing this inequality with \eqref{tmp:Delta2} yields the following with $z_n:=\|U_n\|^2_2+\|\Delta_n\|^2_2$ and $C_2=2C_1$
  \[
  \mathbb{E}_{n-1}z_n\leq (1-\mu\eta_n+C_2^2\eta_n^2) z_{n-1}+C_2\eta_n^2.
  \]
  Then redoing the analysis in the proof of Lemma \ref{lem:Delta} after \eqref{tmp:Delta2} yields that
  \[
  \mathbb{E} \Ltwo{U_n}^2\leq \E z_n\lesssim n^{-\alpha}(C_2+\|\Delta_0\|^2_2+\|U_0\|^2_2).
  \]
\end{proof}

\section{Consistency Proofs of the Plug-in Estimator in Section \ref{sec:plugin}}
\label{sec:supp_plug}
\subsection{Proof of Lemma \ref{lem:A_con} for the consistency of $A_n$ and $S_n$}
\label{supp:A_con}
\begin{replemma}{lem:A_con}
  Under Assumptions \ref{aspt:convexity}, \ref{aspt:martingale} and \ref{aspt:third}, the followings hold
  \[
  \mathbb{E}\|A_n-A\|\lesssim \dcon^2 n^{-\frac\alpha2}, \quad \mathbb{E}\|S_n-S\|\lesssim \dcon^2 n^{-\frac\alpha2}+\dcon^3 n^{-\alpha},
  \]
  where $\alpha$ is given {in the step size sequence $\eta_i=\eta i^{-\alpha}$, $i=1,2,\dots, n$}.
  
\end{replemma}
\begin{proof}
  We decompose $A_n-A$ into the following terms:
  \begin{align}
  \notag
  A_n-A &= \frac1n \sum_{i=1}^n \nabla ^2 f(x_{i-1} , \zeta_i)  -A \\
  \label{tmp:AnA}
  &= \left(\frac1n \sum_{i=1}^n \nabla ^2 f(x^*, \zeta_i )-A\right) + \frac1n \sum_{i=1}^n \left(\nabla^2f(x_{i-1},\zeta_i)-\nabla^2 f(x^*,\zeta_i)\right).
  \end{align}
  The first part can be bounded using the second moment bound, since by independence of $\zeta_i$ and $\zeta_j$,
  \[
  \mathbb{E}(\nabla^2 f(x^*, \zeta_i)-A) (\nabla^2 f(x^*,\zeta_j)-A)^\top 
  =\unit_{i=j}[\mathbb{E}[\nabla^2 f(x^*,\zeta_i)][\nabla^2 f(x^*,\zeta_i)]^\top -AA^\top ].
  \]
  Therefore,
  \begin{align*}
  &\mathbb{E}\left(\frac1n \sum_{i=1}^n \nabla ^2 f(x^*, \zeta_i )-A \right)\left(\frac1n \sum_{i=1}^n \nabla ^2 f(x^*, \zeta_i )-A\right)^\top \\
  =&\frac{1}{n^2}\sum_{i=1}^n[\mathbb{E}[\nabla^2 f(x^*,\zeta_i)][\nabla^2 f(x^*,\zeta_i)]^\top -AA^\top ]
  \end{align*}
  which has its norm bounded by $\frac1n L_4$. Note that by Cauchy-Schwartz inequality, for any matrix $B$
  \[
  \mathbb{E} \opnorm{B B^\top }=\mathbb{E}\opnorm{B}^2\geq [\mathbb{E}\opnorm{B}]^2.
  \]
  Therefore we have
  \[
  \mathbb{E}\opnorm{\frac1n \sum_{i=1}^n \nabla ^2 f(x^*, \zeta_i )-A}
  \leq \frac{\sqrt{L_4}}{\sqrt{n}}\lesssim C_d n^{-\frac{1}{2}}.
  \]
  The second term in \eqref{tmp:AnA} can be bounded using Assumption \ref{aspt:third},
  \begin{align*}
  & \opnorm{\mathbb{E}\frac1n \sum_{i=1}^n (\nabla^2f(x_{i-1},\zeta_i)-\nabla^2 f(x^*,\zeta_i))} \\
  \leq & \frac1n\mathbb{E}\sum_{i=1}^n \opnorm{\nabla^2f(x_{i-1},\zeta_i)-\nabla^2 f(x^*,\zeta_i)}
  \leq  \frac{L_2}{n}\sum_{i=1}^n\mathbb{E}\Ltwo{x_i-x^*}.
  \end{align*}
  Using Lemma \ref{lem:Delta} 2), $\mathbb{E}\|x_i-x^*\|\lesssim  \sqrt{\dcon} i^{-\frac{\alpha}{2}}$. Note that by Lemma \ref{lem:approx} 5) in below
  \[
  \sum_{i=1}^n i^{-\frac\alpha2}\lesssim n^{1-\frac\alpha 2}.
  \]
  So the second term  in \eqref{tmp:AnA} is of order $\dcon^2 n^{-\frac{\alpha}{2}}$, which concludes the consistency proof for $A_n$.
  
  For $S_n$, we  decompose
  \[
  \nabla f(x_{i-1}, \zeta_i) = \nabla f(x^*, \zeta_i) + (\nabla f(x_{i-1}, \zeta_i)-\nabla f(x^*, \zeta_i)):=X_i+Y_i,
  \]
  then
  \[
  S_n-S=\left(\frac{1}{n}\sum_{i=1}^n X_iX_i^\top -S\right)+\frac1n\sum_{i=1}^n X_iY_i^\top +\frac1n \sum_{i=1}^nY_i X_i^\top +\frac1n \sum_{i=1}^nY_iY^\top _i.
  \]
  Note that
  \[
  X_i=\nabla f(x^*, \zeta_i) =\nabla f(x^*, \zeta_i)-\nabla F(x^*)
  \]
  which is $\xi_n$ when $\Delta_{n-1}=0$, so by part 3 of Assumption \ref{aspt:martingale}, we know the fourth moment of $\|X_i\|$ is bounded by $\Sigma_3$. Then by $\E X_iX_i^\top =S$ and the independence between the $X_i$'s,
  \[
  \E \opnorm{\frac{1}{n}\sum_{i=1}^n X_iX_i^\top -S}^2
  \leq \tr\left(\E\left(\frac{1}{n}\sum_{i=1}^n X_iX_i^\top -S\right)^2 \right)
  =\frac{1}{n}\E \tr((X_iX_i^\top )^2-S^2).
  \]
  Note that $\E\tr(X_iX_i^\top )^2=\E (\tr(X_iX_i^\top ))^2\leq \Sigma_3$. By Cauchy-Schwartz inequality,
  \[
  \mathbb{E}\opnorm{\frac{1}{n}\sum_{i=1}^n X_iX_i^\top -S }\lesssim  \sqrt{\Sigma_3}n^{-\frac12}.
  \]
  Further,  by Assumption \ref{aspt:third} and Lemma \ref{lem:Delta},
  \[
  \begin{gathered}
  \mathbb{E} \opnorm{Y_i Y_i^\top }  \leq L^2_F \mathbb{E}\opnorm{\Delta_{i-1}}^2\lesssim \dcon^3 i^{-\alpha} \\
  \mathbb{E}\opnorm{X_i X_i^\top }   \lesssim \dcon \\
  \mathbb{E}\opnorm{X_i Y_i^\top }  \leq [\mathbb{E}\opnorm{X_i}^2\mathbb{E}\opnorm{Y_i}^2]^{\frac12}\lesssim \dcon^2 i^{-\frac\alpha2}.
  \end{gathered}
  \]
  Summing up all the terms, we obtain $\mathbb{E}\opnorm{S_n-S}\lesssim \dcon^2 n^{-\frac\alpha2}+\dcon^3 n^{-\alpha}$.
\end{proof}

\subsection{Proof of Theorem \ref{thm:plugin} for consistency of the plug-in estimator}
\label{subsec:plugin}
Before we provide the proof the consistency result of plug-in estimator in Theorem \ref{thm:plugin}, we first provide the following lemma on matrix perturbation inequality for the inverse of a matrix.
\begin{lem}[Matrix perturbation inequality for the inverse]
  \label{lem:matrixinv}
  Let $B = A+E$, where $A$ and $B$ are assumed to be invertible, and $\opnorm{A^{-1}E}<\frac12 $.
  We have
  \begin{equation}\label{eq:matrix_inverse}
  \opnorm{ B^{-1} - A^{-1}}  \leq 2 \opnorm{E} \opnorm{A^{-1}}^2
  \end{equation}
\end{lem}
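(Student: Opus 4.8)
The plan is to combine the first-order resolvent identity with a Neumann-series bound on $\opnorm{B^{-1}}$. The starting point is the algebraic identity
\[
B^{-1} - A^{-1} = B^{-1}(A-B)A^{-1} = -B^{-1}EA^{-1},
\]
which holds whenever $A$ and $B$ are invertible (it follows by multiplying out $B^{-1}(A-B)A^{-1} = B^{-1} - A^{-1}$ and substituting $A-B = -E$). Taking operator norms and using submultiplicativity gives $\opnorm{B^{-1}-A^{-1}} \leq \opnorm{B^{-1}}\,\opnorm{E}\,\opnorm{A^{-1}}$, so the entire problem reduces to controlling $\opnorm{B^{-1}}$ by $2\opnorm{A^{-1}}$.

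To bound $\opnorm{B^{-1}}$, I would factor $B = A(I + A^{-1}E)$, so that $B^{-1} = (I+A^{-1}E)^{-1}A^{-1}$. Since by hypothesis $\opnorm{A^{-1}E} < \tfrac12 < 1$, the Neumann series $\sum_{k\geq 0}(-A^{-1}E)^{k}$ converges and represents the inverse of $I + A^{-1}E$, yielding
\[
\opnorm{(I+A^{-1}E)^{-1}} \leq \frac{1}{1 - \opnorm{A^{-1}E}} < \frac{1}{1 - \tfrac12} = 2.
\]
Hence $\opnorm{B^{-1}} \leq \opnorm{(I+A^{-1}E)^{-1}}\,\opnorm{A^{-1}} \leq 2\opnorm{A^{-1}}$.

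Combining the two displays gives $\opnorm{B^{-1}-A^{-1}} \leq 2\opnorm{A^{-1}}\cdot\opnorm{E}\cdot\opnorm{A^{-1}} = 2\opnorm{E}\opnorm{A^{-1}}^2$, which is exactly the claim. There is no genuine obstacle here: the hypothesis $\opnorm{A^{-1}E} < \tfrac12$ is used only to guarantee convergence of the Neumann series and to extract the constant $2$, while submultiplicativity of the operator norm does the rest. I note that the same Neumann argument in fact establishes the invertibility of $B$ directly, so the assumption that $B$ is invertible is not strictly necessary; but since the statement already grants it, I would simply invoke the resolvent identity as above.
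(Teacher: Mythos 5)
Your proof is correct, and its skeleton matches the paper's: both reduce the problem to bounding $\opnorm{(I+A^{-1}E)^{-1}}$ by $2$ and then finish with submultiplicativity. The differences are in the perturbation identity and, more substantively, in how that key bound is justified. You start from the resolvent identity $B^{-1}-A^{-1} = -B^{-1}EA^{-1}$ and control $\opnorm{B^{-1}}$ via the Neumann series, giving $\opnorm{(I+A^{-1}E)^{-1}} \leq (1-\opnorm{A^{-1}E})^{-1} < 2$ directly; the paper instead quotes the identity $B^{-1}-A^{-1} = -A^{-1}E(I+A^{-1}E)^{-1}A^{-1}$ (equivalent to yours, since $A^{-1}E$ commutes with $(I+A^{-1}E)^{-1}$) and bounds $\opnorm{(I+A^{-1}E)^{-1}}$ by $1/\lambda_{\min}(I+A^{-1}E)$ combined with Weyl's inequality $\lambda_{\min}(I+A^{-1}E) \geq 1 - \opnorm{A^{-1}E}$. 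Your route is arguably the more careful one: the steps $\opnorm{M^{-1}} \leq 1/\lambda_{\min}(M)$ and Weyl's inequality are cleanly justified only for symmetric (or normal) matrices, and $I+A^{-1}E$ need not be symmetric even when $A$ and $E$ are, whereas the Neumann bound holds for an arbitrary matrix with $\opnorm{A^{-1}E}<1$. Your closing observation is also right; the factorization $B = A(I+A^{-1}E)$ shows $B$ is automatically invertible under the hypothesis, so that assumption in the lemma is redundant, though harmless.
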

\begin{proof}
  Using \cite[Eq. 3]{chang2006inversion}, we have
  \[
  B^{-1} - A^{-1} = - A^{-1} E( I +A^{-1} E)^{-1} A^{-1},
  \]
  which implies that
  \begin{align*}
  \opnorm{B^{-1} - A^{-1}}  = & \opnorm{ A^{-1} E( I +A^{-1} E)^{-1} A^{-1}}\\
  \le & \opnorm{A^{-1}}^2 \opnorm{E} \opnorm{(I+A^{-1}E)^{-1}}\\
  \le  & \opnorm{A^{-1}}^2 \opnorm{E} \frac{1}{ \lambda_{\min} ( I+A^{-1} E)}\\
  \le & \opnorm{A^{-1}}^2 \opnorm{E}\frac{1} { 1 - \opnorm{A^{-1}E}}
  \end{align*}
  Here $\lambda_{\min}$ is the minimum eigenvalue, and the last inequality uses Weyl's inequality $\lambda_{\min} ( A+B) \ge \lambda_{\min} (A) -\opnorm{B}$.
  The final result in \eqref{eq:matrix_inverse} follows the assumption that $\opnorm{A^{-1} E} < \frac12 $ .
\end{proof}
\begin{reptheorem}{thm:plugin}
  Under Assumptions \ref{aspt:convexity}, \ref{aspt:martingale} and \ref{aspt:third}, the thresholded plug-in estimator initialized from any bounded $x_0$  converges to the asymptotic covariance matrix,
  \[\eqref{eq:plug_in_bound}:
  \mathbb{E}\opnorm{\widetilde{A}_n^{-1} S_n \widetilde{A}_n ^{-1} -A^{-1} S A^{-1}}\lesssim  \|S\|(\dcon^2 n^{-\frac{\alpha}{2}}+\dcon^3 n^{-\alpha}),
  \]
  where $\alpha \in (0,1)$ is given in the step size sequence $\eta_i=\eta i^{-\alpha}$, $i=1,2,\dots, n$. When $C_d$ is a constant, the right hand side of \eqref{eq:plug_in_bound} is dominated by $O(n^{-\frac{\alpha}{2}})$.
\end{reptheorem}
\begin{proof}
  Let us define
  \[
  E_A=\widetilde{A}_n-A,\quad F_A=\widetilde{A}_n^{-1}-A^{-1},\quad F_S=S_n-S,
  \]
  Note that by Lemma \ref{lem:S_con}, $\E\opnorm{F_S}\lesssim \dcon^2 n^{-\frac\alpha2}+\dcon^3 n^{-\alpha}$ while by Markov inequality,
  \begin{align*}
  \mathbb{E}\opnorm{E_A}&\leq \mathbb{E}\opnorm{A_n-A}+\opnorm{A}\mathbb{P}(A_n\prec \delta I)\\
  &\lesssim \mathbb{E}\opnorm{A_n-A}+\mathbb{P}(\opnorm{A_n-A}\geq \lambda_\text{min}(A)-\delta)\\
  & \leq \mathbb{E}\opnorm{A_n-A}+\frac{1}{\lambda_\text{min}(A)-\delta}\mathbb{E}\opnorm{A_n-A}\lesssim \dcon^2  n^{-\alpha/2}.
  \end{align*}
  Then Markov inequality yields further that
  \[
  \mathbb{P}(\opnorm{A^{-1} E_A}\geq 1/2)\leq 2(\mathbb{E}\opnorm{E_A})\lesssim \dcon^2  n^{-\alpha/2}.
  \]
  Note that by construction, the following hold almost surely
  \[
  \opnorm{\widetilde{A}_n^{-1}}\leq \delta^{-1},\quad \opnorm{A^{-1}}\leq \lambda^{-1}_{\text{min}}(A),\quad
  \opnorm{F_A}\leq  \delta^{-1}+\lambda^{-1}_{\text{min}}(A).
  \]
  By Lemma \ref{lem:matrixinv},
  \[
  \opnorm{F_A}\leq \unit_{\opnorm{A^{-1}E_A}\leq 1/2} 2\opnorm{E_A}\opnorm{A^{-1}}^2+\unit_{\opnorm{A^{-1} E_A}\geq 1/2} (\delta^{-1}+\lambda^{-1}_{\text{min}}(A)),
  \]
  therefore
  \begin{align*}
  \mathbb{E} \opnorm{F_A}&\leq 2\opnorm{A^{-1}}^2\mathbb{E}\opnorm{E_A}+(\delta^{-1}+\lambda^{-1}_{\text{min}}(A))\mathbb{P}(\opnorm{A^{-1}E_A}\geq 1/2)\lesssim \dcon^2 n^{-\frac \alpha 2}.
  \end{align*}
  Finally, we can decompose the error in the estimation into the following
  \begin{align*}
  \widetilde{A}_n^{-1} S_n \widetilde{A}_n ^{-1} -&A^{-1} S A^{-1}=
  (A^{-1}+F_A)(S+F_S)(A^{-1}+F_A)-A^{-1}SA^{-1},\\
  &=(A^{-1}+F_A)F_S (A^{-1}+F_A)+A^{-1}SF_A+F_A SA^{-1}+F_A S F_A.
  \end{align*}
  Note that $\|F_A\|\leq \delta^{-1}+\mu^{-1}\asymp 1$, so
  \begin{align*}
  \E \|(A^{-1}+F_A)F_S (A^{-1}+F_A)\|&\lesssim \E \|F_S\|\lesssim \dcon^2 n^{-\frac\alpha2}+\dcon^3 n^{-\alpha},
  \end{align*}
  and
  \begin{align*}
  \|A^{-1}SF_A\|+\|F_A SA^{-1}\|+\|F_A S F_A\|&\lesssim \|S\|(\dcon^2 n^{-\frac{\alpha}{2}}+\dcon^3 n^{-\alpha}).
  \end{align*}
  In their summation, we have our claim.
\end{proof}

\section{Consistency proof of the batch-means estimator}
\label{suppsec:bm}
\subsection{Technical Lemmas}
\label{suppsubsec:twoest}

As a preparation for the consistency result, we introduce the following technical lemmas. Some variants of them have appeared in \cite{PJ92} or \cite{BM11}. Since we need non-asymptotic bounds to investigate the relationship between the estimation error and $M,N$, we redo some key steps in \cite{PJ92, BM11} with simpler notations.

Let us start with some approximations of  various quantities related to the number of batches and batch sizes. These will simplify our discussion afterwards.

\begin{lem}
  \label{lem:approx}
  Under the formulation of \eqref{eq:ending}, the followings hold:
  \begin{enumerate}[1)]
    \item The starting and ending indices of each non starting $(k\neq 0)$ batches,  and their neighbors $s_{k}-1,s_k$ and $e_k, e_k+1$ are of the same order,
    \[
    s_k-1\asymp s_k\asymp e_k\asymp e_k+1\asymp (kN)^{\frac{1}{1-\alpha}}.
    \]
    \item The total number of iterations used in our estimator can be estimated by:
    \[
    S_M\asymp e_M\asymp (MN)^{\frac{1}{1-\alpha}}.
    \]
    \item The number of iterations in each batch can be estimated by:
    \[
    n_k=e_k-e_{k-1}\asymp k^{\frac{\alpha}{1-\alpha}} N^{\frac{1}{1-\alpha}}.
    \]
    \item The sum of stepsize in each batch, which represents the log of decorrelation across that batch, is of order
    \[
    \sum_{i=s_k}^{e_k} \eta_i\geq \eta_{e_k} n_k\gtrsim (kN)^{-\frac{\alpha}{1-\alpha}} k^{\frac{\alpha}{1-\alpha}}N^{\frac{1}{1-\alpha}}=N.
    \]
    \item For any fixed $\gamma\neq 1$, the following estimate holds for sum of geometric sequence
    \[
    M^{-1}\sum_{k=1}^Mk^{-\gamma}\lesssim  M^{-1}\int^{M+1}_1 x^{-\gamma}dx\lesssim M^{(1-\gamma)^+-1}.
    \]
  \end{enumerate}
\end{lem}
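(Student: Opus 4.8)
The plan is to derive all five estimates directly from the defining relation $e_k=((k+1)N)^{\frac{1}{1-\alpha}}$ in \eqref{eq:ending}, treating the (unavoidable) rounding of each $e_k$ to an integer index as an $O(1)$ perturbation; since every leading term below grows with $N$ and with $k$, such bounded perturbations are absorbed into the $\asymp$ and $\lesssim$ relations. The only analytic ingredients needed are the elementary comparison $k+1\asymp k$ for $k\geq 1$, the mean value theorem applied to $x\mapsto x^{\beta}$ with $\beta=\frac{1}{1-\alpha}>1$, and the integral test for monotone sums.

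For claim 1), I would note that $s_k-1=e_{k-1}=(kN)^{\frac{1}{1-\alpha}}$ holds exactly from \eqref{eq:ending}, and $s_k=e_{k-1}+1$ differs from this by one. Since $k+1\asymp k$ for $k\geq 1$, we also get $e_k=((k+1)N)^{\frac{1}{1-\alpha}}\asymp(kN)^{\frac{1}{1-\alpha}}$, and $e_k+1$ is of the same order because the additive constant is negligible against the growing power; this gives the full chain in 1). Claim 2) follows by writing the number of retained iterates as $S_M=e_M-e_0$ and observing that $e_0=N^{\frac{1}{1-\alpha}}$ is of strictly smaller order than $e_M\asymp(MN)^{\frac{1}{1-\alpha}}$, so $S_M\asymp e_M$. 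For claim 3), I would write
\[
n_k=e_k-e_{k-1}=N^{\frac{1}{1-\alpha}}\Big[(k+1)^{\frac{1}{1-\alpha}}-k^{\frac{1}{1-\alpha}}\Big],
\]
and apply the mean value theorem to bound the bracket by $\frac{1}{1-\alpha}\xi^{\frac{1}{1-\alpha}-1}$ for some $\xi\in(k,k+1)$; since $\xi\asymp k$ and $\frac{1}{1-\alpha}-1=\frac{\alpha}{1-\alpha}$, this yields $n_k\asymp k^{\frac{\alpha}{1-\alpha}}N^{\frac{1}{1-\alpha}}$.

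Claim 4) then combines the previous two estimates. Because $\eta_i=\eta i^{-\alpha}$ is decreasing, the sum over the $k$-th batch is at least its number of terms times its smallest term, $\sum_{i=s_k}^{e_k}\eta_i\geq n_k\,\eta_{e_k}$. Using $e_k\asymp(kN)^{\frac{1}{1-\alpha}}$ from 1), we have $\eta_{e_k}\asymp(kN)^{-\frac{\alpha}{1-\alpha}}$, and multiplying by $n_k\asymp k^{\frac{\alpha}{1-\alpha}}N^{\frac{1}{1-\alpha}}$ from 3) makes the powers of $k$ cancel and the powers of $N$ collapse to $N^{\frac{1}{1-\alpha}-\frac{\alpha}{1-\alpha}}=N$, which is exactly the claimed lower bound.

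Finally, for claim 5) I would use the integral test on the decreasing function $x^{-\gamma}$. Since $\int_k^{k+1}x^{-\gamma}\,dx\geq(k+1)^{-\gamma}\geq 2^{-\gamma}k^{-\gamma}$, summing over $k=1,\dots,M$ gives $\sum_{k=1}^M k^{-\gamma}\leq 2^{\gamma}\int_1^{M+1}x^{-\gamma}\,dx$, which is the first inequality (with a $\gamma$-dependent constant). Evaluating the integral in the three regimes then yields the stated rate: for $\gamma<1$ it is $\asymp M^{1-\gamma}$, so $M^{-1}$ times it is $M^{-\gamma}=M^{(1-\gamma)^+-1}$; for $\gamma>1$ the integral is bounded by the constant $\frac{1}{\gamma-1}$, giving $M^{-1}=M^{(1-\gamma)^+-1}$. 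The one genuinely delicate point is the boundary case $\gamma=1$, where the integral equals $\log(M+1)$ and the clean power bound $M^{-1}$ holds only up to a logarithmic factor; I would flag this explicitly and note that in every application $\gamma\ne 1$. Beyond this edge case and the bookkeeping of integer rounding, the lemma is entirely elementary, and the only care required is tracking which identities are exact and which hold merely up to bounded multiplicative constants.
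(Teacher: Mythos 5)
Your proposal is correct and follows essentially the same route as the paper: claims 1), 2), and 4) by direct comparison using $k+1\asymp k$, claim 3) by bounding the increment $(k+1)^{\frac{1}{1-\alpha}}-k^{\frac{1}{1-\alpha}}$ (the paper writes it as $\frac{1}{1-\alpha}\int_k^{k+1}x^{\frac{\alpha}{1-\alpha}}\,dx$ where you use the mean value theorem, which is equivalent), and claim 5) by the integral test. Your flag of the $\gamma=1$ boundary case is a fair catch that the paper glosses over, but your remark that $\gamma\neq 1$ in every application is not quite accurate: the lemma is invoked with $\gamma=\frac{\alpha}{1-\alpha}$ (and with $\gamma=\frac{\kappa\alpha}{2(1-\alpha)}$), which equals $1$ precisely at $\alpha=\tfrac12$, a value the paper both allows and emphasizes, though the resulting $\log M$ loss is harmless for the consistency conclusions.
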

\begin{proof}
  The first two inequalities comes from direct verifications, since $\frac{k+1}{k}\leq 2$ for $k\geq 1$. The third one comes from the fact that
  \[
  e_k-e_{k-1}=[(k+1)^{\frac{1}{1-\alpha}}-k^{\frac{1}{1-\alpha}}]N^{\frac{1}{1-\alpha}}=\frac{N^{\frac{1}{1-\alpha}} }{1-\alpha}
  \int^{k+1}_k x^{\frac{\alpha}{1-\alpha}} dx,
  \]
  and then observe that $k^{\frac{\alpha}{1-\alpha}}\leq x^{\frac{\alpha}{1-\alpha}}\leq2^{\frac{\alpha}{1-\alpha}} k^{\frac{\alpha}{1-\alpha}}.$ The fourth one is a corollary of the first and third. The last inequality comes from that $k+1\leq 2k$ for $k\geq 1$ again.
\end{proof}

Recall the linear oracle sequence defined by \eqref{eqn:linear}: $U_n=\left(I-\eta_n A \right) \Delta_{n-1} + \eta_n \xi_n$. We now introduce Lemma \ref{lem:matrixestimate}.
\begin{lem}\label{lem:matrixestimate}
  For each  positive integer $j$, let $Y_j^k$ be a matrix sequence with $Y_j^j=I$ and for $k \geq j+1$
  \[
  Y^{k}_j= (I-\eta_k A) Y_j^{k-1}=\prod_{i=j+1}^k(I-\eta_i A),
  \]
  where $\eta_k=\eta k^{-\alpha}$ with $\alpha\in (1/2,1)$ and $A$ is a PSD matrix with $\lambda_A>0$ be any lower bound on the minimum eigenvalue of $A$. Then
  \begin{enumerate}[1)]
    \item For $i\in \{j,\ldots,k\}$,   the following holds
    \[
    \|Y^k_j\|\leq \exp \left(-\lambda_A \sum_{i=j+1}^k\eta_i \right)\leq \exp\left(-\lambda_A (k-j)\eta_k \right),
    \]
    \item Let $S_j^k=\sum_{i=j+1}^kY^i_j$ and $Z_j^k=\eta_j S_j^k-A^{-1}$, then
    \[
    \opnorm{S_j^k}\lesssim k^{\alpha},\quad \opnorm{Z_j^k}\lesssim  k^{\alpha} j^{-1} +\exp \left(-\lambda_A \sum_{i=j}^{k}\eta_i\right).
    \]
    \item When $ s_1\leq j < k\leq e_M$, for sufficiently large $N$, there exists  a constant $c$ such that
    \[
    \opnorm{S_j^k}\lesssim j^{\alpha},\quad \opnorm{Z_j^k}\lesssim  j^{\alpha-1}+\exp\left(-c\lambda_A (k-j)\eta_j \right),\quad \opnorm{\eta_j S_j^k}\lesssim 1,
    \]
    as long as $M\leq N^m$ for certain positive integer $m$.
  \end{enumerate}
\end{lem}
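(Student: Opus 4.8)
The plan is to exploit that every factor $I-\eta_i A$ is a polynomial in $A$, so all the $Y_j^k$ commute and are simultaneously diagonalizable with $A$; throughout I assume the step size is small enough that $\eta_1\lambda_{\max}(A)\le 1$, which makes each factor positive semidefinite with $\opnorm{I-\eta_i A}\le 1-\eta_i\lambda_A\le\exp(-\eta_i\lambda_A)$. For claim 1) I would pass to the eigenbasis of $A$: for an eigenvalue $\lambda\ge\lambda_A$ the scalar factor $1-\eta_i\lambda$ lies in $[0,1]$ and is bounded by $\exp(-\eta_i\lambda_A)$, so taking the product over $i=j+1,\dots,k$ gives $\opnorm{Y_j^k}\le\exp(-\lambda_A\sum_{i=j+1}^k\eta_i)$; the second inequality follows from monotonicity $\eta_i\ge\eta_k$, hence $\sum_{i=j+1}^k\eta_i\ge(k-j)\eta_k$.

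For claim 2) I would first record the telescoping identity coming from $Y_j^i-Y_j^{i-1}=-\eta_i A Y_j^{i-1}$, namely $A\sum_{i=j+1}^k\eta_i Y_j^{i-1}=I-Y_j^k$, i.e. $\sum_{i=j+1}^k\eta_i Y_j^{i-1}=A^{-1}(I-Y_j^k)$, whose norm is at most $2\lambda_A^{-1}$. Since all $Y_j^i\succeq 0$ and $\eta_k\le\eta_i$, PSD-monotonicity gives $\eta_k S_j^k\preceq\sum_{i=j+1}^k\eta_i Y_j^i\preceq\sum_{i=j+1}^k\eta_i Y_j^{i-1}\preceq A^{-1}$, so $\opnorm{S_j^k}\le\lambda_A^{-1}\eta_k^{-1}\lesssim k^\alpha$. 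To estimate $Z_j^k=\eta_j S_j^k-A^{-1}$, I would substitute $Y_j^i=Y_j^{i-1}-\eta_i A Y_j^{i-1}$ into $\eta_j S_j^k$ and invoke the telescoping identity to reach the decomposition
\begin{equation*}
Z_j^k=\sum_{i=j+1}^k(\eta_j-\eta_i)Y_j^{i-1}-\eta_j(I-Y_j^k)-A^{-1}Y_j^k .
\end{equation*}
The middle term is $O(\eta_j)=O(j^{-\alpha})$, the last is $\lesssim\exp(-\lambda_A\sum_{i=j}^k\eta_i)$ by claim 1), and for the first I would use the mean value bound $0\le\eta_j-\eta_i\le\eta\alpha\,j^{-\alpha-1}(i-j)$ together with the weighted geometric sum $\sum_{i>j}(i-j)\opnorm{Y_j^{i-1}}\lesssim j^{2\alpha}$ (the contribution concentrating near $i=j$, where the decay rate is $\eta_j$), giving $\lesssim j^{\alpha-1}$. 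Since $\alpha\ge 1/2$ and $k\ge j$, one has $j^{-\alpha},j^{\alpha-1}\le k^\alpha j^{-1}$, which yields the stated bound (for small $j$ it is loose but still holds, as $\opnorm{S_j^k}\lesssim k^\alpha$ while $\eta_j,\opnorm{A^{-1}}=O(1)$).

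For claim 3) the decisive point is that $j\ge s_1\asymp N^{1/(1-\alpha)}$ is large once $N$ is large (Lemma \ref{lem:approx}), so $\eta_j$ is small and the \emph{local} decay rate governs. Summing the bound of claim 1) directly gives $\opnorm{S_j^k}\le\sum_{i>j}\exp(-\lambda_A\sum_{l=j+1}^i\eta_l)\lesssim\eta_j^{-1}\lesssim j^\alpha$, and hence $\opnorm{\eta_j S_j^k}\lesssim 1$. For $Z_j^k$ I would reuse the displayed decomposition but split the range at $k=2j$: when $k\le 2j$ the step sizes satisfy $\eta_i\asymp\eta_j$ throughout $[j,k]$, so $\sum_{i=j}^k\eta_i\gtrsim(k-j)\eta_j$ and both $A^{-1}Y_j^k$ and the short-range part of the step-difference sum are controlled by $\exp(-c\lambda_A(k-j)\eta_j)$; when $k>2j$ one has $\opnorm{Y_j^k}\lesssim\exp(-c\lambda_A j^{1-\alpha})$, which for large $j$ is dominated by $j^{\alpha-1}$, so the entire long-range contribution is absorbed there. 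The step-difference sum itself is $\lesssim j^{\alpha-1}$ exactly as in claim 2). The hypothesis $M\le N^m$, together with $k\le e_M\asymp(MN)^{1/(1-\alpha)}\le N^{(m+1)/(1-\alpha)}$, keeps $\log k\lesssim\log N$ and guarantees that all of these approximations hold uniformly over $s_1\le j<k\le e_M$ once $N$ is large.

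The main obstacle is the $Z_j^k$ estimate in claim 3): one must replace the genuinely varying step sizes $\eta_i$ by the single reference value $\eta_j$ while retaining the decay rate $\eta_j$ in the exponential residual, and at the same time show that the long-range regime $k-j\gg j^\alpha$ — where $\opnorm{Y_j^k}$ is super-polynomially small rather than of size $\exp(-c(k-j)\eta_j)$ — is correctly folded into the polynomial term $j^{\alpha-1}$. Pinning down the constant $c$ and the uniformity of these bounds across the whole index window via $M\le N^m$ and large $N$ is the delicate part; the remaining manipulations are routine summations.
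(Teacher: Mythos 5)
Your proposal is correct, but it takes a genuinely different route from the paper's in claims 2) and 3), and in fact proves something slightly stronger. Claim 1) is identical (eigendecomposition plus $1-x\le e^{-x}$), and both arguments pivot on the same telescoping identity $\sum_{i=j+1}^k\eta_i Y_j^{i-1}=A^{-1}(I-Y_j^k)$; from there you diverge. For $\opnorm{S_j^k}$ your PSD-monotonicity chain $\eta_k S_j^k\preceq\sum_i\eta_i Y_j^{i-1}\preceq A^{-1}$ replaces the paper's geometric-series summation of the claim-1) bound at rate $\eta_k$. For $Z_j^k$ the paper bounds $\eta_j-\eta_i$ by a quantity of the form $Ck^\alpha j^{-1}\eta_{i+1}\sum_{m=j+1}^i\eta_m$ and then controls $\sum_i\eta_{i+1}\bigl(\sum_m\eta_m\bigr)\exp\bigl(-\lambda_A\sum_m\eta_m\bigr)$ by an integral comparison with $xe^{-\lambda_A x}$, which yields exactly the stated $k^\alpha j^{-1}$; you instead use the mean-value bound $\eta_j-\eta_i\lesssim j^{-\alpha-1}(i-j)$ together with a near/far split of $\sum_i(i-j)\opnorm{Y_j^{i-1}}$ at $i=2j$, which gives the sharper bound $j^{\alpha-1}$ uniformly in $k$ (in the far range $\opnorm{Y_j^{i-1}}\lesssim\exp(-c\lambda_A i^{1-\alpha})$, so that contribution is $O(1)$). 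This sharpening changes the character of claim 3): the paper proves it by splitting at a batch endpoint $e_m$ with $j\le e_m\le k$, writing $Z_j^k=Z_j^{e_m}+\eta_j Y_j^{e_m}S_{e_m}^k$, and using one full batch of decorrelation $\exp(-c\lambda_A N)$ to beat $\opnorm{S_{e_m}^k}\lesssim (MN)^{\alpha/(1-\alpha)}$ --- which is precisely where the hypothesis $M\le N^m$ enters; in your argument claim 3) falls out of the strengthened claim 2) via the same $k\le 2j$ versus $k>2j$ dichotomy, using only that $j\ge s_1\asymp N^{1/(1-\alpha)}$ is large, so the batch structure and the polynomial-growth condition become essentially superfluous. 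The trade-off: the paper's route reuses the batch machinery of Lemma \ref{lem:approx} that is needed downstream anyway, while yours is more self-contained and reveals that the lemma does not really require $M\le N^m$. Two small repairs for a write-up: (i) your parenthetical that the short-range part of the step-difference sum is ``controlled by $\exp(-c\lambda_A(k-j)\eta_j)$'' is inaccurate --- that sum is only $O(j^{\alpha-1})$, which is what you correctly invoke afterwards, and which suffices since the target bound is a sum of the two terms; (ii) the estimate $\sum_{i>j}\exp\bigl(-\lambda_A\sum_{l=j+1}^i\eta_l\bigr)\lesssim\eta_j^{-1}$ is not a ``direct'' summation (naively one only gets $\eta_k^{-1}$): it needs the same near/far or dyadic splitting you use elsewhere, since the decay rate degrades from $\eta_j$ to $\eta_i$ as $i$ grows.
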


\begin{proof}
  Assume $Q\Lambda Q^\top $ is the eigenvalue decomposition of $A$, then
  \[
  \prod^k_{i=j+1} (I-\eta_i A)=Q\prod^k_{i=j+1} (I-\eta_i \Lambda) Q^\top 
  \]
  which immediately gives us the  claim 1) since $1-x\leq \exp(-x)$ for $x\in [0,1]$. The first part of 2) comes as an immediate consequence since:
  \[
  \|S_j^k\|\leq \sum_{i=j+1}^k \|Y_j^i\|\leq
  \sum_{i=j+1}^k \exp(-\lambda_A (i-j)\eta_k)\leq \lambda_A^{-1}\eta_k^{-1},
  \]
  while $\eta_k\asymp k^{-\alpha}$. To show the second part of 2), we sum the identity $Y^{i}_j=Y^{i-1}_j-\eta_k A Y^{i-1}_j$ from $i=j+1$ to $i=k+1$, which gives us:
  \[
  Y^{k+1}_{j}=I-\sum_{i=j}^{k}\eta_{k}AY^{i}_j.
  \]
  This can be rewritten as
  \[
  \sum_{i=j}^{k}\eta_{i} Y^{i}_j=A^{-1}-A^{-1}Y_j^{k+1}.
  \]
  Therefore,
  \begin{align*}
  \opnorm{Z^{k}_j}
  &\leq \opnorm{\eta_iS^{n}_i-\sum_{i=j}^{k}\eta_{i} Y^{i}_j}+\opnorm{\sum_{i=j}^{k}\eta_{i} Y^{i}_j-A^{-1}}\\
  &\leq \opnorm{\eta_iS^{n}_i-\sum_{i=j}^{k}\eta_{i} Y^{i}_j}+\opnorm{A^{-1}}\exp\left(-\lambda_A\sum_{m=j+1}^{k+1}\eta_m\right).
  \end{align*}
  In order for us to have claim 2), it suffices to bound the first part. Notice that for $j\leq m\leq i$,
  \begin{align*}
  \eta_{m-1}-\eta_{m}& =\eta ((m-1)^{-\alpha}-m^{-\alpha})=\frac{\eta}{\alpha} \int_{m-1}^{m}x^{-\alpha-1} dx
  \\ & \leq \frac{2^{\alpha+1}\eta}{\alpha}  m^{-\alpha-1}
  \leq 4(\eta\alpha)^{-1}\eta_{m}\eta_{i+1} (i+1)^\alpha j^{-1},
  \end{align*}
  therefore
  \begin{align*}
  \eta_j-\eta_i=\sum_{m=j+1}^{i} (\eta_{m-1}-\eta_m)
  & \leq  \sum_{m=j+1}^i4(\eta\alpha)^{-1}\eta_{m}\eta_{i} (i+1)^\alpha j^{-1} \\
  & \leq 4(\eta\alpha)^{-1}(i+1)^{\alpha}j^{-1}\eta_{i+1}\sum_{m=j+1}^i\eta_m.
  \end{align*}
  Using $S^{k}_j=\sum_{i=j+1}^{k}Y^{i}_j$,
  \begin{align*}
  & \opnorm{\eta_j S^{k}_j-\sum_{i=j}^{k}\eta_{i} Y^{i}_j} \\
  \leq \;&   \sum_{i=j}^{k}(\eta_j-\eta_i)\opnorm{Y_j^i}+\eta_j \\
  \leq \;&  4(\eta\alpha)^{-1}k^\alpha j^{-1}\sum_{i=j}^{k}\eta_{i+1}\bigg(\sum_{m=j+1}^i\eta_m\bigg)\exp\bigg(-\lambda_A\sum_{m=j+1}^i \eta_m \bigg)+\eta j^{-\alpha}
  \end{align*}
  Note that $xe^{-\lambda_A x}\leq  e^{\lambda_ A}ye^{-\lambda_A y}$ when
  $x\leq y\leq x+1$, so
  \begin{align*}
  &\eta_{i+1}\bigg(\sum_{m=j+1}^i\eta_m\bigg)\exp\Bigl(-\lambda_A\sum_{m=j+1}^i \eta_m \Bigr)\\
  = &\;\int^{\sum_{m=j+1}^{i+1}\eta_m}_{\sum_{m=j+1}^{i}\eta_m}\bigg(\sum_{m=j+1}^i\eta_m\bigg)\exp\bigg(-\lambda_A\sum_{m=j+1}^i \eta_m\bigg)\\
  \leq \; & e^{\lambda_A}\int^{\sum_{m=j+1}^{i+1}\eta_m}_{\sum_{m=j+1}^{i}\eta_m}y e^{-\lambda_ Ay}dx.
  \end{align*}
  So
  \[
  \sum_{i=j}^{k}\eta_{i+1}\bigg(\sum_{m=j+1}^i\eta_m\bigg)\exp\Biggl(-\lambda_A\sum_{m=j+1}^i \eta_m\Biggr)
  \leq e^{\lambda_A}\int^\infty_0 x e^{-\lambda_ Ax}dx,
  \]
  which is a constant. Note that $j^{-\alpha}\leq j^{\alpha-1}\leq k^\alpha j^{-1}$, therefore
  \[
  \opnorm{Z^{k}_j}
  \leq \opnorm{\eta_iS^{n}_i-\sum_{i=j}^{k}\eta_{i} Y^{i}_j}+\opnorm{\sum_{i=j}^{k}\eta_{i} Y^{i}_j-A^{-1}}
  \lesssim   k^\alpha j^{-1}+\exp\bigg(-\lambda_A\sum_{i=j+1}^k\eta_i\bigg).
  \]
  
  For claim 3), when $j$ and $k$ are in neighboring batches, the second claim already produces the results, since the $k/j$ is bounded by $3^{\frac{1}{1-\alpha}}$. Otherwise there is a batch between $j,k$, that is for  an $m$ the following holds,
  \[
  j\leq e_{m-1}\leq e_m\leq k.
  \]
  Note that
  \[
  S_j^k=S_j^{e_m}+Y_{j}^{e_m} S_{e_m}^k
  \]
  so
  \[
  Z^k_j=\eta_j S_j^k-A^{-1}=Z_j^{e_m}+Y_{e_m}^k S_{e_m}^k
  \]
  Then because $e_m$ and $j$ are in neighboring batches, by Lemma \ref{lem:approx} claim 3, there is a $c$ such that
  \[
  \opnorm{Z_j^{e_m}}\lesssim K_1j^{\alpha-1}+\exp\left(-\lambda_A \sum_{i=j+1}^{e_m}\eta_i\right)\leq
  j^{\alpha-1}+\exp(-c\lambda_A N).
  \] Because $e_{m}$ is at least one batch away from $j$, so by the first claim of this lemma and Lemma \ref{lem:approx} 4)
  \[
  \opnorm{Y_j^{e_{m}}}\leq \exp\left(-\lambda_A \sum_{i=s_{m}}^{e_m}\eta_i\right)\leq \exp(-\lambda_A c N),
  \]
  while by the second claim,
  \[
  \opnorm{S^{k}_{e_{m}}}\lesssim e_M^\alpha\asymp (NM)^{\frac{\alpha}{1-\alpha}}.
  \]
  Then because $M$ is of polynomial order of $N$, so for $N$ large enough,
  \[
  \|Y_{e_m}^k S_{e_m}^k\|\lesssim  \eta^\alpha (NM)^{\frac{\alpha}{1-\alpha}}\exp(-\lambda_A c N)\leq e_M^{\alpha-1}\leq j^{\alpha-1}
  \]
  Therefore,
  \[
  \opnorm{Z^k_j}\lesssim j^{\alpha-1}+\exp(-c\lambda_A N)\asymp j^{\alpha-1}.
  \]
\end{proof}

\subsection{A linear framework}
This section is devoted to the linear framework illustrated by Lemma \ref{thm:linear}. As we will see, the following two quantities
\[
\widehat{S}:= M^{-1}\sum_{k=1}^Mn_k^{-1}\left(\sum_{i=s_k}^{e_k}\xi_i\right)\left(\sum_{i=s_k}^{e_k}\xi_i\right)^\top 
\]
and $M^{-1}\sum_{k=1}^M n_k\overline{U}_{n_k}\overline{U}_{n_k}^\top $ converge to the asymptotic covariance $A^{-1}S A^{-1}$. The strategy of the proof below is first showing $\widehat{S}$ converges to the right covariance, then $M^{-1}\sum_{k=1}^M n_k\overline{U}_{n_k}\overline{U}_{n_k}^\top $ is not far from the $\widehat{S}$, and eventually reaches Lemma \ref{thm:linear}.

\begin{lem}
  \label{lem:trace}
  Given two symmetric $d\times d$ matrices $S$ and $\Sigma$, suppose $S$ is also positive semidefinite, then
  \[
  \tr(S\Sigma)\leq \tr(S)\opnorm{\Sigma}.
  \]
\end{lem}
\begin{proof}
  Suppose the eigenvalue decomposition of $\Sigma$ is $\Phi D\Phi^\top $. Suppose also that $Q^\top  \Lambda Q$ is the eigenvalue decomposition of $\Phi^\top  S\Phi$. Then
  \[
  \tr(S\Sigma)=\tr(\Phi^\top S\Phi D)=\tr(Q^\top \Lambda Q D)=\tr(\Lambda^{1/2} Q D Q^\top  \Lambda^{1/2}).
  \]
  Let $e_i$ be the i-th standard euclidean basis vector, then
  \[
  e_i^\top \Lambda^{1/2} Q D Q^\top  \Lambda^{1/2}e_i=\lambda_i e_i^\top QDQ^\top e_i\leq \lambda_i \|D\| \|Q^\top e_i\|^2=\lambda_i \|\Sigma\|.
  \]
  Summing over all $i$ yields our claim.

\end{proof}

\begin{lem}
  \label{lem:martingale}
  Under Assumptions \ref{aspt:convexity} and \ref{aspt:martingale}, the following
  \begin{equation}\label{eq:hat_S}
  \widehat{S}:= M^{-1}\sum_{k=1}^Mn_k^{-1}\left(\sum_{i=s_k}^{e_k}\xi_i\right)\left(\sum_{i=s_k}^{e_k}\xi_i\right)^\top 
  \end{equation}
  is a consistent estimator of $S$, namely
  \[
  \mathbb{E}\|\widehat{S}-S\|^2\lesssim  \left(\dcon^3 N^{-\frac{\alpha}{2(1-\alpha)}}M^{-\beta_\alpha}+\dcon^2 M^{-1} \right)
  \]
  with $\beta_\alpha:=1-\left( 1-\frac{ \alpha}{2-2\alpha}\right)^+$.
\end{lem}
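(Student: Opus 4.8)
The plan is to bound the operator norm by the Frobenius norm, $\|\widehat S - S\|\le\|\widehat S - S\|_F$ (this conversion is where the dimension factor $d$ enters), and then expand $\E\|\widehat S-S\|_F^2$ via a bias--variance split. Write $G_k:=\sum_{i=s_k}^{e_k}\xi_i$ and $W_k:=n_k^{-1}G_kG_k^T$, so $\widehat S=M^{-1}\sum_{k=1}^M W_k$. Because $\{\xi_i\}$ is a martingale difference sequence, $\E[\xi_i\xi_j^T]=0$ for $i\ne j$, hence $\E W_k=S+n_k^{-1}\sum_{i=s_k}^{e_k}\E\,\Sigma(\Delta_{i-1})$: only the conditional-covariance correction $\Sigma$ survives. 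I would then use $\E\|\widehat S-S\|_F^2\le 2\|\E\widehat S-S\|_F^2+2\,\E\|\widehat S-\E\widehat S\|_F^2$ and handle the bias and the variance separately. Set $\gamma:=\tfrac{\kappa\alpha}{2(1-\alpha)}$, so that $\tfrac{\kappa\alpha}{1-\alpha}=2\gamma$ and $\beta_\alpha=1-(1-\gamma)^+$.

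For the bias, Assumption \ref{aspt:martingale}(2) gives $\|\E\,\Sigma(\Delta_{i-1})\|\lesssim\E\|\Delta_{i-1}\|^\kappa+\E\|\Delta_{i-1}\|^2\lesssim i^{-\kappa\alpha/2}$ by the moment rates of Lemma \ref{lem:Delta}(2). Summing over a batch and using the index estimates of Lemma \ref{lem:approx} (every index in batch $k$ is of order $(kN)^{1/(1-\alpha)}$) bounds the $k$-th batch bias by $(kN)^{-\gamma}$, and averaging over batches with Lemma \ref{lem:approx}(5) yields $\|\E\widehat S-S\|\lesssim N^{-\gamma}M^{(1-\gamma)^+-1}$. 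Squaring and converting to Frobenius norm gives the $d\,N^{-\kappa\alpha/(1-\alpha)}M^{-\beta_\alpha}$ term, since $2[(1-\gamma)^+-1]\le(1-\gamma)^+-1=-\beta_\alpha$ in both regimes $\gamma<1$ and $\gamma\ge1$.

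For the variance, expand $\E\|\widehat S-\E\widehat S\|_F^2=M^{-2}\sum_{k,l}\E\langle W_k-\E W_k,\,W_l-\E W_l\rangle_F$. The diagonal terms reduce to $n_k^{-2}\,\E\|G_k\|_2^4$, which is controlled by the fourth-moment bound of Assumption \ref{aspt:martingale}(3) together with Lemma \ref{lem:Delta}, and after the $n_k^{-2}n_k^2$ cancellation they contribute at order $d\,M^{-1}$. For off-diagonal terms with $k<l$, I would condition on $\mathcal{F}_{s_l-1}$: since $W_k-\E W_k$ is $\mathcal{F}_{s_l-1}$-measurable and $\E_{s_l-1}[W_l-\E_{s_l-1}W_l]=0$, the martingale property kills the main part, leaving only the pairing of $W_k-\E W_k$ with the fluctuation $\E_{s_l-1}W_l-\E W_l=n_l^{-1}\sum_{j}(\E_{s_l-1}\Sigma(\Delta_{j-1})-\E\,\Sigma(\Delta_{j-1}))$ of the conditional covariance. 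For neighboring batches ($l=k+1$), a direct Cauchy--Schwarz bound using $\E\|\Sigma(\Delta_{j-1})\|_F^2\lesssim d\,j^{-\kappa\alpha}$ shows these are absorbed into the $d\,M^{-1}$ term.

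The main obstacle is the non-neighboring terms ($l\ge k+2$): here crude Cauchy--Schwarz only yields $N^{-\gamma}$ per pair, which is too weak, so decorrelation must be exploited. The idea is to condition further on $\mathcal{F}_{e_k}$ and invoke the exponential contraction of Lemma \ref{lem:Delta}(1): the centered deviation $\E_{e_k}\Sigma(\Delta_{j-1})-\E\,\Sigma(\Delta_{j-1})$ carries a factor $\exp(-c\sum_{i=e_k}^{s_l-1}\eta_i)$, and because at least one full intervening batch separates $e_k$ from $s_l$, Lemma \ref{lem:approx}(4) forces $\sum_{i=e_k}^{s_l-1}\eta_i\gtrsim N$, so each such pair is suppressed by $e^{-cN}$ and the $O(M^2)$ far pairs are negligible for large $N$. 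Making this rigorous---propagating the exponential forgetting of $\Delta_{e_k}$ through the nonlinear map $\Sigma$ while retaining the mean-zero centering, so that the conditional bounds of Lemma \ref{lem:Delta}(1) apply to the \emph{centered} quantity rather than to $\Sigma$ itself---is the technical crux. Collecting the bias ($d\,N^{-\kappa\alpha/(1-\alpha)}M^{-\beta_\alpha}$), the diagonal and neighboring variance terms ($d\,M^{-1}$), and the negligible far terms gives the claimed bound.
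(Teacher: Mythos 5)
Your overall architecture is sound and, up to reorganization, parallels the paper: the paper also enters the dimension factor through the trace bound $\E\|\widehat S-S\|^2\le \operatorname{tr}\E(\widehat S-S)^2\le d\,\|\E(\widehat S-S)^2\|$, also isolates the bias $B:=\E\widehat S-S=M^{-1}\sum_k n_k^{-1}\sum_i \E\,\Sigma(\Delta_{i-1})$ via the martingale pairing, and also disposes of same-batch and neighboring-batch contributions by crude fourth-moment bounds at cost $M^{-1}$. In fact your bias--variance split is cleaner than the paper's decomposition $\E(\widehat S-S)^2=(\E\widehat S^2-S^2)-BS-SB$: in your split the bias enters only squared, so with $\gamma:=\tfrac{\kappa\alpha}{2(1-\alpha)}$ you legitimately get $d\,N^{-2\gamma}M^{-\beta_\alpha}$ from it, whereas the paper's cross terms $BS+SB$ enter linearly and its own proof only yields $N^{-\gamma}$ there. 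Your conditioning on $\mathcal F_{s_l-1}$, which reduces every off-diagonal term to a pairing against the fluctuation $n_l^{-1}\sum_j(\E_{s_l-1}\Sigma(\Delta_{j-1})-\E\,\Sigma(\Delta_{j-1}))$, is also a more careful version of the paper's (slightly loose) claim that non-paired index terms vanish.

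The genuine gap is exactly where you flagged it: the far-pair variance terms. Your plan needs $\E_{e_k}\Sigma(\Delta_{j-1})-\E\,\Sigma(\Delta_{j-1})$ to carry a factor $\exp(-c\sum_{i=e_k}^{j-1}\eta_i)$, but Lemma \ref{lem:Delta}(1) cannot deliver this: its conditional bounds are one-sided bounds on conditional \emph{moments} with a non-decaying polynomial floor $Cm^{-\alpha}$; they say nothing about the difference between conditional and unconditional expectations being exponentially small. Making that true would require a coupling/ergodicity argument for the time-inhomogeneous, nonlinear chain (contraction of $x\mapsto x-\eta\nabla f(x,\zeta)$ pathwise, which does not follow from strong convexity of $F$ alone), and no such argument exists in the paper. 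The paper avoids the issue entirely: in its expansion of $\E\widehat S^2$, the far pairs are bounded \emph{polynomially}, $\|\E(\xi_{i_1}\xi_{i_1}^T)(\xi_{i_3}\xi_{i_3}^T)-S^2\|\lesssim i_1^{-\kappa\alpha/2}+i_3^{-\kappa\alpha/2}$, the exponential factor $\exp(-c\lambda_A N)$ from Lemma \ref{lem:Delta}(1) being used only to absorb the memory term $\|\Delta_{i_1}\|_2^\kappa$ into this floor; summation with Lemma \ref{lem:approx}(5) then gives $N^{-\gamma}M^{(1-\gamma)^+-1}=N^{-\gamma}M^{-\beta_\alpha}$. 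Note that your own neighboring-pair Cauchy--Schwarz argument, applied verbatim to \emph{all} off-diagonal pairs, produces exactly this $d\,N^{-\gamma}M^{-\beta_\alpha}$ bound --- you rejected it as ``too weak'' only because you targeted the literal exponent $\tfrac{\kappa\alpha}{1-\alpha}$ in the statement. But the paper's own proof never achieves that exponent either (its cross terms and far pairs both stop at $N^{-\gamma}$), and the bound actually invoked downstream in the proof of Lemma \ref{thm:linearDelta} is $N^{-\frac{\kappa\alpha}{2(1-\alpha)}}M^{-\beta_\alpha}+M^{-1}$; the stated $N$-exponent is best read as a typo. So: drop the unproven decorrelation step, run your Cauchy--Schwarz bound on all off-diagonal pairs, and you recover the paper's proof; insist on the stated exponent, and you are left with a claim neither you nor the paper has established.
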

\begin{proof}
  Denote by $\widetilde\xi_n=-\nabla f(x^*,\zeta_n)$, and $\widehat\xi_n=\xi_n-\widetilde\xi_n$. Therefore the sequence $\{\xi_n\}$ is decomposed into an \emph{i.i.d.} zero-mean sequence $\{\widetilde\xi_n\}$, and a martingale difference sequence $\{\widehat\xi_n\}$. Further denote by
  \[
  \widetilde{S}:= M^{-1}\sum_{k=1}^Mn_k^{-1}\left(\sum_{i=s_k}^{e_k}\widetilde\xi_i\right)\left(\sum_{i=s_k}^{e_k}\widetilde\xi_i\right)^\top .
    \]
  We have
  \begin{equation}\label{eq:lemd4dec}
  \E\opnorm{\widehat{S}-S}^2\lesssim \E\opnorm{\widetilde{S}-S}^2+\E\opnorm{\widehat{S}-\widetilde{S}}^2.
  \end{equation}

  For the first term on the right hand side of \eqref{eq:lemd4dec}, since
  \[
  \mathbb{E}\opnorm{\widetilde{S}-S}^2 \leq \tr\; \E (\widetilde{S}-S)^2,
  \]
  it suffices to provide a bound of $\tr \;\E (\widetilde{S}-S)^2 $. Notice that
  \[
  \mathbb{E}\widetilde{S}
  = M^{-1}\sum_{k=1}^Mn_k^{-1}\sum_{i=s_k}^{e_k}\mathbb{E}\widetilde\xi_i\widetilde\xi_i^\top =M^{-1}\sum_{k=1}^Mn_k^{-1} \sum_{i=s_k}^{e_k}S=S.
  \]
  Thererfore $\mathbb{E}(\widetilde{S}-S)^2=\mathbb{E} \widetilde{S}^2-S^2.$ For the computation of $\mathbb{E}\widetilde{S}^2$, which is
  \[
  \mathbb{E}\widetilde{S}^2=M^{-2}\sum_{j=1}^M\sum_{k=1}^M n_j^{-1}n_k^{-1}\sum_{i_1,i_2=s_j}^{e_j}\sum_{i_3,i_4=s_k}^{e_k}
  \mathbb{E}\widetilde\xi_{i_1}\widetilde\xi_{i_2}^\top  \widetilde\xi_{i_3}\widetilde\xi_{i_4}^\top.
  \]
  Since $\widetilde\xi_i$ are \emph{i.i.d.}, one can see that the terms are 0 unless indices come in pairs, i.e.  $i_1=i_2, i_3=i_4$ or $i_1=i_3, i_2=i_4$ or   $i_1=i_4, i_2=i_3$. Also notice that the latter two only happens when all indices are in the same batch. So
  \small
  \begin{align*}
  \mathbb{E}\widetilde{S}^2=  \frac{1}{M^2}\Biggl[ & \sum_{j=1}^M\sum_{k=1}^M n_j^{-1}n_k^{-1}\sum_{i_1=s_j}^{e_j}\sum_{i_3=s_k}^{e_k}
  \mathbb{E}\widetilde\xi_{i_1}\widetilde\xi_{i_1}^\top  \widetilde\xi_{i_3}\widetilde\xi_{i_3}^\top 
  \\
  & + \sum_{j=1}^Mn_j^{-2}\sum_{i_1,i_3=s_j,i_1\neq i_3}^{e_j}\left(
  \mathbb{E}\widetilde\xi_{i_1}\widetilde\xi_{i_3}^\top  \widetilde\xi_{i_1}\widetilde\xi_{i_3}^\top   +\mathbb{E}\widetilde\xi_{i_1}\widetilde\xi_{i_3}^\top  \widetilde\xi_{i_3}\widetilde\xi_{i_1}^\top \right)\Biggr],
  \end{align*}
  \normalsize
  We note that $
  \mathbb{E}\widetilde\xi_{i_1}\widetilde\xi_{i_1}^\top  \widetilde\xi_{i_3}\widetilde\xi_{i_3}^\top
  =S^2$ when $i_1\neq i_3$ and $\tr(\mathbb{E}\widetilde\xi_{i_1}\widetilde\xi_{i_1}^\top  \widetilde\xi_{i_3}\widetilde\xi_{i_3}^\top)\lesssim C_d^2+C_d^6 i_1^{-2\alpha}$ when $i_1=i_3$. Therefore
    \small
  \begin{align}
  \nonumber \tr(\mathbb{E}\widetilde{S}^2-S^2)= &M^{-2}\sum_{j=1}^M\sum_{k=1}^M n_j^{-1}n_k^{-1}\sum_{i_1=s_j}^{e_j}\sum_{i_3=s_k}^{e_k}\tr(\mathbb{E}\widetilde\xi_{i_1}\widetilde\xi_{i_1}^\top  \widetilde\xi_{i_3}\widetilde\xi_{i_3}^\top -S^2)\\
  \nonumber&+ M^{-2}\sum_{i_1,i_2=s_j,i_1\neq i_2}^{e_j}\tr(\mathbb{E}\widetilde\xi_{i_1}\widetilde\xi_{i_2}^\top \widetilde\xi_{i_1}\widetilde\xi_{i_2}^\top )+\tr(\mathbb{E}\widetilde\xi_{i_1}\widetilde\xi_{i_2}^\top \widetilde\xi_{i_2}\widetilde\xi_{i_1}^\top )\\
  \nonumber= &M^{-2}\sum_{j=1}^M n_j^{-2}\left(\sum_{i_1=s_j}^{e_j}\mathbb{E}\big\|\widetilde\xi_{i_1}\big\|_2^4+\sum_{i_1,i_2=s_j,i_1\neq i_2}^{e_j}\mathbb{E}\big(\widetilde\xi_{i_2}^\top \widetilde\xi_{i_1}\big)^2+\E\big\|\widetilde\xi_{i_2}\big\|_2^2\big\|\widetilde\xi_{i_1}\big\|_2^2 \right)\\
  \nonumber\lesssim &M^{-2}\sum_{j=1}^M \left(n_j^{-1}(\dcon^2+\dcon^6 s_j^{-2\alpha})+\left(\tr(S)\right)^2 \right).
  \end{align}
        \normalsize
  Applying Lemma \ref{lem:approx} 1) and 5), we have
  \begin{equation}\label{eq:part1}
  \mathbb{E}\opnorm{\widetilde{S}-S}^2\lesssim \tr(\mathbb{E}\widetilde{S}^2-S^2)\lesssim\dcon^6 M^{-2}N^{\frac{-1-2\alpha}{1-\alpha}}+\dcon^2M^{-1}.
  \end{equation}

  For the second term on the right hand side of \eqref{eq:lemd4dec}, we first denote by
  \[
  \widecheck{S}=M^{-1}\sum_{k=1}^Mn_k^{-1}\left(\sum_{i=s_k}^{e_k}\widehat\xi_i\right)\left(\sum_{i=s_k}^{e_k}\widehat\xi_i\right)^\top.
  \]
  We have \small
  \begin{align}
  \nonumber  \E\opnorm{\widehat{S}-\widetilde{S}}^2= &\E \opnorm{M^{-1}\sum_{k=1}^Mn_k^{-1}\left(\left(\sum_{i=s_k}^{e_k}\widehat\xi_i\right)\left(\sum_{i=s_k}^{e_k}\widetilde\xi_i\right)^\top+\left(\sum_{i=s_k}^{e_k}\widetilde\xi_i\right)\left(\sum_{i=s_k}^{e_k}\widehat\xi_i\right)^\top\right)+2\widecheck{S}}^2\\
  \nonumber\lesssim & \E\opnorm{M^{-1}\sum_{k=1}^Mn_k^{-1}\left(\sum_{i=s_k}^{e_k}\widehat\xi_i\right)\left(\sum_{i=s_k}^{e_k}\widetilde\xi_i\right)^\top}^2+\E\opnorm{\widecheck{S}}^2\\
  \label{eq:new1}\lesssim &\sqrt{\E\;\tr(\widecheck{S}^2)}\sqrt{\E\;\tr(\widetilde{S}^2)}+\E\;\tr\big(\widecheck{S}^2\big).
  \end{align}  \normalsize
  It suffices to provide a bound for $\E \tr\big(\widecheck{S}^2\big)$. 
 Let $\overline{\xi}_k=n_k^{-1}\sum_{i=s_k}^{e_k} \widehat\xi_i$. We have 
  \begin{align*}
  \E\;\tr\big(\widecheck{S}^2\big) \; &=M^{-2}\sum_{k=1}^M\sum_{l=1}^M n_kn_l\E \tr(\overline{\xi}_k\overline{\xi}_k^\top \overline{\xi}_l \overline{\xi}_l^\top)\\
  \text{(Cauchy)}&\lesssim M^{-2}\sum_{k=1}^M\sum_{l=1}^M n_kn_l\sqrt{ \E\|\overline{\xi}_k\|_2^4}\sqrt{\E\|\overline{\xi}_l\|_2^4}\\
  \text{\cite[Theorem 2.1]{rio2009moment}}&\lesssim M^{-2}\sum_{k=1}^M\sum_{l=1}^Mn_k^{-1}n_l^{-1}\sum_{i=s_k}^{e_k}\sqrt{\E\big\|\widehat\xi_i\big\|_2^4}\sum_{j=s_l}^{e_l}\sqrt{\E\big\|\widehat\xi_j\big\|_2^4}.
  \end{align*}
   By Lemma \ref{lem:Delta}, we have
  \begin{align*}
  \E\big\|\widehat\xi_i\big\|_2^4
  \lesssim&\E\big\|\nabla F(x_{i-1})-\nabla F(x^*)\big\|_2^4
  +  \E\big\|\nabla f(x_{i-1},\zeta_i)-\nabla f(x^*,\zeta_i)\big\|_2^4\\
  \lesssim & \dcon^4\;\E\|x_{i-1}-x^*\|_2^4\lesssim \dcon^6 (i-1)^{-2\alpha}.
  \end{align*}
  Therefore, using estimations from Lemma \ref{lem:approx} 1) and 5), we have
  \begin{align*}
  \E\;\tr\big(\widecheck{S}^2\big)&\lesssim \dcon^{6} M^{-2}\sum_{k=1}^M\sum_{l=1}^M(s_k-1)^{-\alpha}(s_l-1)^{-\alpha}\\
  &\lesssim \dcon^{6} M^{-2}\sum_{k=1}^M\sum_{l=1}^M(kl)^{-\frac{\alpha}{1-\alpha}}N^{-\frac{2\alpha}{1-\alpha}}\\
  &=  \dcon^{6} M^{-2}\left(\sum_{k=1}^M k^{-\frac{\alpha}{1-\alpha}}\right)^2 N^{-\frac{2\alpha}{1-\alpha}}\lesssim  \dcon^{6} \left(M^{-1+(1-\frac{\alpha}{1-\alpha})^+}N^{-\frac{\alpha}{1-\alpha}}\right)^2.
  \end{align*}
  Plugged into \eqref{eq:new1}, we have
  \[
  \E\opnorm{\widehat{S}-\widetilde{S}}^2\lesssim \dcon^{4} M^{-1+(1-\frac{\alpha}{1-\alpha})^+}N^{-\frac{\alpha}{1-\alpha}}
  +\dcon^2 M^{-1}.
  \]
  Combining with \eqref{eq:part1} we complete the proof.
\end{proof}

\begin{lem}
  \label{thm:linearDelta}
  Under Assumptions \ref{aspt:convexity} and \ref{aspt:martingale}, the following holds for the linear oracle sequence
  \[
  \mathbb{E}\; \Bigl\|M^{-1}\sum_{k=1}^M n_k\overline{U}_{n_k}\overline{U}_{n_k}^\top -A^{-1}SA^{-1} \Bigr\|\lesssim \dcon M^{-\frac{1}{2}}+\dcon N^{-\frac{1}{2}}+\dcon^{\frac32} (MN)^{-\frac{\alpha}{4-4\alpha}}.
  \]
\end{lem}
\begin{proof}
  We define the following matrices sequences through recursion as in Lemma \ref{lem:matrixestimate},
  \[
  Y^j_i=(I-\eta_jA)Y^{j-1}_i,\quad Y^i_i=I_d.
  \]
  Then the recursion of  $U_n$ can be rewritten as
  \begin{align*}
  U_n&=Y^n_{n-1}U_{n-1}-Y^n_n\eta_n\xi_n\\
  &=Y^n_{n-2}U_{n-2}-Y^n_{n-1}\eta_{n-1}\xi_{n-1}-Y^n_n\eta_n\xi_n\\
  &=Y^n_{s-1}U_{s-1}-\sum_{i=s}^n Y^n_{i}\eta_i\xi_i.
  \end{align*}
  Therefore $\overline{U}_{n_k}$ can be written as
  \begin{align*}
  \overline{U}_{n_k}&=n_k^{-1}\left[\sum_{l=s_k}^{e_k}Y^l_{s_k-1}U_{s_k-1}-\sum_{l=s_k}^{e_k}\sum_{i=s_k}^l Y^l_{i}\eta_i\xi_i\right]\\
  &=n_k^{-1}\left[\sum_{l=s_k}^{e_k}Y^l_{s_k-1}U_{s_k-1}-\sum_{i=s_k}^{e_k}\bigg(\sum_{l=i}^{e_k} Y^l_{i}\bigg)\eta_i\xi_i\right]\\
  &=n_k^{-1}\left[\sum_{l=s_k}^{e_k}Y^l_{s_k-1}U_{s_k-1}-\sum_{i=s_k}^{e_k}\bigg(S_i^{e_k}+I\bigg)\eta_i\xi_i\right]\\
  &=n_k^{-1}S_{s_k-1}^{e_k} U_{s_k-1}-n_k^{-1}A^{-1}\sum_{i=s_k}^{e_k}\xi_i-n_k^{-1}\sum_{i=s_k}^{e_k} Z_i^{e_k}\xi_i-n_k^{-1}\sum_{i=s_k}^{e_k}\eta_i\xi_i
  \end{align*}
  Here we denoted
  \[
  S_{m}^{e_k}=\sum_{l=m+1}^{e_k} Y_{m}^l=\sum_{l=m}^{e_k} Y_{m}^l-I,\quad Z_m^{e_k}=\eta_m S_{m}^{e_k}-A^{-1},
  \]
  as in Lemma \ref{lem:matrixestimate}. Denote
  \[
  A_k:=-A^{-1}\sum_{i=s_k}^{e_k}\xi_i,\quad B_k:=S_{s_k-1}^{e_k}U_{s_k-1}-\sum_{i=s_k}^{e_k} Z_i^{e_k}\xi_i-\sum_{i=s_k}^{e_k}\eta_i\xi_i,
  \]
  then
  \begin{align}
  \label{tmp:decomp}
  M^{-1}\sum_{k=1}^M n_k \overline{U}_{n_k}\overline{U}_{n_k}^\top =&M^{-1}\sum_{k=1}^Mn_k^{-1}A_k A_k^\top \nonumber\\
  &+M^{-1}\sum_{k=1}^M n_k^{-1}\left[A_kB_k^\top +B_k A_k^\top + B_kB_k^\top \right].
  \end{align}
  Denote
  \[
  \widehat{S}:= M^{-1}\sum_{k=1}^Mn_k^{-1}\left(\sum_{i=s_k}^{e_k}\xi_i\right)\left(\sum_{i=s_k}^{e_k}\xi_i\right)^\top .
  \]
  By Lemma \ref{lem:martingale}, we have for sufficiently large $N$
  \begin{align*}
  \mathbb{E}\opnorm{M^{-1}\sum_{k=1}^Mn_k^{-1}A_k A_k^\top -A^{-1}SA^{-1}}^2
  = & \mathbb{E}\opnorm{A^{-1}(\widehat{S}-S)A^{-1}}^2 \\ \lesssim &  \left[\dcon^3 N^{-\frac{\alpha}{2(1-\alpha)}}M^{-\beta_\alpha}+\dcon^2 M^{-1} \right].
  \end{align*}
  By Cauchy-Schwartz inequlaity,
  \[
  \mathbb{E}\opnorm{M^{-1}\sum_{k=1}^Mn_k^{-1}A_k A_k^\top -A^{-1}SA^{-1}}
  \lesssim \dcon^{3/2}N^{-\frac{\alpha}{4-4\alpha}}M^{-\frac{\beta_\alpha}{2}}+\dcon M^{-\frac{1}{2}}
  \]
  Next, note that
  \[
  \mathbb{E}\|B_k\|_2^2=\mathbb{E}\opnorm{B_k B_k^\top }\leq \mathbb{E}\tr[B_k B_k^\top  ].
  \]
  Using that $\xi_n$ are martingale differences
  \begin{align*}
  & \mathbb{E}B_k B_k^\top \\
  =\; &\mathbb{E}\left[S_{s_k-1}^{e_k}U_{s_k-1}-\sum_{i=s_k}^{e_k} Z_i^{e_k}\xi_i-\sum_{i=s_k}^{e_k}\eta_i\xi_i\right]\left[S_{s_k-1}^{e_k}U_{s_k-1}-\sum_{i=s_k}^{e_k} Z_i^{e_k}\xi_i-\sum_{i=s_k}^{e_k}\eta_i\xi_i\right]^\top ,\\
  =\; & S_{s_k-1}^{e_k}\mathbb{E}\left[U_{s_k-1}U_{s_k-1}^\top \right](S_{s_k-1}^{e_k})^\top +\sum_{i=s_k}^{e_k} (Z_i^{e_k}+\eta_iI)\mathbb{E}(\xi_i\xi_i^\top ) (Z_i^{e_k}+\eta_iI)^\top ,
  \end{align*}
  From \ref{lem:matrixestimate} claim 3,
  \begin{align*}
  \tr(\mathbb{E}B_kB_k^\top )
  \leq \opnorm{S^{e_k}_{s_k-1}}^2\mathbb{E}\|U_{s_k-1}\|_2^2+\sum_{i=s_k}^{e_k}\opnorm{Z_i^{e_k}+\eta_iI}^2\mathbb{E}\|\xi_i\|_2^2,
  \end{align*}
  Note that from Lemma  \ref{lem:matrixestimate} claim 3
  \[
  \opnorm{Z_i^{e_k}+\eta_iI}\leq \opnorm{Z_i^{e_k}}+\eta i^{-\alpha}\lesssim i^{\alpha-1}+\exp\left(-c\lambda_A (e^k-i)\eta_i \right)+i^{-\alpha}.
  \]
  Using Young's inequality and the estimate from Lemma \ref{lem:approx}, and $i^{-\alpha}\leq i^{\alpha-1}$ with $\alpha\in[1/2,1)$,
  \[
  \opnorm{Z_i^{e_k}+\eta_iI}^2\lesssim {s_k}^{2\alpha-2}+\exp(-2c\lambda_A(e_k-i)\eta_{e_k}).
  \]
  Thus from the bounds of $\|U_n\|_2^2$ in Lemma \ref{lem:lindelta}, and that $\mathbb{E}\|\xi_i\|_2^2\lesssim \dcon$,
  \[
  \frac1{\dcon}\tr(\mathbb{E}B_kB_k^\top ) \lesssim s_k^\alpha+\sum_{i=s_k}^{e_k}\opnorm{Z_i^{e_k}+\eta_iI}^2\lesssim s_k^\alpha+n_k{s_k}^{2\alpha-2}+\sum_{i=s_k}^{e_k}\exp(-2c\lambda_A(e_k-i)\eta_{e_k})
  \]
  
  Note that
  \[
  \sum_{i=s_k}^{e_k}\exp(-2c\lambda_A(e_k-i)\eta_{e_k})\leq\sum_{i=0}^\infty\exp(-2c\lambda_A i\eta_{e_k})=[2c\lambda_A\eta_{e_k}]^{-1}.
  \]
  Then based on Lemma \ref{lem:approx},
  \[
  n_ks_k^{2\alpha-2}+\eta_{e_k}^{-1}
  \lesssim    N^{-\frac{1-2\alpha}{1-\alpha}}k^{-\frac{2-\alpha}{1-\alpha}}+ N^{-\frac{\alpha}{1-\alpha}}k^{-\frac{\alpha}{1-\alpha}}\lesssim s_k^\alpha.
  \]
  Using the bound that $e_k^\alpha (s_k-1)^{-\alpha}\leq 4$,
  \[
  M^{-1}\sum_{k=1}^M n_k^{-1}\mathbb{E}\|B_k\|_2^2\lesssim \dcon M^{-1}\sum_{k=1}^M n_k^{-1} s_k^\alpha.
  \]
  By Lemma \ref{lem:approx},
  \[
  n_k^{-1}s_k^\alpha\lesssim  N^{-\frac{1}{1-\alpha}}k^{-\frac{\alpha}{1-\alpha}} N^{\frac{\alpha}{1-\alpha}}k^{\frac{\alpha}{1-\alpha}}=N^{-1}.
  \]
  So
  \[
  M^{-1}\sum_{k=1}^M n_k^{-1}\mathbb{E}\|B_k\|_2^2\lesssim \dcon N^{-1}.
  \]
  On the other hand, by Lemma \ref{lem:Delta},
  \begin{align*}
  n_k^{-1}\mathbb{E}\|A_k\|_2^2=& n_k^{-1}\tr\mathbb{E}A_k A_k^\top  \\
  \leq & n_k^{-1}\|A^{-1}\|^2 \left[n_k\tr(S)+2 \sum_{i=s_k}^{e_k}\mathbb{E}(\Sigma_1\|\Delta_i\|_2+\Sigma_2\|\Delta_i\|_2^2)\right]
  \lesssim \dcon.
  \end{align*}
  So
  \[
  n_k^{-1}\mathbb{E} \|A_k\|_2\|B_k\|_2\lesssim \sqrt{n_k^{-1}\mathbb{E}\|A_k\|^2_2}\sqrt{n_k^{-1}\mathbb{E}\|B_k\|_2^2}\lesssim \dcon N^{-\frac{1}{2}}.
  \]
  Therefore, in conclusion,
  \begin{align*}
  &\mathbb{E} \opnorm{M^{-1}\sum_{k=1}^M \overline{U}_{n_k}\overline{U}_{n_k}^\top -A^{-1}SA^{-1}}\\
  \leq \; & \mathbb{E} \opnorm{M^{-1}\sum_{k=1}^M n_k^{-1}A_k A_k^\top -A^{-1}SA^{-1}} \\
  & +2M^{-1}\sum_{k=1}^M n_k^{-1}(\mathbb{E} \|A_k\|_2\|B_k\|_2+\mathbb{E}\|B_k\|_2^2)\\
  \lesssim\;&\dcon^{3/2}N^{-\frac{\alpha}{4-4\alpha}}M^{-\frac{\beta_\alpha}{2}}+\dcon M^{-\frac{1}{2}}+\dcon N^{-\frac{1}{2}}.
  \end{align*}
  
  Finally, if $\frac12\leq \frac{ \alpha}{4-4\alpha}$, then
  \[
  \dcon^{3/2}N^{-\frac{\alpha}{4-4\alpha}}M^{-\frac{\beta_\alpha}{2}}\lesssim  \dcon^{3/2}N^{-\frac{\alpha}{4-4\alpha}} M^{-\frac12}\lesssim \dcon M^{-\frac12},
  \]
  so it is an insignificant term. When $\frac12\geq \frac{ \alpha}{4-4\alpha}$, then the first term is of order $(MN)^{- \frac{ \alpha}{4-4\alpha}}$.

\end{proof}

\subsection{Proof of Lemma \ref{thm:linear}}
\label{suppsec:thmlinear}
\begin{replemma}{thm:linear}
  Under Assumptions \ref{aspt:convexity} and \ref{aspt:martingale}, when $d$ is fixed and {the step size is chosen to be $\eta_i = \eta i^{-\alpha}$ with $\alpha\in (\frac12,1)$,} the batch-means estimator based on the sequence $U_n$ with any bounded $U_0$ satisfies the following inequality for sufficiently large $N$ and $M,$
  \begin{align*}
  & \mathbb{E}\;   \opnorm{ M^{-1}\sum_{k=1}^M n_k(\overline{U}_{n_k}-\overline{U}_M)(\overline{U}_{n_k}-\overline{U}_M)^\top -A^{-1}SA^{-1}}\\
  &\lesssim  \dcon M^{-\frac{1}{2}}+\dcon N^{-\frac{1}{2}}+\dcon^{\frac32} (NM)^{-\frac{\alpha}{4-4\alpha}}.
  \end{align*}
\end{replemma}
\begin{proof}[Proof of Lemma \ref{thm:linear}]
  Using the identity $\sum_{k=1}^Mn_k\overline{U}_M=\sum_{k=1}^Mn_k\overline{U}_{n_k}$,
  we can easily find
  \begin{align*}
  &M^{-1}\sum_{k=1}^M n_k(\overline{U}_{n_k}-\overline{U}_M)(\overline{U}_{n_k}-\overline{U}_M)^\top \\
  =&M^{-1}\sum_{k=1}^M n_k\overline{U}_{n_k}\overline{U}^\top _{n_k}-M^{-1}\sum_{k=1}^M n_k\overline{U}_M\overline{U}_M^\top .
  \end{align*}
  Lemma \ref{thm:linearDelta} already shows that
  \[
  \mathbb{E} \left\|M^{-1}\sum_{k=1}^M n_k\overline{U}_{n_k}\overline{U}_{n_k}^\top -A^{-1}SA^{-1}\right\|\lesssim \dcon M^{-\frac{1}{2}}+\dcon N^{-\frac{1}{2}}+\dcon^{\frac32} (NM)^{-\frac{\alpha}{4-4\alpha}}
  \]
  holds, so in order to complete our proof, it suffices to show
  $\mathbb{E} \|M^{-1}S_M \overline{U}_M\overline{U}_M^\top \|$ is of the  order $\dcon M^{-1}$. Note that
  \[
  \overline{U}_{M}
  =S_M^{-1}S_{s_1-1}^{e_M} U_{s_1-1}-S_M^{-1}\sum_{i=s_1}^{e_M}\Big(S_{i}^{e_M}+I\Big)\eta_i\xi_i
  \]
  so using the fact that $\xi_i$ are martingale differences
  \begin{align*}
  & \mathbb{E}\|\overline{U}_{M}\|_2^2 \\
  \leq \; &
  S_M^{-2}\|S_{s_1-1}^{e_M}\|^2\mathbb{E} \|U_{s_1-1}\|_2^2
  +S_M^{-2}\sum_{i=s_1}^{e_M}\eta_i^2\|S_{i}^{e_M}+I\|^2
  \left[\tr(S)+\mathbb{E}(\Sigma_1\|\Delta_{i}\|_2+\Sigma_2 \|\Delta_{i}\|_2^2)\right].
  \end{align*}
  Lemma \ref{lem:matrixestimate} claim 3  provides the following bound using the approximations in Lemma \ref{lem:approx}
  \[
  M^{-1}S_M^{-1}\|S_{s_1-1}^{e_M}\|^2\mathbb{E} \|U_{s_1-1}\|_2^2\lesssim
  M^{-1}S_M^{-1}s_1^\alpha\dcon
  \lesssim \dcon M^{-1}N^{-1}.
  \]
  On the other hand, Lemma \ref{lem:matrixestimate} 3) also shows that
  $\eta_i^2 \|S_{i}^{e_M}+I\|^2$ is uniformly bounded,
  and by Lemma \ref{lem:Delta},
  \[
  [\tr(S)+\mathbb{E}(\Sigma_1\|\Delta_{i}\|_2+\Sigma_2 \|\Delta_{i}\|_2^2)]\lesssim \dcon.
  \]
  Therefore
  \[
  M^{-1}S_M^{-1}\sum_{i=s_1}^{e_M}\eta_i^2\|S_{i}^{e_M}+I\|^2\left[\tr(S)+\mathbb{E}(\Sigma_1\|\Delta_{i}\|_2+\Sigma_2 \|\Delta_{i}\|_2^2)\right]\lesssim M^{-1}\dcon.
  \]
  Collecting the terms with lower order produces the claimed result.
\end{proof}

\subsection{Consistency proof of Batch-Means Estimator}
\label{sec:supp_batch}
\begin{reptheorem}{thm:general}
  Under Assumptions \ref{aspt:convexity} and \ref{aspt:martingale}, when $d$ is fixed {and the step size is chosen to be $\eta_i = \eta i^{-\alpha}$ with $\alpha\in (\frac12,1)$,} the batch-means estimator initialized by any bounded $x_0$ is a consistent estimator. In particular, for sufficiently large $N$ and $M$, we have
  \begin{align*}
  &\mathbb{E}  \opnorm{M^{-1}\sum_{k=1}^Mn_k(\overline{X}_{n_k}-\overline{X}_{M})(\overline{X}_{n_k}-\overline{X}_{M})^\top -A^{-1}SA^{-1}} \\
  &\lesssim \dcon M^{-\frac{1}{2}}+\dcon N^{-\frac{1}{2}}+\dcon^{\frac32} (MN)^{-\frac{\alpha}{4-4\alpha}}+\dcon^2M^{-1}+\dcon^3M^{-1}N^{\frac{1-2\alpha}{1-\alpha}}.
  \end{align*}
  
\end{reptheorem}
\begin{proof}[Proof of Theorem \ref{thm:general}]
  Recall the linear oracle sequence defined by \eqref{eqn:linear}: $U_n=\left(I-\eta_n A \right) U_{n-1} + \eta_n \xi_n$, $U_0=\Delta_0$.
  From Lemma \ref{thm:linear}
  \begin{align*}
  &\mathbb{E}\; \opnorm{M^{-1}\sum_{k=1}^Mn_k(\overline{U}_{n_k}-\overline{U}_{M})(\overline{U}_{n_k}-\overline{U}_{M})^\top -A^{-1}SA^{-1}} \\
  &\lesssim  \dcon M^{-\frac{1}{2}}+\dcon N^{-\frac{1}{2}}+\dcon^{\frac32} (MN)^{-\frac{\alpha}{4-4\alpha}}.
  \end{align*}
  Let $\delta_n:=\Delta_n-U_n$, which follows a recursion:
  {\begin{align*}
    \delta_n&=\Delta_{n-1}-U_{n-1}-\eta_n\nabla F(x_{n-1})-\eta_n AU_{n-1}\\
    &=(I-\eta_n A) \delta_{n-1}+\eta_n(A\Delta_{n-1}-\nabla F(x_{n-1})).
    \end{align*}}
  Define the  mean differences for batches and overall:
  {\[
    \overline{\delta}_{n_k}:=n_k^{-1}\sum_{i=s_k}^{e_k}(\Delta_i-U_i),\quad \overline{\delta}_M:=S_M^{-1}\sum_{i=s_1}^{e_M}(\Delta_i-U_i).
    \]}
  Using $\sum_{k=1}^Mn_k\overline{U}_M=\sum_{k=1}^Mn_k\overline{U}_{n_k}$ and $\sum_{k=1}^Mn_k\overline{X}_M=\sum_{k=1}^Mn_k\overline{X}_{n_k}$, we have the following decomposition:
  \allowdisplaybreaks
  { \begin{align*}
    &\frac{1}{M}\sum_{k=1}^Mn_k(\overline{X}_{n_k}-\overline{X}_{M})(\overline{X}_{n_k}-\overline{X}_{M})^\top =\frac{1}{M}\sum_{k=1}^M n_k\overline{X}_{n_k}\overline{X}^\top _{n_k}-\frac{1}{M}\sum_{k=1}^M n_k\overline{X}_M\overline{X}_M^\top \\
    =&\frac{1}{M}\sum_{k=1}^M n_k\overline{\Delta}_{n_k}\overline{\Delta}^\top _{n_k}
    -\frac{1}{M}\sum_{k=1}^M n_kx^*x^{*T}
    +\frac{1}{M}\sum_{k=1}^M n_k\overline{\Delta}_{M}x^{*T}\\
    &+\frac{1}{M}\sum_{k=1}^M n_kx^*\overline{\Delta}_{M}^\top -\frac{1}{M}\sum_{k=1}^M n_k(x^*+\overline{\Delta}_M)(x^*+\overline{\Delta}_M)^\top \\
    =&\frac{1}{M}\sum_{k=1}^M n_k \overline{\Delta}_{n_k}\overline{\Delta}_{n_k}^\top -\frac{1}{M}\sum_{k=1}^M n_k \overline{\Delta}_M\overline{\Delta}_M^\top \\
    =&\frac{1}{M}\sum_{k=1}^M n_k (\overline{U}_{n_k}+\overline{\delta}_{n_k})(\overline{U}_{n_k}+\overline{\delta}_{n_k})^\top -\frac{1}{M}\sum_{k=1}^M n_k (\overline{U}_M+\overline{\delta}_M)(\overline{U}_M+\overline{{\delta}}_M)^\top \\
    =&\frac{1}{M}\sum_{k=1}^M n_k (\overline{U}_{n_k}-\overline{U}_{M})(\overline{U}_{n_k}-\overline{U}_{M})^\top 
    +\frac{1}{M}\sum_{k=1}^M n_k (\overline{U}_{n_k}-\overline{U}_{M})(\overline{\delta}_{n_k}-\overline{\delta}_{M})^\top \\
    &+\frac{1}{M}\sum_{k=1}^M n_k (\overline{\delta}_{n_k}-\overline{\delta}_{M})(\overline{U}_{n_k}-\overline{U}_{M})^\top 
    +\frac{1}{M}\sum_{k=1}^M n_k (\overline{\delta}_{n_k}-\overline{\delta}_{M})(\overline{\delta}_{n_k}-\overline{\delta}_{M})^\top .
    \end{align*}
  }
  Then by Young's inequality and $\|C\|\leq \tr(C)$ for any PSD matrix $C$,
  {\begin{align*}
    &\mathbb{E}\left\|\frac{1}{M}\sum_{k=1}^Mn_k(\overline{X}_{n_k}-\overline{X}_{M})(\overline{X}_{n_k}-\overline{X}_{M})^\top -A^{-1}SA^{-1}\right\|\\
    \leq & \mathbb{E} \left\|\frac{1}{M}\sum_{k=1}^Mn_k(\overline{U}_{n_k}-\overline{U}_{M})(\overline{U}_{n_k}-\overline{U}_{M})^\top -A^{-1}SA^{-1}\right\| \\
    &+ \frac{1}{M}\sum_{k=1}^M n_k  \mathbb{E}\tr\left[(\overline{\delta}_{n_k}-\overline{\delta}_{M})(\overline{\delta}_{n_k}-\overline{\delta}_{M})^\top \right] \\
    &
    +\frac{2}{M}\sqrt{\sum_{k=1}^Mn_k\mathbb{E}\tr[(\overline{U}_{n_k}-\overline{U}_{M})(\overline{U}_{n_k}-\overline{U}_{M})^\top ]\sum_{k=1}^M n_k \mathbb{E}\tr\left[(\overline{\delta}_{n_k}-\overline{\delta}_{M})(\overline{\delta}_{n_k}-\overline{\delta}_{M})^\top \right]}
    \end{align*}
  }
  Note that
  \begin{align*}
  &\sum_{k=1}^Mn_k\tr(\overline{\delta}_{n_k}-\overline{\delta}_{M})(\overline{\delta}_{n_k}-\overline{\delta}_{M})^\top \\
  =&\sum_{k=1}^Mn_k\left[\tr(\overline{\delta}_{n_k}\overline{\delta}^\top _{n_k})
  -\tr(\overline{\delta}_{M}\overline{\delta}^\top _{M})\right]\leq \sum_{k=1}^Mn_k\tr(\overline{\delta}_{n_k}\overline{\delta}^\top _{n_k}).
  \end{align*}
  Therefore, a proper bound  for $\frac{1}{M}\sum_{k=1}^M n_k\mathbb{E}\|\overline{\delta}_{n_k}\|_2^2$ will conclude our proof.
  
  Using the definition of $Y_j^k$ and $S_j^k$ in Lemma \ref{lem:matrixestimate}, we find that
  \begin{align*}
  \delta_n&=(I-\eta_n A) \delta_{n-1}+\eta_n(A\Delta_{n-1}-\nabla F(x_{n-1}))\\
  &=\left[\prod\limits_{i=s_k}^{n}(I-\eta_iA)\right]\delta_{s_k-1}+\sum_{i=s_k}^{n}\left[\prod_{j=i+1}^n(I-\eta_jA)\right]\eta_i(A\Delta_{i-1}-\nabla F(x_{i-1}))\\
  &=Y_{s_k-1}^n\delta_{s_k-1}+\sum_{i=s_k}^{n}Y_i^n\eta_i(A\Delta_{i-1}-\nabla F(x_{i-1}))\\
  \overline{\delta}_{n_k}&=\frac{1}{n_k}\sum_{j=s_k}^{e_k}\delta_j=\frac{1}{n_k}\sum_{j=s_k}^{e_k}Y_{s_k-1}^j\delta_{s_k-1}+\frac{1}{n_k}\sum_{j=s_k}^{e_k}\sum_{i=s_k}^{j}Y_i^j\eta_i(A\Delta_{i-1}-\nabla F(x_{i-1}))\\
  &=\frac{1}{n_k}S^{e_k}_{s_k-1}\delta_{s_k-1}+\frac{1}{n_k}\sum_{i=s_k}^{e_k}\sum_{j=i}^{e_k}Y_i^{e_k}\eta_i(A\Delta_{i-1}-\nabla F(x_{i-1}))\\
  &=\frac{1}{n_k} S^{e_k}_{s_k-1}\delta_{s_k-1}+\frac{1}{n_k}\sum_{n=s_k}^{e_k} (I+S_{n}^{e_k}) \eta_n (A\Delta_{n-1}-\nabla F(x_{n-1})).
  \end{align*}
  So by Young's inequality and Cauchy-Schwartz inequality
  {\begin{align*}
    \mathbb{E}\|\overline{\delta}_{n_k}\|_2^2\leq&  2n^{-2}_k  \|S^{e_k}_{s_k-1}\|^2\mathbb{E} \|\delta_{s_k-1}\|_2^2\\
    &+
    2n_k^{-2}\mathbb{E}\left(\sum_{n=s_k}^{e_k} \|\eta_n (I+S_{n}^{e_k})\| \|(A\Delta_{n-1}-\nabla F(x_{n-1})\|\right)^2\\
    \leq&  2n^{-2}_k  \|S^{e_k}_{s_k-1}\|^2\mathbb{E} \|\delta_{s_k-1}\|_2^2\\
    &+
    2n_k^{-2}\sum_{n=s_k}^{e_k} \|\eta_n (S_{n}^{e_k}+I)\|^2\mathbb{E} \sum_{n=s_k}^{e_k} \|A\Delta_{n-1}-\nabla F(x_{n-1})\|_2^2
    \end{align*}}
  Apply Lemma \ref{lem:Delta} and Lemma \ref{lem:matrixestimate},
  \[
  \|S^{e_k}_{s_k-1}\|^2\lesssim s_k^{2\alpha},\quad
  \sum_{n=s_k}^{e_k} \|\eta_n S_{n}^{e_k}\|^2\lesssim n_k.
  \]
  \[
  \mathbb{E} \sum_{n=s_k}^{e_k} \|A\Delta_{n-1}-\nabla F(x_{n-1})\|_2^2
  \lesssim \sum_{n=s_k}^{e_k}\mathbb{E}L_F^2\|\Delta_{n-1}\|_2^2\lesssim  n_k s_k^{-\alpha} \dcon^3.
  \]
  Notice that Lemma \ref{lem:Delta} can be applied to  both $U_n$ and $\Delta_n$, so
  \[
  \mathbb{E} \|\delta_{s_k-1}\|_2^2\leq 2\mathbb{E}\|\Delta_{s_k-1}\|_2^2+2\mathbb{E}\|U_{s_k-1}\|_2^2\lesssim s_k^{-\alpha}\dcon.
  \]
  Furthermore, recall the batch size $e_k=((k+1)N)^{\frac{1}{1-\alpha}}$,
  \[
  n_k=\left[(k+1)^{\frac{1}{1-\alpha}}-k^{\frac{1}{1-\alpha}}\right]N^{\frac{1}{1-\alpha}}=\frac{N^{\frac{1}{1-\alpha}} }{1-\alpha}
  \int^{k+1}_k x^{\frac{\alpha}{1-\alpha}} dx\asymp k^{\frac{\alpha}{1-\alpha}}N^{\frac{1}{1-\alpha}} ,
  \]
  \[
  n_k\mathbb{E}\|\overline{\delta}_{n_k}\|_2^2\lesssim \dcon n_k^{-1}s_k^\alpha+n_k s_k^{-2\alpha}\dcon^3\lesssim \dcon N^{-1}+\dcon^3k^{-\frac{\alpha}{1-\alpha}}N^{\frac{1-2\alpha}{1-\alpha}}.
  \]
  For the sum of geometric sequence in the last term, the following holds
  \[
  \frac{1}{M}\sum_{k=1}^Mk^{-\frac{\alpha}{1-\alpha}}\lesssim  \frac{1}{M}\int^{M+1}_1 x^{-\frac{\alpha}{1-\alpha}}dx\lesssim M^{(1-\frac{\alpha}{1-\alpha})^+-1},
  \]
  which implies
  \[
  \frac{1}{M}\sum_{k=1}^Mn_k\mathbb{E}\|\overline{\delta}_{n_k}\|_2^2\lesssim \dcon N^{-1}+\dcon^3M^{(\frac{1-2\alpha}{1-\alpha})^+-1}N^{\frac{1-2\alpha}{1-\alpha}}.
  \]
  But since $\alpha> 1/2$, so
  \[
  M^{(\frac{1-2\alpha}{1-\alpha})^+-1}N^{\frac{1-2\alpha}{1-\alpha}}=M^{-1}N^{\frac{1-2\alpha}{1-\alpha}}\leq M^{-1}.
  \]
  In conclusion,
  \begin{align*}
  &\mathbb{E}\left\|M^{-1}\sum_{k=1}^Mn_k(\overline{X}_{n_k}-\overline{X}_{M})(\overline{X}_{n_k}-\overline{X}_{M})^\top -A^{-1}SA^{-1}\right\|\\
  &\quad \lesssim \dcon M^{-\frac{1}{2}}+\dcon N^{-\frac{1}{2}}+\dcon^{\frac32} (MN)^{-\frac{\alpha}{4-4\alpha}}+\dcon^2M^{-1}+\dcon^3M^{-1}N^{\frac{1-2\alpha}{1-\alpha}}.
  \end{align*}
\end{proof}

\subsection{Proof of Corollaries \ref{cor:directplugin} and  \ref{cor:pluginterval} }

\begin{repcorollary}{cor:directplugin}
  Under Assumptions \ref{aspt:convexity}, \ref{aspt:martingale} and \ref{aspt:third}, as $n\rightarrow \infty$,
  \[
  \opnorm{A_n^{-1} S_n A_n ^{-1} -A^{-1} S A^{-1}}=O_p\big(\|S\|(\dcon^2 n^{-\frac{\alpha}{2}}+\dcon^3 n^{-\alpha})\big).
  \]
\end{repcorollary}
\begin{proof}
  Let
  $
  \mathcal{Q}:=\{\lambda_{\min}(A_n)>\delta\}=\{A_n=\widetilde{A}_n\},
  $ by Markov inequality and Lemma \ref{lem:A_con},
  \[
  \mathbb{P}(\lambda_{\min}(A_n)\leq \tfrac\mu2)\leq \mathbb{P}(\|A_n-A\|{\geq} \tfrac\mu2)\leq \frac{2\E \|A_n-A\| }{\mu}\lesssim \dcon^2 n^{-\alpha/2}.
  \]
  then $\mathbb{P}(\mathcal{Q}{^c})\leq c_1\dcon^2 n^{-\frac\alpha2}$ for some $c_1$. Then by Theorem \ref{thm:plugin}, for a constant $c_2$,
  \[
  \mathbb{E}\unit_{\mathcal{Q}}\opnorm{A_n^{-1} S_n A_n ^{-1} -A^{-1} S A^{-1}}\leq c_2 \|S\|(\dcon^2 n^{-\frac{\alpha}{2}}+\dcon^3 n^{-\alpha}).
  \]
  By Markov inequality again, for any $c_3>0$,
  \[
  \mathbb{P}\big(\opnorm{A_n^{-1} S_n A_n ^{-1} -A^{-1} S A^{-1}}>c_3c_2\|S\|(\dcon^2 n^{-\frac{\alpha}{2}}+\dcon^3 n^{-\alpha}))\big)\leq c_1C_d^2n^{-\frac{\alpha}{2}}+c_3^{-1}.
  \]
  Therefore we can choose a large $c_3$, so when $n\rightarrow \infty$,
  \[
  \opnorm{A_n^{-1} S_n A_n ^{-1} -A^{-1} S A^{-1}}=O_p(\|S\|(\dcon^2 n^{-\frac{\alpha}{2}}+\dcon^3 n^{-\alpha})).
  \]
\end{proof}

\label{sec:supp_pluginterval}
\begin{repcorollary}{cor:pluginterval}
  Under the assumptions of Theorem \ref{thm:plugin}, {if the step size is chosen to be $\eta_i = \eta i^{-\alpha}$ with $\alpha\in (\frac12,1)$,} when $d$ is fixed and $n\to \infty$,
  \[
  \Pr\left(\bar{x}_{n,j}-z_{q/2}\hat{\sigma}^P_{n,j}/\sqrt{n}\leq x^*_j\leq \bar{x}_{n,j}+z_{q/2}\hat{\sigma}^P_{n,j}/\sqrt{n}\right)\to 1-q.
  \]
\end{repcorollary}
\begin{proof}[Proof of Corollary \ref{cor:pluginterval}]
  Denote $\sigma_j=\left(A^{-1} S A^{-1}\right)^{1/2}_{jj}$.  Fix any $\epsilon>0$, note that
  \begin{align*}
  &\Pr\left(\bar{x}_{n,j}-z_{q/2}\hat{\sigma}^P_{n,j}/\sqrt{n}\leq x^*_j\leq \bar{x}_{n,j}+z_{q/2}\hat{\sigma}^P_{n,j}/\sqrt{n}\right)\\
  &\leq \Pr \left(\bar{x}_{n,j}-z_{q/2-\epsilon}\sigma_{j}/\sqrt{n} \leq x^*_j\leq \bar{x}_{n,j}+z_{q/2-\epsilon}\sigma_{j}/\sqrt{n}\right)+\Pr(z_{q/2-\epsilon}\sigma_{j}\leq z_{q/2}\hat{\sigma}^P_{n,j} ).
  \end{align*}
  The first term by theorem 2 of \cite{PJ92} converges to $1-q+2\epsilon$ when $n\to \infty$, the second term by Theorem \ref{thm:plugin} and Markov inequality converges to $0$ when $n\to \infty$. Therefore we have
  \[
  \Pr\left(\bar{x}_{n,j}-z_{q/2}\hat{\sigma}^P_{n,j}/\sqrt{n}\leq x^*_j\leq \bar{x}_{n,j}+z_{q/2}\hat{\sigma}^P_{n,j}/\sqrt{n}\right)\leq 1-q+2\epsilon,\quad n\to \infty.
  \]
  Likewise we can have the opposite direction. Since this holds for any $\epsilon>0$, we have our result.
\end{proof}

\section{Supplement to the High Dimensional Section (Section \ref{sec:high_dim})}
\label{sec:supp_highdim}
\subsection{Proof of Proposition \ref{prop:L1_conv}}
\label{subsec:L1_conv}
\begin{repproposition}{prop:L1_conv}
  Under Assumption \ref{aspt:cov} and using the same algorithm parameters as Proposition 1 in \cite{agarwal2012stochastic}, there exists a constant $c_0$, such that $\widehat{x}_n$ in Algorithm \ref{algo:high_CI} satisfies
  \begin{eqnarray}\label{eq:x_L1_conv}
  \|\widehat{x}_n-x^*\|_1 \leq c_0 s  \sqrt{\frac{\log d}{n}}
  \end{eqnarray}
  {uniformly in $x^*\in\mathcal{B}(s)$}  with high probability. Further, for each $j= \{ 1,\ldots, d\}$,
  \begin{eqnarray}\label{eq:gamma_L1_conv}
  \|\widehat{\gamma}^j + \Omega_{j,j}^{-1} (\Omega_{j,-j})^\top  \|_1 {\leq c_0} s_j \sqrt{\frac{\log d}{n}}
  \end{eqnarray}
  holds with high probability.
\end{repproposition}
\begin{proof}
  {
    where $R_i = \frac{R_1}{2^{i/2}}$, $T_i \geq c_1 \frac{s^2}{R_i^2} \log d$ for a universal constant $c_1$, $A_\psi\leq e\log d$, $\omega_i^2=\omega^2+24 \log i$,  and $K^2$ is the variance proxy of $a$ in Assumption \ref{cor:uniform}. Therefore, let $c_0=(c_1\log d)^{-\frac14}(2\mu)^{\frac12}\left[e(\log d) (L_F^2+K^2)+\omega_i^2K^2\right]^{\frac14}$. We have
    \[
    \lambda_{K_n}\leq \frac{c_0R_{K_n}}{\sqrt{2}s}\leq c_0\frac{R_1}{2^{K_n/2}s}.
    \]
    By Eq. (32) in \cite{agarwal2012stochastic}, the total number of iterations $ n\leq s^2 (\log d)  2^{K_n}$.
    Therefore, $\|\widehat{x}_n-x^*\|_1\leq \frac{9c_0R_1}{\mu} \cdot s\sqrt{\frac{\log n}{d}}$. The constant $c_0$ is uniform for $x^*\in\mathcal{B}(s)$.
    Consider the set of parameters $\mathcal{B}(s)=\{x\in\R^d;\;|\{j:x_j\neq 0\}|\leq s\}$.
  }
  The convergence of $\|\widehat{x}_n - x^*\|_1$ in \eqref{eq:x_L1_conv} is a direct consequence of Proposition 1 in \cite{agarwal2012stochastic}. To see this, from Lemma 1 and Proposition 1 of \cite{agarwal2012stochastic},
  By Lemma 1 (Eq. (39)) of \cite{agarwal2012stochastic}, we have
  \begin{eqnarray}\label{eq:x_L1_conv_lambda}
  \|\widehat{x}_n-x^*\|_1 \leq  \frac{9}{\mu}s\lambda_{K_n},
  \end{eqnarray}
  where $K_n$ is the number of epochs and $\lambda_{K_n}$ is  the regularization parameters used in the $K_n$-th epoch. By Assumption \ref{aspt:cov}, there exists positive constants $\mu, L_F>0$ such that $\mu < \lambda_{\min} (A) <  \lambda_{\max} (A) < L_F$, and $\E(b-a^\top  x)^2$ is Lipschitz continuous with constant $L_F$. By the setting of $\lambda_i$ for the $i$-th epoch in Eq. (34) in \cite{agarwal2012stochastic}, we have
  \begin{eqnarray}\label{eq:lambda_square}
  \lambda_i^2  =\frac{R_i\mu}{ s \sqrt{T_i}} \sqrt{A_{\psi}(L_F^2+K^2)+\omega_i^2K^2},
  \end{eqnarray} where $R_i = \frac{R_1}{2^{i/2}}$, $T_i \geq c_1 \frac{s^2}{R_i^2} \log d$ for a universal constant $c_1$, $A_\psi\leq e\log d$, $\omega_i^2=\omega^2+24 \log i$,  and $K^2$ is the variance proxy of $a$ in Assumption \ref{cor:uniform}. Therefore, let $c_0=(c_1\log d)^{-\frac14}(2\mu)^{\frac12}\left[e(\log d) (L_F^2+K^2)+\omega_i^2K^2\right]^{\frac14}$. We have
  \begin{equation}\label{eq:lambda_K_n}
  \lambda_{K_n}\leq \frac{c_0R_{K_n}}{\sqrt{2}s}\leq c_0\frac{R_1}{2^{K_n/2}s}.
  \end{equation}
  
  To further bound $\lambda_{K_n}$ in \eqref{eq:lambda_K_n}, we only need a lower bound on the number of epochs $K_n$. By the setting of $T_i$ in Eq. (32) in \cite{agarwal2012stochastic}, we can easily derive that,
  \begin{eqnarray*}
    n= \sum_{i=1}^{K_n} T_i  \geq  s^2 (\log d)  2^{K_n},
  \end{eqnarray*}
  which implies that
  \[
  \lambda_{K_n} \leq \frac{c_0R_1}{2^{K_n/2} s } \leq c_0R_1 \sqrt{\frac{\log d}{n}},
  \]
  which combining with \eqref{eq:x_L1_conv_lambda} completes the proof of \eqref{eq:x_L1_conv} with the notice that $R_1$ is bounded uniformly by Assumption \ref{aspt:cov}.
  
  For  \eqref{eq:gamma_L1_conv}, recall that $\widehat{\gamma}^j$ is the solution of RADAR for solving the problem in \eqref{eq:L1_nodewise}. We can apply the Proposition 1 from \cite{agarwal2012stochastic} to provide the bound on $\|\widehat{\gamma}^j- (\gamma^j)^*\|_1$. Here $(\gamma^j)^*$ is the optimum for the un-regularized population loss, i.e.,
  \begin{align*}
  (\gamma^j)^*&=\argmin \E\left( ( a_j - a_{-j} \gamma^j)^2 \right)\\
  &=  -\Omega_{j,j}^{-1} \left(\Omega_{j,-j} \right)^\top 
  \end{align*}
  
  We further note that by the row sparsity assumption of $\Omega$ in Assumption \ref{aspt:cov}, $(\gamma^j)^*$ is an $s_j$-sparse vector. By exactly the same argument as for proving \eqref{eq:x_L1_conv_lambda}, we obtain the result in \eqref{eq:gamma_L1_conv}.
\end{proof}

\subsection{Proof of Theorem \ref{thm:high_inf}}
\label{supp:high_inf}
We first prove two lemmas about estimating $\Omega_{jj}$ and $\Omega$,

\begin{lemma}
  Let $\hat \tau_j \equiv \frac1n \norm{D_{\cdot,j}- D_{\cdot,-j}  \hat \gamma_j}^2_2 $ and $\gamma_j:=\arg\min_{\gamma\in\mathbb{R}^{p-1}}\E[\|a_j-a_{-j}\gamma\|_2^2]$. Assuming $\norm{\hat \gamma_j - \gamma_j}_2 \le C \sqrt{\frac{s_j \log d}{n}}$, $\norm{\hat \gamma_j - \gamma_j}_1 \le C s_j \sqrt{\frac{\log d }{n}}$, and $n> s_j \log d$, we have
  \begin{equation}
  \Bigg| \frac{\hat \tau_j}{\Omega_{jj}^{-1}} -1
  \Bigg| \le C \sqrt{\frac{\log d }{n}} + 2 \lambda_{\max} (A) C\frac{s_j \log d}{n}
  \end{equation}
  \label{lem:tau-estimate}
\end{lemma}
\begin{proof}
  Fix an index $j$. Let $z=\sqrt{\Omega_{j,j}}(D_{\cdot,j} - D_{\cdot,-j} \gamma_j) $. The entries of $z$ are $\emph{i.i.d.}$ realizations of $\sqrt{\Omega_{j,j}}(a_j-a_{-j}\gamma_j )= \sqrt{\Omega_{j,j}}(a_j - \Omega_{jj} ^{-1} \Omega_{j,-j} a_{-j})$. Since $a$ is sub-Gaussian, $z$ is also sub-Gaussian and the entries of $z$ are \emph{i.i.d.} with mean $0$ and variance $1$.
  \begin{align*}
  \sqrt{\hat \tau_j}= \frac1{\sqrt{n}}\norm{D_{\cdot,j} - D_{\cdot,-j} \hat \gamma_j}_2& = \frac1{\sqrt{n}}\norm{D_{\cdot,j} - D_{\cdot,-j}  \gamma_j  + D_{\cdot,-j} (\hat \gamma_j - \gamma_j)}_2 \nonumber \\
  &= \frac1{\sqrt{n}}\norm {\sqrt{\Omega_{jj}^{-1}} z+ D_{\cdot,-j} ( \hat \gamma_j - \gamma_j)}_2
  \end{align*}
  Therefore
  \begin{equation}
  \label{eq:var-est}
  \left| \sqrt{\hat \tau_j}- \frac1{\sqrt{n}}\norm {\sqrt{\Omega_{jj}^{-1}} z}_2\right|\le   \frac1{\sqrt{n}} \norm{ D_{\cdot,-j} ( \hat \gamma_j - \gamma_j)}_2
  \end{equation}
  By \cite{rudelson2012reconstruction}, $\frac1n D_{\cdot,-j}^\top D_{\cdot,-j}$ satisfies the Upper-RE condition in \cite[Equation 2.13]{loh2012high}, we have that
  \begin{align*}
  \frac1n \norm{D_{\cdot,-j} ( \hat \gamma_j - \gamma_j)}^2_2 \le& 2 \lambda_{\max}(A) \norm{ \hat \gamma_j - \gamma_j }_2 ^2 + \tau(n,d) \norm{\hat \gamma_j - \gamma_j}_1 ^2,\\
  \frac1{\sqrt{n}}   \norm{D_{\cdot,-j} ( \hat \gamma_j - \gamma_j)}_2 \le& \sqrt{2 \lambda_{\max}(A) } \norm{ \hat \gamma_j - \gamma_j }_2 + \sqrt{\tau(n,d)}\norm{\hat \gamma_j - \gamma_j}_1
  \end{align*}
  for all $j$, where $\gamma(n,d)\asymp\sqrt{(\log d)/n}$.
  For a sub-Gaussian random variable $X$ with parameter $K$, it's easy to see $X^2$ is sub-exponential with parameter $K^2$ \cite[Lemma 5.14]{vershynin2010introduction}. By the concentration inequality for $K^2$ sub-exponential random variables \cite[Proposition 5.16]{vershynin2010introduction},
  \begin{align*}
  \Pr\left( \left|  \frac1n \norm{z}_2 ^2 -1 \right | > t\right) \le 2 \exp\left(-\min\Big(\frac{nt^2}{2K^4},\frac{nt}{2K^2}\Big)\right).
  \end{align*}
  
  In below, let $C_K$ be a dimension independent constant that may change its value through derivation. By setting $t\asymp \sqrt{\frac{ \log d}{n}}$ with probability at least $ 1- \frac{1}{d^C} $,
  \begin{align*}
  \big|\frac1n \norm{z}^2 -1\big|&\le C_{K} \sqrt{\frac{\log d}{n}}.
  \end{align*}
  Using $\sqrt{a+b}< \sqrt{a}+\sqrt{b} $ and re-arranging,
  \begin{align*}
  \sqrt{1-C_K \sqrt{\frac{ \log d}{n} }}&\le \frac{\norm{z}_2}{\sqrt{n}} \le \sqrt{1+C_K \sqrt{\frac{ \log d}{n} }},\\
  1-C_K \left(\frac{ \log d}{n} \right)^{.25}&\le \frac{\norm{z}_2}{\sqrt{n}} \le 1+C_K \left(\sqrt{\frac{ \log d}{n} }\right)^{.25}.
  \end{align*}
  
  Using these two estimates with \eqref{eq:var-est} ,
  \begin{align*}
  \sqrt{\hat \tau_j}&\le  \sqrt{\Omega_{jj}^{-1}}( 1+ C_K \left(\frac{\log d}{n}\right)^{.25} ) + \sqrt{2 \lambda_{\max}(A) } \sqrt{\frac{s_j \log d }{n}} + Cs_j \frac{ \log d}{n}, \\
  \sqrt{\hat \tau_j}&\ge \sqrt{\Omega_{jj}^{-1}}( 1- C_K \left(\frac{\log d}{n}\right)^{.25} ) - \sqrt{2 \lambda_{\max}(A) } \sqrt{\frac{s_j \log d }{n}} - Cs_j \frac{ \log d}{n}.
  \end{align*}
  Dividing by $\sqrt{\Omega_{jj} ^{-1}}$, squaring, and using $\frac{s_j \log d}{n}<1$ gives
  \begin{align*}
  | \frac{\hat \tau_j}{\Omega_{jj}^{-1}} -1 | &\lesssim  \sqrt{\frac{\log d }{n}} + 2 \lambda_{\max}(A)  \frac{s_j \log d}{n}.
  \end{align*}
\end{proof}
\begin{lemma}\label{lem:M_L2}
  Let $\Omega_j$ and $\widehat{\Omega}_j$ be the $j$-th row of $\Omega$ and $\widehat{\Omega}$. Under the row sparsity assumption of $\Omega$ (see Assumption \ref{aspt:cov}), we have with high probability
  \begin{eqnarray*}
    \max_{j\in \{1,\ldots,d\}} \| \widehat{\Omega}_j - \Omega_j\|_2 \le C   \sqrt{\frac{ \max_j s_j\log d}{n}}\\
    \max_{j\in \{1,\ldots,d\}} \| \widehat{\Omega}_j - \Omega_j\|_1 \le C   \max_j s_j \sqrt{\frac{\log d}{n}}\\
  \end{eqnarray*}
\end{lemma}

\begin{proof}[Proof of Lemma \ref{lem:M_L2}]
  By the definition of $\widehat{\Omega}$ in \eqref{eq:M}, we have the $j$-th row of $\widehat{\Omega}$ takes the following form,
  \begin{eqnarray*}
    \widehat{\Omega}_j= \widehat{C}_j / \widehat{\tau}_j,  \qquad \widehat{C}_j = (-\widehat{\gamma}^j_1, \ldots, -\widehat{\gamma}^j_{j-1}, 1, \ldots, -\widehat{\gamma}^j_d).
  \end{eqnarray*}
  By Proposition  \ref{prop:L1_conv}, we have for any $j$,
  \begin{eqnarray}\label{eq:C_j_L1}
  \| \widehat{C}_j- \Omega_{j,j}^{-1} \Omega_{j}\|_1 \lesssim  s_j \sqrt{\frac{\log d}{n}},
  \end{eqnarray}
  holding with high probability.
  
  Using Lemma \ref{lem:tau-estimate}, $\hat \tau_j =O(1)$, and noting that $\frac{s_j \log d}{n} < \sqrt{\frac{\log d}{n}}$, we see that
  \begin{equation}
  \left| \frac{\Omega_{jj}^{-1}}{\hat \tau_j} -1 \right| \le C \sqrt{\frac{\log d}{n}}.
  \label{eq:inverse_tau_bound}
  \end{equation}

  Combining  \eqref{eq:inverse_tau_bound}  with \eqref{eq:C_j_L1} and noticing that
  \begin{eqnarray*}
    \|\Omega_j\|_1 \leq \sqrt{s_j} \|\Omega_j\|_2 \leq \frac{\sqrt{s_j}}{\lambda_{\min}(A)},
  \end{eqnarray*}
  we have with probability at least $1-1/d^2$,
  \begin{align*}
  \|\widehat{\Omega}_j-\Omega_j\|_1 & = \left\|\left(\widehat{C}_j / \widehat{\tau}_j - \Omega_{j,j}^{-1} \Omega_{j}/ \widehat{\tau}_j  \right) +\left( \Omega_{j,j}^{-1} \Omega_{j}/ \widehat{\tau}_j -\Omega_{j}\right)   \right\|_1 \\
  & \leq \frac{1}{\widehat{\tau}_j} \|\widehat{C}_j- \Omega_{j,j}^{-1} \Omega_{j}\|_1+ \left|\frac{\Omega_{j,j}^{-1}}{\widehat{\tau}_j} -1 \right| \|\Omega_j\|_1 \\
  & \lesssim s_j \sqrt{\frac{\log d}{n}},
  \end{align*}
  which completes the proof by taking a union bound over $j$. The proof for the $l_2$ bound is analogous.
\end{proof}

With Lemma \ref{lem:M_L2} in place, we are ready to prove Theorem \ref{thm:high_inf}.

\begin{reptheorem}{thm:high_inf}
  Under Assumption \ref{aspt:cov}, for suitable choices of $\lambda \asymp \sqrt{\log d/n}$ and $\lambda_j \asymp \sqrt{\log d/n}$, we have for all $j \in \{1,\ldots, d\}$ and all $z \in \mathbb{R}$,
  \label{cor:uniform}
  \begin{eqnarray}\label{eq:normal_conv}
  \sup\limits_{x^*\in\mathcal{B}(s)}\left|\mathbb{P}_{x^*}\Bigg(\frac{\sqrt{n}(\widehat{x}^d_j- x_j^*)}{\sigma\sqrt{(\widehat{\Omega} \widehat{A} \widehat{\Omega}^\top )_{jj}}}\leq z\Bigg)-\Phi(z)\right|=o_{p}(1).
  \end{eqnarray}
  where $\widehat{x}^d$ is the debiased estimator defined in \eqref{eq:debiasedLasso}, $\widehat{\Omega}$ is defined in \eqref{eq:M} and the sample covariance matrix $\widehat{A}= \frac{1}{n} D^\top  D$.
\end{reptheorem}
\begin{proof}
  We first plug the linear model $b= D x^* + \epsilon$ into the definition of the  desparsified estimator in \eqref{eq:debiasedLasso} and obtain,
  \begin{align}\label{eq:74}
  \widehat{x}^d & = \widehat{x}_n + \frac1n \widehat{\Omega}\widehat{A}\left(x^*-  \widehat{x}_n  \right) + \frac{1}{n} \widehat{\Omega} D^\top  \epsilon \\
  & =x^* + \left( \widehat{\Omega}\widehat{A}- I \right) \left(x^*-  \widehat{x}_n  \right) + \frac{1}{n} \widehat{\Omega} D^\top  \epsilon. \nonumber
  \end{align}
  It is easy to see that $\frac{1}{\sqrt{n}} \widehat{\Omega} D^\top  \epsilon \sim N(0, \sigma^2 \widehat{\Omega} \widehat{A} \widehat{\Omega}^\top )$. Therefore, to prove  Theorem \ref{thm:high_inf}, we only need to show the term
  \begin{equation}\label{eq:Delta_op}
  \Delta :=  \left( \widehat{\Omega}\widehat{A}- I \right) \left(x^*-  \widehat{x}_n  \right) = o_{\mathbb{P}_{x^*}}(\frac{1}{\sqrt{n}}),
  \end{equation}
  uniformly in $x^*\in\mathcal{B}(s)$.
  
  By \cite[Lemma 23]{javanmard2014confidence}, we have
  \[
  \mathrm{Pr}\left(\|\Omega\widehat{A}-I\|_\infty\geq a\sqrt{\frac{\log d}{n}}\right)\leq 2d^{-c_2}
  \]
  with $c_2=(a^2\lambda_{\min})/(24e^2\kappa^4\lambda_{\max})-2$, where $\kappa$ is the sub-gaussian norm for $A^{-1}a$.
  By adding and subtracting $ \Omega \widehat{A}  (\hat x_n- x^*)$,
  \begin{align*}
  \norm{\Delta}_\infty &\le \norm{ ( \widehat{\Omega} \widehat{A} -\Omega \widehat{A} )(\hat x_n- x^*  )}_\infty + \norm{ (\Omega \widehat{A} - I) (\hat x_n -x ^*)}_\infty\\
  &\le\norm{ ( \widehat{\Omega} \widehat{A} -\Omega \widehat{A} )(\hat x_n- x^*  )}_\infty  + \norm{\Omega \widehat{A} - I}_\infty \norm{\hat x_n -x ^*}_1\\
  &\le \norm{ ( \widehat{\Omega} \widehat{A} -\Omega \widehat{A} )(\hat x_n- x^*  )}_\infty  + C \frac{s\log d}{n}
  \end{align*}
  
  The first term can be bounded
  \begin{align*}
  & \norm{ ( \widehat{\Omega} \widehat{A} -\Omega \widehat{A} )(\hat x_n- x^*  )}_\infty \\
  = & \norm{ (\widehat{\Omega}-\Omega) \frac1n D^\top  D (\hat x_n -x^*) } _\infty\\
  \le & \left[ \max_{j} \frac{1}{\sqrt{n}} \norm{D(\widehat{\Omega}^\top  -\Omega) e_j  }_2  \right] \frac{1}{\sqrt{n}}\norm{ D( \hat x_n -x^*)}_2\\
  \le &\left[ \max_j \sqrt{C_1 \lambda_{\max}(A) \norm{(\widehat{\Omega}^\top -\Omega) e_j}_2^2 +C_2 \frac{\log d }{n} \norm{(\widehat{\Omega}^\top  -\Omega)e_j}_1 ^2 } \right] \\
  & \sqrt{2\lambda_{\max}(A) \norm{\hat x_n -x^*}_2^2 + \tau(n,d) \norm{\hat x_n -x^*}_1 ^2 } \\
  \leq  & C_3\max_j \sqrt{\frac{s_j \log d }{n} + \frac{s_j^2 \log^2 d}{n^2}} \sqrt{\frac{s \log d }{n} + \frac{s^2 \log^2 d}{n^2}} \\
  \leq  & C_4\frac{s \log d}{n}
  \end{align*}
  where in the second inequality we used \cite[Equation 2.13]{loh2012high} and \cite{rudelson2012reconstruction}. In the third inequality we used
  \[
  \norm{(\widehat{\Omega}^\top  -\Omega)e_j}_2 \lesssim \sqrt { \frac{s_j \log d }{n}}, \quad \norm{(\widehat{\Omega}^\top  -\Omega)e_j}_1  \lesssim s_j\sqrt { \frac{ \log d }{n}}
  \]
  from Lemma \ref{lem:M_L2},   $\norm{\hat x_n -x ^*}_2 \leq c_0' \sqrt { \frac{s\log d }{n}}$, and $\norm{\hat x_n -x ^*}_1 \leq c_0 s \sqrt { \frac{ \log d }{n}}$ uniformly for all $x^*$ from \cite{agarwal2012stochastic} and Proposition \ref{prop:L1_conv}. In the last inequality, we use $s=o(\sqrt{n}/\log d)$ and $s_j \le C s$ by Assumption \ref{aspt:cov}.
  
  By the assumption that $s =o(\sqrt{n}/\log d)$, we have that $\norm{\Delta}_\infty \le  o_{\mathbb{P}_{x^*}}(\frac{1}{\sqrt{n}})$ uniformly in $x^*\in\mathcal{B}(s)$, which completes the proof.

\end{proof}

\section{More Simulations}
\label{sec:sim_supp}
In this section, we report more simulation experiments by considering different design matrices $\Sigma$ (recall that each covariate $a_n \sim N(0, \Sigma)$).

In Table \ref{table:linear_CI_toep}, we report the performance of the linear regression case with Toeplitz covariance matrix of $\Sigma_{i,j}=r^{|i-j|}$ for different correlation parameter $r$'s. As the correlation parameter $r$ varies from $0.4$ to $0.6$, the coverage rates remains similar when $d$ is small. On the other hand, when $d$ is large, the coverage rates slightly decrease as $r$ increases.  At the meantime, the average lengths increase, which is consistent with the trend of the oracle lengths (see the last column).  In Table \ref{table:linear_CI_equi}, we report the simulations with Equi Corr covariance matrix $\Sigma_{i,j}=r\cdot 1_{\{i\neq j\}}+ 1_{\{i= j\}}$ when $r$ varies from $0.1$ to $0.3$. The observation is similar to the case of Toeplitz covariance matrix. It is worth noting that for each of the cases, the minimum eigenvalue of the covariance matrix $\Sigma$ decreases when $r$ increases. Therefore Tables \ref{table:linear_CI_toep} and \ref{table:linear_CI_equi} show that our methods are robust with the minimum eigenvalue of covariance matrix $\Sigma$. Similar comparisons for logistic regression cases are provided in Tables \ref{table:log_CI_toep} and \ref{table:log_CI_equi}.

\setcounter{table}{4}
\begin{table}[!t]
  \caption{Linear Regression: the coverage rate, average length of confidence intervals and their standard errors, for the nominal coverage
    probability  $95\%$. The covariance matrix is in the Toeplitz setting, where $\Sigma_{i,j}=r^{|i-j|}$ for $r=0.4,0.5,0.6$, $i,j=1,2,\dots, d$. Standard errors are reported in the brackets.}
  \begin{tabular}{llrrrrr}
    \hline
    & $d$ & Plug-in & \multicolumn{3}{c}{BM}& Oracle \\
    &&&$M=n^{0.2}$ &  $M=n^{0.25}$ & $M=n^{0.3}$ &\\
    \hline
    $r=0.4$\\
    Cov Rate (\%)&5&95.16(0.69)&90.88(1.78)&92.56(0.93)&93.40(1.12)&87.44\\
    Avg Len ($\times 10^{-2}$)&&1.70(0.07)&1.60(0.06)&1.65(0.07)&1.69(0.06)&1.42 \\
    Cov Rate (\%)&20&94.91(0.92)&91.39(1.04)&93.07(1.13)&94.26(0.96)&89.12\\
    Avg Len ($\times 10^{-2}$)&&1.66(0.04)&1.56(0.04)&1.61(0.04)&1.63(0.04)&1.45 \\
    Cov Rate (\%)&100&94.79(1.00)&90.55(1.28)&92.00(1.22)&92.47(1.08)&88.26\\
    Avg Len ($\times 10^{-2}$)&&1.64(0.02)&1.53(0.02)&1.55(0.02)&1.54(0.02)&1.45 \\
    Cov Rate (\%)&200&94.41(1.02)&90.22(1.23)&91.46(1.18)&91.83(1.19)&87.49\\
    Avg Len ($\times 10^{-2}$)&&1.61(0.01)&1.50(0.02)&1.50(0.01)&1.49(0.01)&1.46 \\
    \hline
    $r=0.5$\\
    Cov Rate (\%)&5&95.24(0.92)&91.16(0.50)&94.28(0.86)&93.04(0.90)&88.31\\
    Avg Len ($\times 10^{-2}$)&&1.83(0.10)&1.74(0.10)&1.82(0.11)&1.78(0.12)&1.53 \\
    Cov Rate (\%)&20&94.84(0.97)&90.97(1.08)&93.75(0.93)&92.77(0.81)&87.26\\
    Avg Len ($\times 10^{-2}$)&&1.81(0.05)&1.71(0.06)&1.78(0.06)&1.76(0.06)&1.58 \\
    Cov Rate (\%)&100&95.01(1.12)&90.36(1.33)&91.83(1.09)&91.52(1.17)&89.11\\
    Avg Len ($\times 10^{-2}$)&&1.77(0.02)&1.67(0.03)&1.67(0.03)&1.69(0.02)&1.60 \\
    Cov Rate (\%)&200&94.69(1.33)&90.01(1.41)&91.65(1.36)&91.24(1.41)&89.43\\
    Avg Len ($\times 10^{-2}$)&&1.74(0.02)&1.62(0.02)&1.62(0.02)&1.62(0.02)&1.60 \\
    \hline
    $r=0.6$\\
    Cov Rate (\%)&5&94.92(1.21)&90.72(0.58)&93.20(0.87)&94.28(0.73)&87.34\\
    Avg Len ($\times 10^{-2}$)&&0.00(0.14)&1.91(0.14)&1.97(0.15)&2.03(0.17)&1.70 \\
    Cov Rate (\%)&20&95.11(1.03)&91.15(1.25)&92.54(0.85)&93.46(0.86)&88.59\\
    Avg Len ($\times 10^{-2}$)&&0.00(0.07)&1.93(0.09)&1.97(0.09)&2.00(0.09)&1.78 \\
    Cov Rate (\%)&100&94.76(1.19)&90.18(1.15)&91.18(1.20)&91.31(1.21)&89.11\\
    Avg Len ($\times 10^{-2}$)&&0.00(0.03)&1.87(0.04)&1.88(0.03)&1.86(0.03)&1.80 \\
    Cov Rate (\%)&200&94.59(1.31)&90.02(1.29)&90.87(1.38)&91.09(1.46)&87.46\\
    Avg Len ($\times 10^{-2}$)&&0.00(0.02)&1.79(0.03)&1.78(0.02)&1.85(0.03)&1.80 \\
    \hline
  \end{tabular}
  \label{table:linear_CI_toep}
\end{table}

\begin{table}[!t]
  \caption{Linear Regression: the coverage rate, average length of confidence intervals and their standard errors, for the nominal coverage
    probability  $95\%$. The covariance matrix is in the Equi Corr setting, where $\Sigma_{i,j}=r\cdot 1_{\{i\neq j\}}+ 1_{\{i= j\}}$ for $r=0.1,0.2,0.3$, $i,j=1,2,\dots, d$. Standard errors are reported in the brackets.}
  \begin{tabular}{llrrrrr}
    \hline
    & $d$ & Plug-in & \multicolumn{3}{c}{BM}& Oracle \\
    &&&$M=n^{0.2}$ &  $M=n^{0.25}$ & $M=n^{0.3}$ &\\
    \hline
    $r=0.1$\\
    Cov Rate (\%)&5&94.88(1.07)&90.84(1.62)&92.48(1.45)&94.40(1.26)&88.19\\
    Avg Len ($\times 10^{-2}$)&&1.52(0.01)&1.40(0.01)&1.45(0.01)&1.49(0.01)&1.26 \\
    Cov Rate (\%)&20&94.98(0.94)&90.65(1.00)&92.49(1.26)&93.46(0.93)&87.41\\
    Avg Len ($\times 10^{-2}$)&&1.50(0.01)&1.40(0.01)&1.43(0.01)&1.45(0.01)&1.28 \\
    Cov Rate (\%)&100&95.62(1.11)&90.67(1.33)&92.14(1.24)&92.92(1.15)&89.11\\
    Avg Len ($\times 10^{-2}$)&&1.47(0.01)&1.38(0.01)&1.41(0.01)&1.41(0.01)&1.30 \\
    Cov Rate (\%)&200&94.61(1.10)&90.42(1.35)&92.05(1.12)&92.49(1.18)&88.40\\
    Avg Len ($\times 10^{-2}$)&&1.41(0.01)&1.37(0.01)&1.38(0.01)&1.38(0.01)&1.30 \\
    \hline
    $r=0.2$\\
    Cov Rate (\%)&5&94.80(0.88)&90.92(1.09)&93.60(0.92)&92.32(0.68)&86.79\\
    Avg Len ($\times 10^{-2}$)&&1.60(0.01)&1.46(0.01)&1.55(0.01)&1.52(0.01)&1.31 \\
    Cov Rate (\%)&20&95.10(0.99)&91.15(1.14)&93.66(0.99)&92.78(0.92)&88.04\\
    Avg Len ($\times 10^{-2}$)&&1.59(0.01)&1.47(0.01)&1.54(0.01)&1.51(0.01)&1.36 \\
    Cov Rate (\%)&100&94.93(1.06)&90.86(1.26)&93.19(1.15)&92.29(1.10)&87.15\\
    Avg Len ($\times 10^{-2}$)&&1.56(0.01)&1.47(0.01)&1.52(0.01)&1.50(0.01)&1.38 \\
    Cov Rate (\%)&200&94.49(1.09)&90.57(1.45)&92.45(1.27)&91.91(1.13)&87.22\\
    Avg Len ($\times 10^{-2}$)&&1.51(0.01)&1.45(0.01)&1.49(0.01)&1.49(0.01)&1.38 \\
    \hline
    $r=0.3$\\
    Cov Rate (\%)&5&95.00(0.96)&91.80(1.34)&92.76(0.75)&94.16(1.21)&87.12\\
    Avg Len ($\times 10^{-2}$)&&1.74(0.01)&1.57(0.02)&1.60(0.02)&1.65(0.01)&1.38 \\
    Cov Rate (\%)&20&95.27(1.00)&91.08(1.32)&92.60(1.28)&93.52(1.16)&86.91\\
    Avg Len ($\times 10^{-2}$)&&1.72(0.01)&1.58(0.01)&1.62(0.02)&1.63(0.01)&1.45 \\
    Cov Rate (\%)&100&94.58(1.14)&90.68(1.25)&92.23(1.14)&93.35(1.11)&87.05\\
    Avg Len ($\times 10^{-2}$)&&1.71(0.01)&1.57(0.01)&1.61(0.02)&1.63(0.01)&1.47 \\
    Cov Rate (\%)&200&94.69(1.25)&90.31(1.46)&91.73(1.27)&93.21(1.11)&88.32\\
    Avg Len ($\times 10^{-2}$)&&1.64(0.01)&1.55(0.01)&1.60(0.01)&1.61(0.01)&1.48 \\
    \hline
  \end{tabular}
  \label{table:linear_CI_equi}
\end{table}

\setcounter{table}{6}
\begin{table}[!t]
  \caption{Logistic Regression: the coverage rate, average length of confidence intervals and their standard errors, for the nominal coverage
    probability  $95\%$. The covariance matrix is in the Toeplitz setting, where $\Sigma_{i,j}=r^{|i-j|}$ for $r=0.4,0.5,0.6$, $i,j=1,2,\dots, d$. Standard errors are reported in the brackets.}
  \begin{tabular}{llrrrrr}
    \hline
    & $d$ & Plug-in & \multicolumn{3}{c}{BM}& Oracle \\
    &&&$M=n^{0.2}$ &  $M=n^{0.25}$ & $M=n^{0.3}$ &\\
    \hline
    $r=0.4$\\
    Cov Rate (\%)&5&95.08(1.41)&89.04(2.19)&90.92(2.01)&90.28(1.97)&94.71\\
    Avg Len ($\times 10^{-2}$)&&3.74(0.28)&3.44(0.25)&3.36(0.24)&3.26(0.21)&3.70\\
    Cov Rate (\%)&20&94.69(1.33)&89.64(1.99)&90.49(1.87)&90.41(1.47)&93.89\\
    Avg Len ($\times 10^{-2}$)&&4.99(0.29)&4.65(0.24)&4.49(0.22)&4.41(0.22)&4.90\\
    Cov Rate (\%)&100&94.81(1.21)&89.38(1.84)&90.31(1.77)&90.26(1.92)&94.20\\
    Avg Len ($\times 10^{-2}$)&&7.21(0.34)&6.51(0.29)&6.33(0.26)&6.16(0.24)&7.15\\
    Cov Rate (\%)&200&94.54(1.04)&89.27(1.78)&90.22(1.69)&90.31(1.88)&93.96\\
    Avg Len ($\times 10^{-2}$)&&8.68(0.37)&7.88(0.31)&7.82(0.30)&7.71(0.28)&8.47\\
    \hline
    $r=0.5$\\
    Cov Rate (\%) & 5 & 94.96(1.58) & 88.96(2.32) & 90.56(2.06) & 90.12(2.04) &92.41\\
    Avg Len ($\times 10^{-2}$) &  &4.06(0.34) & 3.75(0.28) & 3.73(0.27) & 3.61(0.25) &4.04\\
    Cov Rate (\%) & 20 & 95.17(1.23) & 89.01(1.93) & 90.39(1.88) & 89.79(1.81)&91.07\\
    Avg Len ($\times 10^{-2}$) &  & 5.74(0.29) & 5.57(0.25)  &5.22(0.23)  & 4.95(0.22)&5.59\\
    Cov Rate (\%) & 100 &94.91(0.89) & 89.91(1.74)& 90.83(1.81) & 90.54(1.97)&91.47\\
    Avg Len ($\times 10^{-2}$) &  &8.47(0.37) & 8.01(0.28)  & 7.71(0.26)  & 7.37(0.25)&8.28\\
    Cov Rate (\%) & 200 &94.59(1.04) & 89.72(1.81)& 90.74(1.93) & 90.32(2.02)&92.29\\
    Avg Len ($\times 10^{-2}$) &  &9.81(0.41) & 9.24(0.34) & 8.95(0.31)  & 8.78(0.29)&9.84  \\
    \hline
    $r=0.6$\\
    Cov Rate (\%)&5&94.72(1.88)&89.32(2.38)&90.16(2.14)&89.96(2.16)&93.77\\
    Avg Len ($\times 10^{-2}$)&&4.69(0.47)&4.31(0.41)&4.22(0.39)&4.13(0.36)&4.54\\
    Cov Rate (\%)&20&95.11(1.49)&89.07(1.91)&90.44(1.97)&89.91(1.94)&94.62\\
    Avg Len ($\times 10^{-2}$)&&6.59(0.41)&6.38(0.36)&6.26(0.33)&6.11(0.31)&6.58\\
    Cov Rate (\%)&100&94.29(1.11)&90.14(1.79)&90.31(1.89)&90.42(1.82)&93.15\\
    Avg Len ($\times 10^{-2}$)&&10.01(0.39)&9.61(0.36)&9.11(0.34)&8.89(0.32)&9.93\\
    Cov Rate (\%)&200&94.47(1.28)&89.63(1.84)&90.17(1.98)&90.13(2.01)&93.27\\
    Avg Len ($\times 10^{-2}$)&&12.27(0.41)&11.39(0.37)&10.97(0.34)&10.66(0.31)&11.83\\
    \hline
  \end{tabular}
  \label{table:log_CI_toep}
\end{table}

\begin{table}[!t]
  \caption{Logistic Regression: the coverage rate, average length of confidence intervals and their standard errors, for the nominal coverage
    probability  $95\%$. The covariance matrix is in the Equi Corr setting, where $\Sigma_{i,j}=r\cdot 1_{\{i\neq j\}}+ 1_{\{i= j\}}$ for $r=0.1,0.2,0.3$, $i,j=1,2,\dots, d$. Standard errors are reported in the brackets.}
  \begin{tabular}{llrrrrr}
    \hline
    & $d$ & Plug-in & \multicolumn{3}{c}{BM}& Oracle \\
    &&&$M=n^{0.2}$ &  $M=n^{0.25}$ & $M=n^{0.3}$ &\\
    \hline
    $r=0.1$\\
    Cov Rate(\%)&5&94.72(1.75)&88.80(1.36)&89.68(1.58)&89.56(1.45)&94.09\\
    Avg Len($\times 10^{-2}$)&&3.39(0.32)&3.04(0.26)&2.97(0.24)&2.90(0.23)&3.21 \\
    Cov Rate(\%)&20&94.85(1.82)&89.48(1.48)&90.89(1.48)&90.68(1.56)&94.12\\
    Avg Len($\times 10^{-2}$)&&4.94(0.32)&4.07(0.24)&4.00(0.23)&3.82(0.22)&4.48 \\
    Cov Rate(\%)&100&95.11(1.58)&89.41(1.52)&90.51(1.55)&90.15(1.70)&93.88\\
    Avg Len($\times 10^{-2}$)&&9.06(0.47)&8.45(0.42)&8.24(0.40)&8.01(0.37)&8.83 \\
    Cov Rate(\%)&200&94.55(1.46)&88.95(1.58)&89.87(1.30)&89.60(1.79)&92.46\\
    Avg Len($\times 10^{-2}$)&&12.79(0.61)&11.99(0.55)&11.42(0.52)&11.28(0.52)&12.33 \\
    \hline
    $r=0.2$\\
    Cov Rate (\%) & 5 &94.80(1.66) & 88.08(1.46) & 88.64(1.73) & 89.48(1.51)&93.79 \\
    Avg Len ($\times 10^{-2}$) &  & 3.43(0.35)&3.28(0.28) & 3.24(0.25)& 3.20(0.24) &3.38\\  
    Cov Rate (\%) & 20 & 94.54(1.73) & 89.27(1.33) & 90.64(1.60) & 90.31(2.10)&92.50\\
    Avg Len ($\times 10^{-2}$) &  &5.37(0.31) &  4.84(0.26)  & 4.77(0.24) & 4.51(0.21)&5.19 \\
    Cov Rate (\%) & 100 & 94.79(1.08) & 89.01(1.70) & 90.27(1.76) & 89.42(2.01)&94.92\\
    Avg Len ($\times 10^{-2}$) &  &10.24(0.51) & 10.17(0.47)  & 9.75(0.42)  & 9.24(0.40)&10.89 \\
    Cov Rate (\%) & 200 & 94.24(1.09) & 89.13(1.44) & 90.01(1.92) & 89.23(1.79)&92.40\\
    Avg Len ($\times 10^{-2}$) &  &15.70(0.62) & 14.82(0.57)  & 14.01(0.55)  &13.88(0.52) &15.31 \\
    \hline
    $r=0.3$\\
    Cov Rate(\%)&5&95.00(1.84)&89.12(1.54)&89.68(1.77)&89.28(1.64)&93.42\\
    Avg Len($\times 10^{-2}$)&&3.65(0.33)&3.44(0.26)&3.39(0.24)&3.31(0.23)&3.61 \\
    Cov Rate(\%)&20&94.86(1.79)&89.25(1.45)&90.42(1.59)&89.84(1.63)&93.69\\
    Avg Len($\times 10^{-2}$)&&5.98(0.33)&5.69(0.25)&5.54(0.24)&5.48(0.21)&5.91 \\
    Cov Rate(\%)&100&94.57(1.44)&89.04(1.71)&90.07(1.65)&89.22(1.68)&94.01\\
    Avg Len($\times 10^{-2}$)&&13.07(0.64)&12.15(0.48)&11.80(0.46)&11.64(0.41)&12.77 \\
    Cov Rate(\%)&200&94.21(1.55)&89.00(1.62)&90.11(1.74)&89.07(1.78)&92.19\\
    Avg Len($\times 10^{-2}$)&&19.41(0.69)&17.15(0.60)&16.70(0.59)&16.29(0.55)&18.01 \\
    \hline
  \end{tabular}
  \label{table:log_CI_equi}
\end{table}
\newpage

\bibliographystyle{plain}
\bibliography{ref}

\end{document}